\newif\ifshaphered
\newcommand{\tabincell}[2]{\begin{tabular}{@{}#1@{}}#2\end{tabular}}
\newcommand\revision[1]{\textcolor{blue}{#1}}
\newcommand\revision[1]{#1}
\begin{document}
\title{OpBoost: A Vertical Federated Tree Boosting Framework Based on Order-Preserving Desensitization}

	
\author{Xiaochen Li}
\authornote{Both authors contributed equally to this work.}
\affiliation{%
  \institution{Zhejiang University}
}

\author{Yuke Hu}
\authornotemark[1]
\affiliation{%
\institution{Zhejiang University}
}

\author{Weiran Liu}
\affiliation{%
  \institution{Alibaba Group}
}

\author{Hanwen Feng}
\affiliation{%
\institution{Alibaba Group}
}

\author{Li Peng}
\affiliation{%
\institution{Alibaba Group}
}

\author{Yuan Hong}
\affiliation{%
\institution{University of Connecticut}
}

\author{Kui Ren}
\affiliation{%
\institution{Zhejiang University}
}

\author{Zhan Qin}
\authornote{Corresponding author.}
\affiliation{%
\institution{Zhejiang University}
}

\begin{abstract}
Vertical Federated Learning (FL) is a new paradigm that enables users with non-overlapping attributes of the same data samples to jointly train a model without directly sharing the raw data.
Nevertheless, recent works show that it's still not sufficient to prevent privacy leakage from the training process or the trained model.
This paper focuses on studying the privacy-preserving tree boosting algorithms under the vertical FL.
The existing solutions based on cryptography involve heavy computation and communication overhead and are vulnerable to inference attacks.
Although the solution based on Local Differential Privacy (LDP) addresses the above problems, it leads to the low accuracy of the trained model.

This paper explores to improve the accuracy of the widely deployed tree boosting algorithms satisfying differential privacy under vertical FL.
Specifically, we introduce a framework called OpBoost.
Three order-preserving desensitization algorithms satisfying a variant of LDP called distance-based LDP (dLDP) are designed to desensitize the training data.
In particular, we optimize the dLDP definition and study efficient sampling distributions to further improve the accuracy and efficiency of the proposed algorithms.
The proposed algorithms provide a trade-off between the privacy of pairs with large distance and the utility of desensitized values.
Comprehensive evaluations show that OpBoost has a better performance on prediction accuracy of trained models compared with existing LDP approaches on reasonable settings.
\revision{Our code is open source.\footnote{https://github.com/alibaba-edu/mpc4j/tree/main/mpc4j-sml-opboost}}
\end{abstract}

\maketitle

\renewcommand\thefootnote{}\footnote{
Email:\{xiaochenli, yukehu, kuiren, qinzhan@zju.edu.cn, weiran.lwr, fenghanwen.fhw, jerry.pl@alibaba-inc.com, yuan.hong@uconn.edu\}\\
}

\section{Introduction}
Federated Learning (FL) \cite{konevcny2016federated} is an emerging paradigm that enables multiple parties to jointly train a machine learning model without revealing their private data to each other.
According to the way of data partitioning, FL can be classified into two categories: Horizontal FL and Vertical FL \cite{yang2019federated}. 
\textit{Horizontal} FL considers the scenarios where different data samples with the same features are distributed among different parties.
\textit{Vertical} FL works when different parties hold the same data samples with disjoint features. 
Vertically distributed datasets are very common in real-world scenarios.
One typical example of vertical FL is shown in Figure \ref{fig:examp}.
A credit institution collaborates with an E-commerce company and a bank to train a model to predict the labels, i.e., users' credit ratings, based on features, i.e., shopping records and revenue.

Tree boosting algorithms (e.g., GBDT \cite{friedman2001greedy}) are popular supervised ML algorithms that enjoy high efficiency and interpretability.
They are widely used in prediction tasks based on heterogeneous features, i.e., revenue, age, in the scenarios, i.e., credit, price forecast \cite{chen2016xgboost, burges2010ranknet, dorogush2018catboost, ke2017lightgbm}.
Recent work \cite{gorishniy2021revisiting} shows that when the training data mainly includes heterogeneous features, GBDT outperforms state-of-the-art deep learning solutions.
However, most existing tree boosting algorithms are proposed in the centralized setting, which train the model with direct access to the whole training data.
With the increasing public privacy concerns and the promulgation of privacy regulations (e.g., GDPR), this centralized setting limits the widespread deployment of tree boosting algorithms.
Therefore, it becomes an essential problem to design practical privacy-preserving vertical federated tree boosting algorithms.

\begin{figure}[t]
	\centering
	\includegraphics[width=0.49\textwidth]{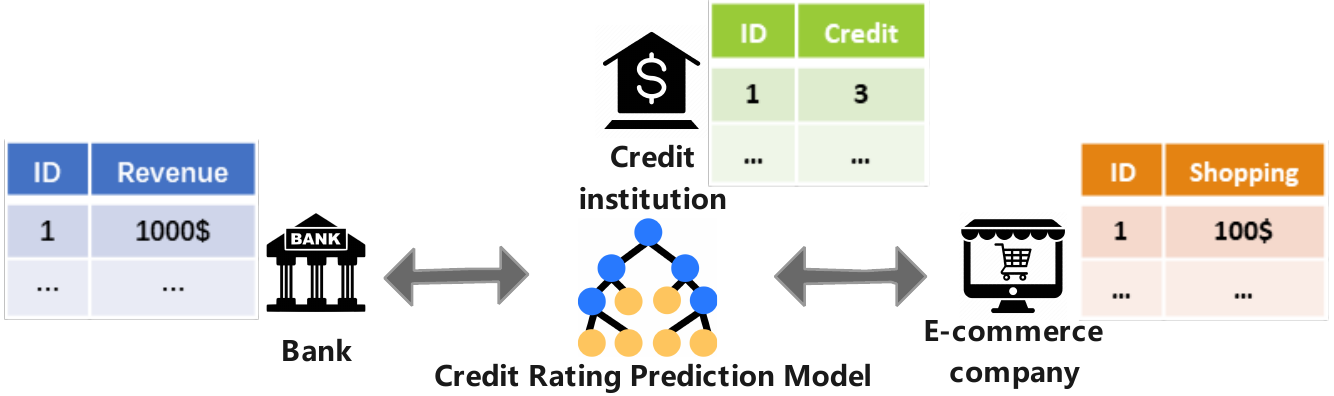}
	\setlength{\belowcaptionskip}{-2ex}
	\caption{An example of Vertical FL.}
	\label{fig:examp}
\end{figure}

To address this problem, there are some solutions based on two different technologies: \emph{Cryptography}, and \emph{Local Differential Privacy (LDP)}.
Framework SecureBoost \cite{cheng2021secureboost} and its subsequent work $\text{VF}^2$Boost \cite{fu2021vf2boost} are based on additive \emph{homomorphic encryption}.
Although well-designed engineering optimization is performed, a large number of homomorphic operations still inevitably cause participants to suffer a prohibitively computational overhead.
Besides, each party shares the true order of their feature values for training the model.
To our knowledge, some attacks have been proposed to use auxiliary information to infer the distribution of original values based on the order of values \cite{boldyreva2009order, durak2016else, bindschaedler2018tao}.
Abspoel et al. \cite{abspoel2021secure} present a simple vertical federated boosting framework based on Multi-Party Computation (MPC) protocols.
Nevertheless, MPC protocols cause a heavy communication overhead.
Although the order of feature values is not visible to any party in MPC protocol, the trained model is not a privacy-preserving model, which is vulnerable to inference attack \cite{song2019membership, naveed2015inference, kellaris2016generic}.
Tian et al. \cite{tian2020federboost} propose FederBoost, which provides \emph{LDP} privacy protection for each party's feature values and can resist all the aforementioned attacks.
FederBoost is shown to be much more efficient than the MPC-based and Encryption-based solutions, which is favorable in real-world applications.
However, the introduction of randomness leads to a serious loss of the relative order information, and results in low accuracy of the trained model.

\textbf{Observation.}
The process of building the decision tree in tree boosting algorithms is to constantly find the split points of the features, and this process only depends on the order of feature values rather than the exact values.
In existing solutions, each party is essentially sending the order of feature values to the party holding the label for training the model.
It's necessary to provide privacy protection for the order of feature values since training the model with the true order of the feature values can leak the original private values.
However, the mechanisms satisfying LDP are usually designed to provide privacy protection for values without order (e.g., enumeration values).
They perturb the private values to achieve the same degree of indistinguishability for any pair of values in the data domain.
Meanwhile, this causes desensitized values to lose too much order information, which seriously reduces the accuracy of the trained model.
In fact, people require different degrees of indistinguishability for pairs of values with different distances.
For example, an employee doesn't mind being revealed that his income is less than his boss, but he minds being known by others to be less than his colleagues.
Therefore, it is more suitable to provide different degrees of indistinguishability for pairs with different distances.
Meanwhile, the relative order of value pairs, especially those that are far apart, can be preserved with a high probability.

Another observation is that the existing distance-based LDP (dLDP) definition has limitations.
There is only one privacy parameter $\epsilon$ in the existing dLDP definition, which allocates privacy budgets based on $l_1$ distance between two values.
Given the total privacy budget $\epsilon$, one might want its private value to be as indistinguishable as possible from its nearby values, and not mind weakening the indistinguishability from the values farther away.
However, the existing dLDP definition cannot achieve this privacy requirement.
Specifically, increasing $\epsilon$ can increase the probability of the desensitized value falling near the true value, but at the same time its distribution near the true value is more concentrated, and vice versa.
If we can increase the probability of the desensitized value falling in a specific area around the true value, but flatten the probability distribution in this area, an optimized output probability distribution of desensitized value can be obtained.

\textbf{Contribution.}
Our contributions are summarized below.

\textbf{\it Proposal of OpBoost:}
We propose a novel framework called OpBoost for privacy-preserving vertical federated tree boosting.
Within the framework, we design three order-preserving desensitization algorithms for desensitizing the training data.
Different from the existing LDP-based solution, the desensitized training data satisfy a variant of LDP called dLDP.
It can preserve more relative order information of desensitized values than LDP while providing privacy protection.
When strong indistinguishability is required for close values, i.e., $\epsilon=0.08$ for value pairs with distance $t=1$, OpBoost can still achieve accuracy close to that without protection for both classification and regression tasks.
For example, for a classification task, OpBoost achieves $60\%$ when no protection is $87\%$, while the LDP-based solution is close to $10\%$.
Meanwhile, OpBoost also retains the advantages of LDP-based solutions over Cryptography-based solutions.
The total communication overhead of each party is about $O(rn\log n)$ bits, whereas $O(rnk\log^2 n)$ is required in the MPC-based solution ($n$, $k$, $r$ are the number of samples, values' bits, and features, respectively).
Moreover, we replace the exponential mechanism with the (bounded) Laplace mechanism to reduce the computational complexity of desensitizing a sensitive value to $O(1)$.

\textbf{\it Optimizing existing dLDP definition:}
We also optimize the existing dLDP definition in order to break through its limitations.
We divide the data domain into several partitions with the length of $\theta$.
Then we introduce two privacy parameters $\epsilon_{prt}$ and $\epsilon_{ner}$ to adjust the probability distribution of desensitized value falling in different partitions and the probability distribution within one partition, respectively.
We prove that the existing definition is just a special case where $\epsilon_{prt}$ and $\epsilon_{ner}$ satisfy a fixed proportional relationship.
We can always get higher accuracy than the existing dLDP under the same privacy guarantee by adjusting the ratio of $\epsilon_{prt}$ and $\epsilon_{ner}$.

\textbf{\it Introducing new order-preserving metrics:}
In addition to quantifying the privacy of the order-preserving desensitization algorithms with dLDP, we also introduce new theoretical and experimental metrics to quantify the order information preserved by desensitized values. 
We define that the proposed order-preserving desensitization algorithms are probabilistic order-preserving in theory.
The probability of any pair of desensitized values preserving the original relative order is at least $\gamma$.
Besides, we introduce the weighted-Kendall coefficient weighted by distance to evaluate the order of desensitized feature values experimentally.

\textbf{\it Comprehensive Evaluation:}
We conduct comprehensive theoretical and experimental evaluations to analyze the performance of OpBoost, including all the designed order-preserving desensitization algorithms.
We evaluate the order preservation of the desensitized values using all introduced metrics.
We also conduct the experiments on public datasets used for \emph{Binary Classification}, \emph{Multiple Classification}, and \emph{Regression} tasks.
Both GBDT and XGBoost are implemented in OpBoost.
The experimental results show that OpBoost achieves the prediction accuracy close to and even higher than plain models, i.e.,1.0003$\times$ improvement over plain model of XGBoost, which is superior to the existing LDP approaches.

\section{Preliminaries}
\subsection{Differential Privacy}
Differential privacy \cite{dwork2008differential} is the \textit{de facto} privacy definition of data disclosure, preventing attempts from learning private information about any individual in a data release. 
In this work, we are interested in \textit{local} differential privacy \cite{kasiviswanathan2011can}, which allows each user to perturb his sensitive data using a randomization mechanism $\mathcal{M}$ such that the perturbed results from different data values will be ``close".

\begin{definition}\label{def:local-differential-privacy} (\emph{Local Differential Privacy, LDP}). 
  An algorithm $\mathcal{M}$ satisfies $\epsilon$-LDP, where $\epsilon \geq 0$, if and only if for any input $v, v' \in \mathbb{D}$, and any output $y \in Range(\mathcal{M})$, we have 
  \[\setlength\abovedisplayskip{0.5ex}
     \setlength\belowdisplayskip{0.5ex}
  \Pr \left[\mathcal{M}(v) = y \right] \leq e^{\epsilon} \Pr \left[\mathcal{M}(v') = y \right].\]
\end{definition}
The parameter $\epsilon$ above is called the \emph{privacy budget}; the smaller $\epsilon$ means stronger privacy protection is provided. 
On the other hand, since all pairs of sensitive data shall satisfy $\epsilon$-privacy guarantee for the same $\epsilon$, it may hide too much information about a dataset, such that utility might be insufficient for certain applications.
The distance-based LDP \cite{alvim2018local, chatzikokolakis2013broadening, he2014blowfish} is proposed to improve the utility, which measures the level of privacy guarantee between any pair of sensitive data based on their distance.
We use $l_1$ distance in this paper, and the definition of distance-based local differential privacy is defined as follows.

\begin{definition}\label{def:dlocal-differential-privacy} (\emph{Distance-based Local Differential Privacy, dLDP}). 
  An algorithm $\mathcal{M}$ satisfies $\epsilon$-dLDP, if and only if for any input $x, x' \in \mathbb{D}$ such that $|x-x'|\le t$, and any output $y \in Range(\mathcal{M})$, we have 
  \[\setlength\abovedisplayskip{0.5ex}
     \setlength\belowdisplayskip{0.5ex}
  \Pr \left[\mathcal{M}(x) = y \right] \leq e^{t\epsilon}\cdot \Pr \left[\mathcal{M}(x') = y \right],\]
\end{definition}
\noindent where $t\epsilon$ controls the level of indistinguishability between outputs of $\mathcal{M}(x)$ and $\mathcal{M}(x')$.
The indistinguishability decreases as the distance $t$ between $x$ and $x'$ increases.

\subsection{Order-Preserving Desensitization Algorithm}
In some application scenarios (e.g., recommender system, range query), the accuracy of the algorithm mainly depends on the order of the dataset. 
It would be desirable that the numerical order of sensitive data is somehow preserved after desensitizing.
A lot of order-preserving desensitization algorithms are proposed in cryptographic studies \cite{kerschbaum2015frequency, kerschbaum2014optimal, boldyreva2009order, agrawal2004order}, in which the order is rigorously preserved after desensitization.
The formal definition of the order-preserving desensitization algorithm is as follows.

\begin{definition}\label{def:order-preserving} (\emph{Order-Preserving Desensitization Algorithm}). 
  Denote $X=x_1,x_2,...,x_n$ $(\forall i. x_i\in \mathbb{N})$ as the sensitive sequence, and $Y=y_1,y_2,...,y_n$ $(\forall i. y_i\in \mathbb{N})$ be the noisy sequence output by a desensitization algorithm $\mathcal{R}$, where $y_i = \mathcal{R}(x_i)$.
  The algorithm $\mathcal{R}$ is order-preserving if and only if the following conditions are satisfied:
  \[{\qquad}\ \ \setlength\abovedisplayskip{0.5ex}
     \setlength\belowdisplayskip{0.5ex}
  \forall i,j.\ x_i> x_j\Rightarrow y_i> y_j, \ \ \;and\]
  \[\setlength\abovedisplayskip{0.5ex}
     \setlength\belowdisplayskip{0.5ex}
  \forall i,j.\ y_i> y_j\Rightarrow x_i\ge x_j.\]
\end{definition}

However, rigorous order itself could be leveraged by attackers to perform attacks (e.g., big-jump attack\cite{boldyreva2009order}, inter-column correlation-based attack \cite{durak2016else}, multinomial attack \cite{bindschaedler2018tao}, inference attack \cite{naveed2015inference, kellaris2016generic}).
These attacks use auxiliary information to estimate the distribution of the original values and then correlate them with the desensitized values based on their order.
Besides, there is a lack of widely accepted cryptography tools to quantify how much privacy is compromised through attacks.
The notion of differential privacy can help with these predicaments.
It provides a rigorous upper bound for information disclosure and turns deterministic output into probabilistic results.
Hence, we extend a relaxed version of the order-preserving notion called \emph{Probabilistic Order-Preserving} in Definition \ref{def:probabilistic-order-preserving}.

\begin{definition}\label{def:probabilistic-order-preserving} (\emph{Probabilistic Order-Preserving Desensitization Algorithm}). 
  Denote $X=x_1,x_2,...,x_n$ $(\forall i. x_i\in \mathbb{N})$ as the sensitive sequence, and $Y=y_1,y_2,...,y_n$ $(\forall i. y_i\in \mathbb{N})$ be the noisy sequence output by a desensitization algorithm $\mathcal{R}$, where $y_i = \mathcal{R}(x_i)$.
  The algorithm $\mathcal{R}$ is probabilistic order-preserving if and only if the following conditions are satisfied:
  \[\setlength\abovedisplayskip{0.5ex}
     \setlength\belowdisplayskip{0.5ex}
  \forall i,j.\ x_i> x_j\Rightarrow Pr[y_i> y_j]\ge \gamma(t),\ where\ \gamma(t)\in[0, 1], |x_i-x_j|\le t.\]
\end{definition}
\ifshaphered
\emph{(Reviewer2:D3)}
\else
\fi
\revision{
Here, $\gamma$ is a function related to the distance between $x_i$ and $x_j$.
The definition satisfies rigorous order-preserving desensitization in Definition \ref{def:order-preserving} when $\gamma(t)=1$ for any $t$.
The algorithms satisfying probabilistic order-preserving preserve the relative order of partial pairs of values rather than all pairs with $\gamma(t)<1$.}
Specifically, the probabilistic order-preserving desensitization algorithms can be achieved by adding carefully selected random noise to sensitive values. 
Meanwhile, randomness can provide provable privacy guarantee to resist all aforementioned attacks based on auxiliary information.
All the proposed desensitization algorithms are probabilistic order-preserving.
Moreover, we prove that these algorithms all satisfy dLDP.

\subsection{Gradient Tree Boosting}
The term "gradient tree boosting" originates from the paper by Friedman et al. \cite{friedman2000additive}.
Each iteration of training involves incrementally adjusting the gradient to fit the residual, with the goal of minimizing the loss function.
There are some gradient tree boosting algorithms have been widely used such as GBDT \cite{friedman2001greedy}, XGBoost \cite{chen2016xgboost}, where XGBoost is an efficient implementation of GBDT.

We analyze the process of XGBoost building decision trees to help understand that only the order of features' values is necessary during the process of gradient tree boosting.
The algorithm continually finds the split point with the greatest gain after splitting.
We denote that $I_L$ and $I_R$ are the sample sets of left and right nodes after splitting, $g_i$ and $h_i$ are gradients, $\lambda$ and $\omega$ are regularization parameters, the gain of the split is given by
\[\setlength\abovedisplayskip{0.5ex}
     \setlength\belowdisplayskip{0.5ex}
G_{split}=\frac{1}{2}[\frac{(\sum_{i\in I_L}g_i)^2}{\sum_{i\in I_L}h_i+\lambda}+\frac{(\sum_{i\in I_R}g_i)^2}{\sum_{i\in I_R}h_i+\lambda}-\frac{(\sum_{i\in I}g_i)^2}{\sum_{i\in I}h_i+\lambda}]-\omega.\]

Note that all the variables we need for calculating $G_{split}$ can be derived only from the order of features' values.
Thus we can build the boosting tree without knowing the exact values of each feature.
We can decide which node should a sample fall in based on this tree, but can not know the value that this node represents, which made it a "partial tree".
The accurate predictions can be achieved with the assistance of the parties holding corresponding features.

\begin{figure*}[t]
  \centering
  \includegraphics[width=0.9\textwidth]{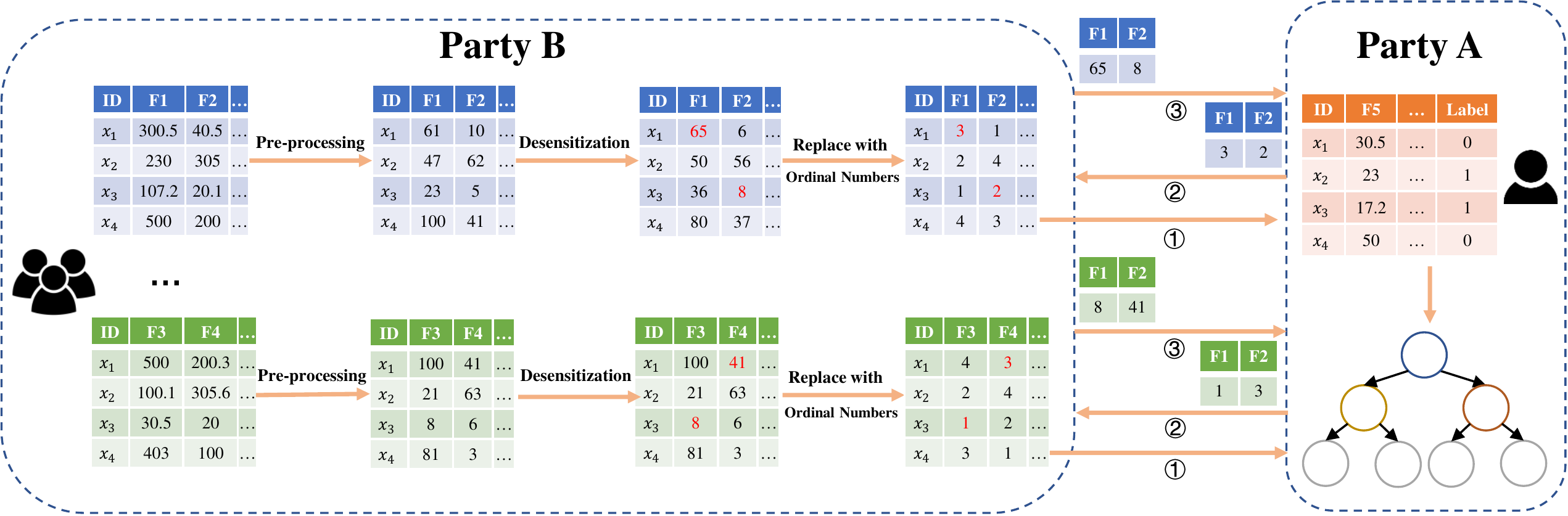}
  \caption{Training process of OpBoost.}
  \label{fig:train-process}
\end{figure*}

\section{System Overview}
\subsection{Architecture}
The training dataset is vertically partitioned and is distributed among different users' devices.
Each user holds different features of samples but overlapping sample IDs and only one user holds labels.
Since the label is essential for supervised learning, the user holding labels is generally the central node responsible for aggregating information and updating the model.
Therefore, our framework focuses on guiding the information exchange between other users and the user holding labels, rather than sharing labels among all users.
For the sake of simplicity, we divide users into two parties.

\noindent\textbf{Party A.}
Party A refers to the user who holds the label of the training samples.
It may also hold several features.
In the training process, Party A acts as a central server to exchange information with Party B and train the model.
The trained model is only stored in Party A, and Party A is responsible for using it to predict the new samples.

\noindent\textbf{Party B.}
We define Party B as the set of users who only hold several features of the samples.
Party B acts as a client that participates in training by exchanging necessary information with Party A.

\vspace{-1em}
\subsection{Execution Workflow}
In the following, we describe the process of training decision trees with OpBoost in detail, and also explain how to predict with these trained decision trees.

\noindent\textbf{Training Process.}
The overall process can be summarized into three steps, which are shown in Figure \ref{fig:train-process}.

First, Party B desensitizes the local features before communicating with Party A, which is specified as follows.

\begin{itemize}[leftmargin=*]
  \item \emph{Pre-processing for Feature Values.}
  \ifshaphered
  (\emph{Reviewer1:D2.a})
  \else
  \fi
  \revision{
  We focus on numerical features and categorical features that have natural ordering between categories.
  The categorical features with no distance between values are not included, and the tree boosting algorithms handle them differently.
  These features are usually encoded by one-hot encoding, and the encoded values can be desensitized by existing LDP mechanisms \cite{wang2017locally, erlingsson2014rappor}.
  As the ordinal categorical values can be mapped to discrete numerical values, w.l.o.g., we assume that all features are numerical values, i.e., continuous or discrete numerical values.
  Besides, since the specific values do not affect the structure of the decision tree, it suffices to remap features coming with diverse distance metrics to a unified discrete data domain for the subsequent distance-based privacy-preserving algorithm to work with.
  }
  \item \emph{Desensitize Values with Order-preserving Desensitization Algorithm.}
  We design several order-preserving desensitization algorithms that satisfy dLDP, and give guidance to help Party B choose algorithm to desensitize features' values.
  \item \emph{Replace the values with serial numbers.}
  Party B replaces all the desensitized values of features with their corresponding ordinal numbers and then sends them to Party A.  
\end{itemize}

Second, Party A finds the best split points over the features after collecting all features' information from Party B.
Specifically, Party A does not know the values of split points for features stored in Party B.
It records the ordinal numbers as the split points.

Finally, Party A sends all order numbers of split points to Party B to get their desensitized values.
Then Party B sends the specific desensitized values of corresponding split points back to Party A, and Party A updates the tree models.

\noindent\textbf{Prediction.}
After the above training steps, Party A can obtain a complete decision tree model for predicting new samples.
Party A can independently predict the new samples stored locally (non-private), or continue to cooperate with Party B to predict new samples (private).
All the new samples need to do the same pre-processing as the training samples before desensitization or being input into the model.

Note that Party B is not required to be online all the time in both training and prediction procedures.
It can go offline after sending all features' information and values of split points to Party A.
In addition, Party A can utilize the trained decision tree to independently perform the prediction tasks with non-private samples or desensitized private samples.

\begin{table}[t]
  \small
  \centering
  \begin{tabular}{|c|c|}
      \hline
      \textbf{Variable}&\textbf{Description}\\
      \hline
      $\mathbb{D}$&Finite and numerical input data domain\\
      \hline
      $\mathbb{D}_{\bot}$&Finite and discrete data domain after mapping\\
      \hline
      $L/R$&Lower/Upper bound of $\mathbb{D}_{\bot}$\\
      \hline
      $t$&Distance between values in $\mathbb{D}_{\bot}$\\
      \hline
      $\theta$&Length of a partition\\
      \hline
      $\mathcal{P}_m$& $m^{\text{th}}$ partition of $\mathbb{D}_{\bot}$\\
      \hline
      $\epsilon_{prt}/\epsilon_{ner}$&\tabincell{c}{Parameter for Adjusting the privacy budget\\ between different partitions/within one partition}\\
      \hline
      $\alpha$&Ratio of $\epsilon_{prt}$ and $\theta \epsilon_{ner}$\\
      \hline
      $\gamma$&Lower bound of order-preserving probability\\
      \hline
      
  \end{tabular}
  \setlength{\abovecaptionskip}{-2ex} 
  \setlength{\belowcaptionskip}{0ex}
  \caption{Important Notations}
  \label{tab:notations}
  \vspace{-0.5em}
\end{table}

\section{Proposed Algorithms}
\subsection{Pre-Processing for Feature Values}
\label{sec:preprocess}
Since the training samples come from multiple parties, the tree boosting algorithms usually preprocess the values of all features before training, i.e., fulling the missing values, handling wrong values.
In addition, we present an additional preprocessing step to improve the privacy and utility of desensitization algorithms.
Firstly, there are some existing works that propose that implementing differential privacy mechanisms over floating-point numerical values is vulnerable to privacy attacks \cite{mironov2012significance, ilvento2020implementing}.
Secondly, note that the privacy guarantee provided by OpBoost satisfies distance-based LDP.
It's necessary to normalize the values of different features in a unified distance unit.
To address these issues, we map numerical values of all different features into a unified discrete value domain.
We show the details in the following.

We remap discrete numerical values by $\emph{Mapping Function}$ $\mathcal{B}$ to a unified discrete value domain $\mathbb{D}_{\bot}$.
Denote $\mathcal{X}_{c}\in\mathbb{D}$ as the set of numerical values of a feature.
Party B maps each value $\mathcal{X}_c^i$ as follows
\[\setlength\abovedisplayskip{0.5ex}
     \setlength\belowdisplayskip{0.5ex}
\mathcal{X}_{int}^i=\lceil L+\frac{\mathcal{X}_{int}^i-lower}{upper-lower}\cdot (R-L)\rceil.\]
where $lower$ and $upper$ are lower bound and upper bound of $\mathbb{D}$, $L$ and $R$ are lower bound and upper bound of $\mathbb{D}_{\bot}$, respectively.
The larger the domain $\mathbb{D}_{\bot}$ is, the more original relative orders the mapped values preserved.
We take an example to explain why we need to map values of different features into a unified value domain.
The  values of \emph{age} are usually in the range of $(0,\ 100]$ years, while values of \emph{salary} are usually in the range of $(0,\ 100,000]$ dollars/year.  
It is easy to see that the sensitivity of changing $10$ years old to $30$ years old is not the same as that from $2000$ dollars/year to $2020$ dollars/year, though the differences are the same. 
Therefore, it is necessary to map all values with different meanings to the same value domain to facilitate evaluating the privacy guarantee and the utility of the desensitized training dataset.

\begin{figure*}[t]
  \centering
  \includegraphics[width=0.8\textwidth]{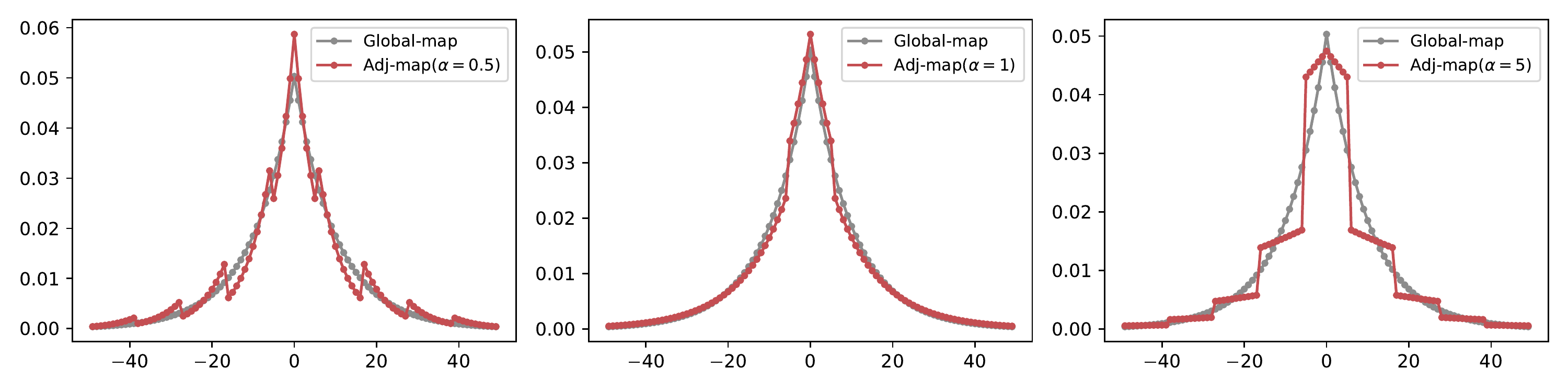}
  \caption{Adjust the ratio of $\epsilon_{prt}$ and $\epsilon_{ner}$, where $\mathbb{D}_{\bot}$ is limited to $[-50,50]$, $\epsilon=0.1$, $\theta=10$.
           The probability distribution of output values of Global-map achieved by exponential mechanism and Adj-map are almost fitted when $\alpha=1$ ($\epsilon_{prt}=\theta\epsilon_{ner}$).}
  \label{fig:distribution}
\end{figure*}

\subsection{Order-preserving Desensitization Algorithms}
\label{sec:algorithms}
Since preserving all order information of values is vulnerable to inference attacks, it is necessary to introduce randomness into the orders of desensitized values.
Although desensitizing values with LDP mechanisms can resist privacy attacks, a lot of order information is lost, leading to the trained model's poor performance.
Instead of achieving the same indistinguishability of any pair of values as LDP, the distance-based LDP (dLDP) provides different indistinguishability for pairs of values with different distances.
The pairs of values with smaller distance is harder to be distinguished.
While in the training of decision tree, the inverse of a pair of values with larger distance has a more significant impact on the trained model (split point of decision tree changes with higher probability).
Besides, the public does not mind the indistinguishability between values with large distance in practice  (e.g., an office worker does not mind the fact that he spends less time doing sports than an athlete being revealed).
Therefore, the dLDP definition is more suitable for order-preserving desensitization than LDP.

In this subsection, we first take a variant of exponential mechanism \cite{gursoy2019secure, chowdhury2020intertwining} satisfying dLDP as the preliminary algorithm, called Global-map.
The probability that each value in the data domain is output as a desensitized value is inversely proportional to its distance from the sensitive value.
We then do an in-depth study of Global-map from the definition of dLDP, and propose optimization algorithms for it.
Note that the input values of all algorithms are pre-processed by the algorithms proposed in Section \ref{sec:preprocess}.
We denote $\mathbb{D}_\bot$ as the data domain of both input values and output values, and denote $t$ as the distance of a pair of input values.
The important notations are shown in Table \ref{tab:notations}.

\begin{algorithm}[t]
  \caption{Global-map}
  \label{alg:base}
  {\small{
  \begin{algorithmic}[1]
        \Require {$x\in\mathbb{D}_\bot$, parameter $\epsilon>0$}
        \Ensure {$o\in\mathbb{D}_{\bot}$}
        \For {$i\in \mathbb{D}_{\bot}$}
        \State {$p_{x,i}=Pr[o=i]=\frac{e^{-|x-i|\cdot\epsilon/2}}{\sum_{j\in\mathbb{D}_{\bot}}e^{-|x-j| \cdot\epsilon/2}}$}
        \EndFor
        \State {Sample $o\sim p_x=\{p_{x,L},\cdots,p_{x,R}\}$}\\
        \Return {$o$}
  \end{algorithmic}}}
\end{algorithm}

\noindent \textbf{Global-Map:}
Global-map is built upon a variant of exponential mechanism, which assigns the output probability to each value according to a score function. 
The score function can be defined as the distance between the value and sensitive value so that a value will be desensitized to a nearer value with a higher probability.
The details of Global-map are shown in Algorithm \ref{alg:base}.
The privacy guarantee provided by Global-map is shown in Theorem \ref{the:base}.

\begin{theorem}
  \label{the:base}
  Global-map provides $\epsilon$-dLDP privacy guarantee for any pair of values $x,x'\in\mathbb{D}_\bot$, where $|x'-x|\le t$, and $t, \epsilon>0$.
\end{theorem}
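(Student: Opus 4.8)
The plan is to verify the defining inequality of $\epsilon$-dLDP (Definition~\ref{def:dlocal-differential-privacy}) directly from the closed-form output distribution given in Algorithm~\ref{alg:base}. Fix an arbitrary pair $x,x'\in\mathbb{D}_\bot$ with $|x-x'|\le t$ and an arbitrary output $i\in\mathbb{D}_\bot$. Writing $Z_x=\sum_{j\in\mathbb{D}_\bot}e^{-|x-j|\cdot\epsilon/2}$ for the normalizing constant, the probability ratio factorizes cleanly as
\[
\frac{p_{x,i}}{p_{x',i}}=e^{(|x'-i|-|x-i|)\cdot\epsilon/2}\cdot\frac{Z_{x'}}{Z_x}.
\]
I would then bound the two factors separately, each by $e^{t\epsilon/2}$, so that their product is at most $e^{t\epsilon}$, which is exactly what dLDP requires since $i$ was arbitrary.

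For the first factor, the reverse triangle inequality gives $|x'-i|-|x-i|\le|x'-x|\le t$, hence $e^{(|x'-i|-|x-i|)\cdot\epsilon/2}\le e^{t\epsilon/2}$. The second factor is where a little care is needed: I would argue termwise. For each $j\in\mathbb{D}_\bot$, the triangle inequality yields $|x-j|-|x'-j|\le|x-x'|\le t$, so $e^{-|x'-j|\cdot\epsilon/2}\le e^{t\epsilon/2}\cdot e^{-|x-j|\cdot\epsilon/2}$. Summing over all $j$ preserves this common factor, giving $Z_{x'}\le e^{t\epsilon/2}Z_x$, i.e.\ $Z_{x'}/Z_x\le e^{t\epsilon/2}$.

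Combining the two bounds closes the argument: $p_{x,i}/p_{x',i}\le e^{t\epsilon/2}\cdot e^{t\epsilon/2}=e^{t\epsilon}$. The only subtlety---and the step I would flag as the main obstacle---is justifying that a uniform termwise bound on the summands of $Z_{x'}$ transfers to a bound on the ratio of the full sums $Z_{x'}/Z_x$; once one observes that every summand of $Z_{x'}$ is at most $e^{t\epsilon/2}$ times the matching summand of $Z_x$, linearity of the sum makes the transfer immediate. Everything else reduces to two applications of the (reverse) triangle inequality for the $l_1$ distance, so no case analysis on the relative positions of $x$, $x'$, and $i$ is required.
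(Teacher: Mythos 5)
Your proposal is correct and follows essentially the same route as the paper's proof in Appendix~\ref{app:base}: both factor the probability ratio into the exponential-numerator term and the normalizing-constant ratio $Z_{x'}/Z_x$, and bound each factor by $e^{t\epsilon/2}$ via the triangle inequality (applied termwise for the sums). The only difference is that you make explicit the termwise justification that the paper leaves implicit, which is a welcome clarification rather than a deviation.
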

\begin{proof}
  The proof is presented in Appendix \ref{app:base}.
\end{proof}

Then we analyze the utility of Global-map.
We theoretically analyze the order-preserving degree of values after being desensitized by Global-map.
According to the probabilistic order-preserving definition (Definition \ref{def:probabilistic-order-preserving}), we calculate the order-preserving probability $\gamma$ of any pair of desensitized values.
The result is shown in Theorem \ref{the:baseuti}.

\begin{theorem}
  \label{the:baseuti}
  Global-map is a probabilistic order-preserving desensitization algorithm with $\gamma(t)\ge1-\frac{(1-q^2)\cdot t+1}{(1+q-q^{t+1}-q^{|\mathbb{D}_{\bot}|-t})(1+q)}\cdot q^t$, where $q=e^{-\epsilon/2}$.
\end{theorem}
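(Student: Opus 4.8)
The plan is to study the order-violation (inversion) probability and set $\gamma(t) = 1 - \Pr[\text{inversion}]$. Fix any pair with $x_i > x_j$ and write $a = x_j$, $b = x_i$ with $b - a = d \le t$. Since Global-map is applied separately to each coordinate (Algorithm \ref{alg:base}), the outputs $y_i = \mathcal{R}(b)$ and $y_j = \mathcal{R}(a)$ are independent, so the probability that the order is violated factors as a double sum $\Pr[y_i \le y_j] = \sum_{u \le w} p_{b,u}\, p_{a,w}$. Substituting the explicit pmf $p_{x,i} = q^{|x-i|}/Z_x$ with $q = e^{-\epsilon/2}$ and normalizer $Z_x = \sum_{j\in\mathbb{D}_\bot} q^{|x-j|}$, this becomes $\frac{1}{Z_a Z_b}\sum_{u \le w} q^{|b-u| + |a-w|}$. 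By Definition \ref{def:probabilistic-order-preserving} it then suffices to upper bound this quantity by $\frac{(1-q^2)t+1}{(1+q-q^{t+1}-q^{|\mathbb{D}_\bot|-t})(1+q)}\, q^t$.

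Next I would evaluate the double sum by partitioning the index pairs $(u,w)$ according to the positions of $u$ and $w$ relative to the interval $[a,b]$: a left region $u \le w \le a$, a diagonal band where $w$ spans the gap, and a right region where $w \ge b$. On each region the absolute values $|b-u|$ and $|a-w|$ open into signed linear expressions, so the inner sums collapse into geometric series in $q$. The left and right tails contribute the constant $\frac{1}{1-q^2}$ (from $\sum_{k\ge 0} q^{2k}$), while the band contributes a term linear in the separation $d$; together these produce a numerator of the form $(1-q^2)d + 1$ once the common factor $q^d$ is pulled out of every region, since reaching any inverted pair $u\le w$ from the centers $b>a$ incurs combined geometric cost at least $q^{b-a}=q^d$. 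Because this numerator grows only linearly in $d$ while the prefactor $q^d$ decays geometrically, the whole expression is decreasing in $d$, so a monotonicity argument lets me state one bound $\gamma(t)$ valid for all pairs with $|x_i - x_j| \le t$.

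The normalizers $Z_a, Z_b$ are where the boundary parameters enter. On the finite domain $\mathbb{D}_\bot = [L,R]$ one has $Z_x = 1 + \sum_{k=1}^{x-L} q^k + \sum_{k=1}^{R-x} q^k$, and writing these partial geometric sums in closed form turns the $1-q^{(\cdot)}$ factors into exactly the boundary terms $q^{t+1}$ and $q^{|\mathbb{D}_\bot|-t}$: the expression $1 + q - q^{t+1} - q^{|\mathbb{D}_\bot|-t}$ is precisely $(1-q)Z_x$ for a value sitting at distance $t$ from the near boundary, and the remaining $(1+q)$ factor arises from bounding the second normalizer. I would lower bound $Z_a Z_b$ by its value at the least favorable admissible placement of the pair inside $\mathbb{D}_\bot$ to obtain the position-independent denominator, yielding the stated form.

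The main obstacle I anticipate is the finite-domain bookkeeping rather than the geometric summation itself: because $Z_x$ varies with the position of $x$, the inversion probability is not translation invariant, and one must argue carefully which placement of the pair is worst and verify that the simultaneous truncation of the inversion double sum and of the normalizers combines into exactly the claimed closed form rather than a messier expression. Making the two boundary contributions $q^{t+1}$ and $q^{|\mathbb{D}_\bot|-t}$ land cleanly in the denominator—and confirming that the chosen placement is genuinely extremal, since pushing the pair toward the boundary shrinks both the normalizers and the inversion sum in opposing directions—is the delicate step; the remainder reduces to routine manipulation of finite geometric series.
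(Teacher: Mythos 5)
Your computational core is essentially the paper's own proof run on the complementary event. The paper's argument (Appendix \ref{app:uti-global-map}) expands $\Pr[o_2>o_1]$ for a pair at distance exactly $t$ as a double sum, splits it into five regions, evaluates each as geometric series in $q=e^{-\epsilon/2}$, and then minimizes the resulting closed form over the pair's position, asserting the minimum occurs at the boundary placement $x_1=L$ (or $x_1=R-t$); substituting that placement yields exactly the numerator $((1-q^2)t+1)q^t$ and denominator $(1+q-q^{t+1}-q^{|\mathbb{D}_{\bot}|-t})(1+q)$ you predict. Your structural identifications are correct: indeed $(1-q)Z_x=1+q-q^{x-L+1}-q^{R-x+1}$, so the first denominator factor is $(1-q)Z_x$ for a value at distance $t$ from the near boundary; the only cosmetic discrepancy is that the paper bounds the other normalizer factor $1-q^{|\mathbb{D}_{\bot}|}$ by $1$ and the $(1+q)$ instead arises from combining $\tfrac{1}{1+q}+(1-q)t$ over a common denominator. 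Your triangle-inequality justification for extracting the common factor $q^d$ from every region is sound, and the extremal-placement subtlety you flag is real --- the paper asserts boundary extremality without proof.

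The genuine flaw is your monotonicity step, which runs in the wrong direction. You argue that the inversion bound $f(d)=\frac{((1-q^2)d+1)\,q^d}{(1+q-q^{d+1}-q^{|\mathbb{D}_{\bot}|-d})(1+q)}$ is decreasing in $d$ and conclude that $\gamma(t)=1-f(t)$ is therefore valid for all pairs with $|x_i-x_j|\le t$. But if $f$ is decreasing, a pair at distance $d<t$ has inversion probability bounded only by $f(d)\ge f(t)$, so its preservation probability is guaranteed only to be at least $1-f(d)\le 1-f(t)$; a bound uniform over all $d\le t$ would have to be $1-f(1)$, not $1-f(t)$. The paper avoids this issue entirely by proving the bound for pairs at distance exactly $t$, i.e., reading $\gamma(t)$ in Definition \ref{def:probabilistic-order-preserving} as a function of the pair's actual separation --- the only reading under which a $t$-dependent, increasing bound makes sense. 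If you drop the monotonicity claim and carry out your computation for $|x_i-x_j|=t$, your outline coincides with the paper's proof.
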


\begin{proof}
  The proof is presented in Appendix \ref{app:uti-global-map}.
\end{proof}

The LDP mechanism Generalized Random Response (GRR) \cite{wang2017locally} used in \cite{tian2020federboost} can also be regarded as an probabilistic order-preserving desensitization algorithm.
To show the advantage of dLDP mechanism in ordinal preservation compared with LDP mechanism, we also calculate the order-preserving probability $\gamma$ of GRR.
\revision{
The result is shown in Theorem \ref{the:ldputi}. In addition, we also provide an intuitive comparison and detailed theoretical analysis in Appendix \ref{app:_theo_OPP}.
}
\begin{theorem}
  \label{the:ldputi}
  GRR is a probabilistic order-preserving desensitization algorithm with $\gamma(t)=p_1^2+p_1p_2\cdot(|\mathbb{D}_{\bot}|-3)+p_2^2\cdot(\frac{1}{2}|\mathbb{D}_{\bot}|(|\mathbb{D}_{\bot}|-3)+2) + p_2(p_1-p_2)t$, where $p_1=\frac{e^{\epsilon}}{|\mathbb{D}_{\bot}|+e^{\epsilon}-1}$, $p_2=\frac{1}{|\mathbb{D}_{\bot}|+e^{\epsilon}-1}$.
\end{theorem}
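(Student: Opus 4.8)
The plan is to compute the exact probability $\Pr[y_i > y_j]$ for a pair at distance $x_i - x_j = t$ directly from the output law of GRR and then verify it collapses to the stated closed form. Recall that under GRR on a domain of size $|\mathbb{D}_\bot|$, an input is reported truthfully with probability $p_1=\frac{e^{\epsilon}}{|\mathbb{D}_{\bot}|+e^{\epsilon}-1}$ and is mapped to each of the remaining $|\mathbb{D}_\bot|-1$ values with probability $p_2=\frac{1}{|\mathbb{D}_{\bot}|+e^{\epsilon}-1}$. Since each coordinate is desensitized independently, $y_i$ and $y_j$ are independent, so $\Pr[y_i > y_j] = \sum_{a>b}\Pr[y_i=a]\,\Pr[y_j=b]$, where the sum ranges over ordered pairs $(a,b)$ in $\mathbb{D}_\bot$ with $a>b$.

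First I would split this sum into four disjoint cases according to whether each report hits its true value: (i) $a=x_i$ and $b=x_j$; (ii) $a=x_i$ and $b\neq x_j$; (iii) $a\neq x_i$ and $b=x_j$; (iv) $a\neq x_i$ and $b\neq x_j$. Case (i) contributes $p_1^2$, since $x_i>x_j$ forces $a>b$; cases (ii) and (iii) contribute $p_1p_2$ times the number of admissible partners; case (iv) contributes $p_2^2$ times the number of admissible ordered pairs. The core of the argument is thus a counting of the domain pairs $(a,b)$ with $a>b$ in each case.

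Next I would carry out the counts by writing the domain as consecutive integers with $x_j=s$ and $x_i=s+t$. Case (ii) admits $s+t-2$ choices of $b$ (values below $x_i$ other than $x_j$), and case (iii) admits $|\mathbb{D}_\bot|-s-1$ choices of $a$ (values above $x_j$ other than $x_i$). For case (iv) I would start from the $\binom{|\mathbb{D}_\bot|}{2}$ ordered pairs with $a>b$ and subtract, by inclusion–exclusion, those with $a=x_i$ or $b=x_j$, obtaining $\binom{|\mathbb{D}_\bot|}{2}-(|\mathbb{D}_\bot|+t-2)$. The step I expect to require the most care is keeping this inclusion–exclusion consistent with the exclusions already imposed in cases (ii) and (iii), so that no ordered pair is double counted or dropped.

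Finally I would assemble the four contributions and simplify. The key observation is that the absolute position $s$ enters only through $s+t-2$ and $|\mathbb{D}_\bot|-s-1$, whose sum $|\mathbb{D}_\bot|+t-3$ is independent of $s$; hence the result depends only on $|\mathbb{D}_\bot|$ and $t$, as required. Collecting the terms linear in $t$ yields coefficient $p_1p_2-p_2^2=p_2(p_1-p_2)$, matching the claimed linear term, while the identity $\frac{|\mathbb{D}_\bot|(|\mathbb{D}_\bot|-1)}{2}-|\mathbb{D}_\bot|+2=\frac{1}{2}|\mathbb{D}_\bot|(|\mathbb{D}_\bot|-3)+2$ produces the constant $p_2^2$ term, giving exactly the stated expression for $\gamma(t)$.
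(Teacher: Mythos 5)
Your proposal is correct and takes essentially the same approach as the paper's proof: the same four-case decomposition according to whether each report is truthful, the same per-case counts (the paper's $x_2-L-1$ and $R-x_1-1$ are exactly your $s+t-2$ and $|\mathbb{D}_\bot|-s-1$), and the same inclusion--exclusion count $\tfrac{1}{2}|\mathbb{D}_\bot|(|\mathbb{D}_\bot|-1)-(|\mathbb{D}_\bot|+t-2)$ for the doubly-perturbed case, followed by the identical algebraic regrouping into the stated $\gamma(t)$.
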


\begin{proof}
  The proof is presented in Appendix \ref{app:uti-grr}.
\end{proof}
\revision{
Here, $p_2(p_1-p_2)t<1/((\mathbb{D}_{\bot}|+e^\epsilon-1)(\frac{1}{e^\epsilon-1}+\frac{1}{(|\mathbb{D}_{\bot}|}))$, is usually small enough to be negligible, especially when $|\mathbb{D}_{\bot}|$ is large.
}

\noindent \textbf{Adj-Map:}
We analyzed that it is more reasonable to assign the probability of mapping other values in domain $\mathbb{D_{\bot}}$ based on distance when desensitizing an ordered numerical value.
However, we found that the existing dLDP definition allocates privacy budgets based on $l_1$ distance between two values is not sufficient for satisfying all privacy requirements of feature values with rich semantics.
Taking feature \emph{age} as an example, people usually think that ages in the partition of $[1,30]$ are young, in the partition of $[30,60]$ are middle-aged, and those over $60$ years old are elderly.
Therefore, although the $l_1$ distances of value pairs $\{20, 25\}$ and $\{28, 33\}$ are all $5$, $28$ and $33$ span two partitions.
In other words, if a person's age is desensitized from $28$ to $33$, his identity changes from a young to a middle-aged person, while desensitization from $20$ to $25$ will not.
If people don't care about which partition their ages belong to, but are more concerned about the indistinguishability of values from the same partition.
The privacy definition should allocate more privacy budgets to value pairs located in different partitions, and allocate less privacy budget for values pairs located in the same partition.
Obviously, the existing dLDP definition cannot achieve the above privacy budget allocation since $l_1$ distance cannot distinguish whether value pairs are in different partitions.

To address the above problem, we replace the privacy budget $\epsilon$ defined by existing dLDP with two parameters $\epsilon_{prt}$ and $\epsilon_{ner}$ to adjust the privacy budget between different partitions and within the same partition, respectively.
We give a new distance-based local differential privacy definition \emph{partition-dLDP}, which defines the protection strength based on the $l_1$ distance of value pairs and the number of partitions between them.

\begin{algorithm}[t]
  \caption{Mapping Partition of Adj-Map}
  \label{alg:rand-map}
  {\small{
  \begin{algorithmic}[1]
        \Require {$x\in\mathbb{D}_\bot$, parameter $\theta\in\mathbb{N}^+$, $\theta\le |\mathbb{D}_{\bot}|$, $\epsilon_{prt}>0$}
        \Ensure  {$\mathcal{P}_{\hat{m}}\in\{\mathcal{P}_1,...,\mathcal{P}_k\}$}
        \State {Partition $\mathbb{D}_{\bot}$ into $k$ partitions: $\mathcal{P}_1, \mathcal{P}_2, ..., \mathcal{P}_k$, and $x$ is located in the partition $\mathcal{P}_{m}$.}
        \For {$i\in [k]$}
        \State {$p_{x,i}=Pr[\hat m=i]=\frac{e^{-|m-i|\cdot\epsilon_{prt}/2}}{\sum_{j\in[k]}e^{-|m-j|\cdot\epsilon_{prt}/2}}$}
        \EndFor
        \State {Sample $\hat m \sim p_x=\{p_{x,1},p_{x,2},...,p_{x,k}\}$}\\
        \Return {$\mathcal{P}_{\hat{m}}$}
  \end{algorithmic}}}
\end{algorithm}

\begin{definition}\label{def:partition-dlocal-differential-privacy} (\emph{Partition-dLDP}). 
  An algorithm $\mathcal{M}$ satisfies ($\epsilon_{prt}$, $\epsilon_{ner}$)-partition-dLDP, if and only if for any input $x, x' \in \mathbb{D}_{\bot}$ such that $|x-x'|\le t$, $\mathbb{D}_{\bot}$ is equally divided into several partitions of length $\theta$, and any output $y \in Range(\mathcal{M})$, we have 
  \[\setlength\abovedisplayskip{0.5ex}
     \setlength\belowdisplayskip{0.5ex}
  \Pr \left[\mathcal{M}(x) = y \right] \leq e^{\lceil\frac{t}{\theta}\rceil\epsilon_{prt}+\theta\epsilon_{ner}}\cdot \Pr \left[\mathcal{M}(x') = y \right].\]
\end{definition}

Compared with the existing dLDP definition, partition-dLDP can allocate privacy budget more finely based on both $l_1$ distance and partition distance.
We find that the existing dLDP definition is just a special case when $\epsilon_{prt}$ and $\epsilon_{ner}$ satisfy the following relationship
\begin{equation}
  \label{equ:fit_base}
     \begin{cases}
      \epsilon_{prt} = \theta\epsilon_{ner}\\
      \epsilon_{ner}=\frac{\epsilon}{1+\theta/|\mathbb{D}_{\bot}|}\\
     \end{cases}
\end{equation}
By adjusting $\epsilon_{prt}$ and $\epsilon_{ner}$, we can get more kinds of probability distributions of desensitized values than the existing dLDP mechanisms.
Therefore, it can meet a wider range of privacy policies and seek for achieving better utility.
\ifshaphered
\emph{(Reviewer2:D2)}
\else
\fi
\revision{
Note that the partition-dLDP in Definition \ref{def:partition-dlocal-differential-privacy} assumes that the feature can be evenly divided according to its semantic information.
Some features may cannot be evenly divided into intervals according to its semantics, i.e., the range of $[0,60]$ is unqualified, $[60,80]$ is qualified, and $[80,100]$ is excellent for \emph{score} feature.
It suffices to only switch the partition strategy from evenly to unevenly, while the remaining processing of the proposed mechanisms in the paper can then be applied without any change.
However, it requires further generalizing the privacy budget assignment strategy to account for the extended uneven partition, which we leave for future work.
}

In order to verify our theoretical analysis, we design a mechanism that satisfies partition-dLDP, called Adj-map. 
Instead of randomly mapping sensitive values based on distance in the whole domain $\mathbb{D}_{\bot}$, Adj-map first randomly selects a partition using Global-map satisfying $\epsilon_{prt}$-dLDP as the output domain.
The details are described in Algorithm \ref{alg:rand-map}.
Then the sensitive value is randomly mapped to a value in this partition using Global-map satisfying $\epsilon_{ner}$-dLDP.
We then prove that Adj-map satisfies partition-dLDP in Theorem \ref{the:adj-map}.

\begin{theorem}
  \label{the:adj-map}
  Adj-map satisfies partition-dLDP.
\end{theorem}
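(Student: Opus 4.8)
The plan is to exploit the two-stage structure of Adj-map. For any input $x$ and any output $y$, the value $y$ lies in a unique partition $\mathcal{P}_{\hat m}$, and because the partition-selection step (Algorithm \ref{alg:rand-map}) and the subsequent within-partition mapping are sampled independently conditioned on $x$, the output probability factorizes as
\[
  \Pr[\mathcal{M}(x)=y] = \Pr[\hat m \mid x]\cdot \Pr[y \mid \hat m, x],
\]
where the first factor is the probability that $\mathcal{P}_{\hat m}$ is chosen and the second is the probability that the length-$\theta$ Global-map inside $\mathcal{P}_{\hat m}$ emits $y$. For a competing input $x'$ with $|x-x'|\le t$, I would then bound the ratio $\Pr[\mathcal{M}(x)=y]/\Pr[\mathcal{M}(x')=y]$ by bounding the two factors separately and multiplying, so that the two privacy budgets $\epsilon_{prt}$ and $\epsilon_{ner}$ enter additively in the exponent exactly as required by Definition \ref{def:partition-dlocal-differential-privacy}.

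For the partition factor, note that the selection in Algorithm \ref{alg:rand-map} is precisely Global-map run on the partition indices with budget $\epsilon_{prt}$, so Theorem \ref{the:base} gives $\Pr[\hat m\mid x]\le e^{|m-m'|\epsilon_{prt}}\Pr[\hat m\mid x']$, where $m,m'$ are the partition indices of $x,x'$. The key combinatorial step is to show $|m-m'|\le\lceil t/\theta\rceil$: since consecutive partition boundaries are spaced $\theta$ apart, an interval of length at most $t$ crosses at most $\lceil t/\theta\rceil$ boundaries, whence the partition factor is at most $e^{\lceil t/\theta\rceil\epsilon_{prt}}$.

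For the within-partition factor, the mapping is Global-map with budget $\epsilon_{ner}$ acting on the relative position of the input inside a partition, whose domain has length $\theta$. Consequently the two inputs' effective positions differ by at most $\theta$ no matter how large $t$ is, and Theorem \ref{the:base} yields $\Pr[y\mid\hat m,x]\le e^{\theta\epsilon_{ner}}\Pr[y\mid\hat m,x']$. Multiplying the two bounds gives $\Pr[\mathcal{M}(x)=y]\le e^{\lceil t/\theta\rceil\epsilon_{prt}+\theta\epsilon_{ner}}\Pr[\mathcal{M}(x')=y]$, which is exactly the partition-dLDP guarantee.

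I expect the main obstacle to be the within-partition factor rather than the algebra: the $\theta\epsilon_{ner}$ term, as opposed to a $t\epsilon_{ner}$ term, appears only because the second-stage Global-map operates on the within-partition offset over a domain of length $\theta$, so its sensitivity is capped by the partition length and decoupled from the global distance $t$. I would make this decoupling explicit, and also treat the boundary case of the partition count carefully, since an interval straddling a partition boundary is exactly where the ceiling in $\lceil t/\theta\rceil$ is tight. Finally, the factorization itself must be justified from the conditional independence of the two sampling stages, so that invoking Theorem \ref{the:base} twice and combining the bounds is legitimate.
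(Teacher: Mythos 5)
Your proposal is correct and follows essentially the same route as the paper's proof: factor the output probability into the partition-selection term and the within-partition term, bound the former by $e^{\lceil t/\theta\rceil\epsilon_{prt}}$ using the exponential-mechanism ratio on partition indices, bound the latter by $e^{\theta\epsilon_{ner}}$ because the within-partition mapping's effective sensitivity is capped by the partition length (worst case when the two inputs straddle the partition), and multiply. The only cosmetic difference is that the paper re-derives the exponential-mechanism ratio for the partition step rather than citing Theorem \ref{the:base}, which changes nothing of substance.
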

\vspace{-1em}
\begin{proof}
  The proof is presented in Appendix \ref{app:adj_map}.
\end{proof}
In Figure \ref{fig:distribution}, we show the output probability distribution of a sensitive value desensitized by Adj-map and Global-map when providing the approximately same privacy protection for any pair of values.
The distribution of output probabilities between partitions and within partitions can be changed by adjusting the ratio of $\epsilon_{prt}$ and $\epsilon_{ner}$.
When $\epsilon_{prt}=\alpha\theta\epsilon_{ner}$, $\epsilon_{ner}=\frac{\epsilon}{\alpha+\theta/|\mathbb{D}_{\bot}|}$.
Increasing $\alpha$ can make the desensitized value remains in the original partition with a greater probability.
Meanwhile, the distribution of probabilities in each partition is more uniform. 
Thus, the probability of reverse order of value pair with large distance is reduced and the indistinguishability of the closed values is increased.
Figure \ref{fig:distribution} also confirms that when Equation \ref{equ:fit_base} is satisfied, the probability distribution of output values of Adj-map and Global-map is almost entirely fitted.
The Global-map satisfying existing dLDP is just a special case when $\epsilon_{prt}$ and $\epsilon_{ner}$ in Adj-map meet Equation \ref{equ:fit_base}.
Then we theoretically show the order-preserving probability of Adj-map in Theorem \ref{the:randmaputi}.

\begin{theorem}
  \label{the:randmaputi}
  Adj-map is a probabilistic order-preserving desensitization algorithm with $\gamma(t)\ge1-q^{T}(\frac{(1-q^2)\cdot T+1}{(1+q-q^{T+1}-q^{k-T})(1+q)}-\frac{(1-q)^2(T+1)}{2(1+q)^2})$, where $q=e^{-\epsilon_{prt}/2}, T=\lfloor \frac{t}{\theta}\rfloor$.
\end{theorem}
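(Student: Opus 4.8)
The plan is to exploit the two-stage structure of Adj-map: a value $x$ lying in partition $m$ is first mapped to an output partition $\hat m$ by Global-map over the $k$ partitions with budget $\epsilon_{prt}$ (Algorithm \ref{alg:rand-map}), and is then placed inside $\hat m$ by a second, independent Global-map with budget $\epsilon_{ner}$. Fix a pair $x_i > x_j$ with $|x_i-x_j|\le t$, write $m_i\ge m_j$ for their partitions, $d=m_i-m_j$ for the partition distance, and $a,b$ for the within-partition offsets of $x_j,x_i$, so that $d\theta+(b-a)=x_i-x_j\le t$. Since each value is perturbed independently and a strict order on the output partitions forces the same order on the final values, I would condition on the three mutually exclusive events for the output partitions and write
\[
\Pr[y_i>y_j]=\Pr[\hat m_i>\hat m_j]+\Pr[\hat m_i=\hat m_j]\cdot\Pr[\,y_i>y_j\mid \hat m_i=\hat m_j\,].
\]

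First I would treat the partition term. The partition stage is literally Global-map run over a domain of $k$ points at distance $d$, so I can invoke Theorem \ref{the:baseuti} verbatim with $|\mathbb{D}_{\bot}|$ replaced by $k$ and $t$ replaced by the partition distance. Noting that Global-map charges the tie $\hat m_i=\hat m_j$ as a failure to preserve order, this yields $\Pr[\hat m_i>\hat m_j]\ge 1-q^{T}A$ with $A=\frac{(1-q^2)T+1}{(1+q-q^{T+1}-q^{k-T})(1+q)}$ and $q=e^{-\epsilon_{prt}/2}$, where $T=\lfloor t/\theta\rfloor$ is the governing partition distance. The collision term is then pure gain, since it recovers order-preserving mass that the partition-level Global-map discarded as ties. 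Using that the partition law of Global-map is geometric, $\Pr[\hat m=m+s]\propto q^{|s|}$, I would compute $\Pr[\hat m_i=\hat m_j]$ as a convolution of two such laws; its dominant contribution is $\frac{(1-q)^2(T+1)}{(1+q)^2}q^{T}$. Multiplying by the within-partition order probability, which is at least $1/2$ by the symmetry of Global-map whenever the offsets satisfy $b\ge a$, produces the correction $+\,q^{T}\frac{(1-q)^2(T+1)}{2(1+q)^2}=q^{T}B$. Adding the two pieces gives $\gamma(t)=\Pr[y_i>y_j]\ge (1-q^{T}A)+q^{T}B=1-q^{T}(A-B)$, which is the claim.

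The hard part is the worst-case bookkeeping that makes $T=\lfloor t/\theta\rfloor$ rather than $\lceil t/\theta\rceil$ the correct exponent. For a fixed budget $t$ on $|x_i-x_j|$, the partition distance $d$ can be as large as $\lfloor t/\theta\rfloor+1$, but only when the two values hug opposite partition boundaries, which forces $b<a$ and pushes the within-partition order probability below $1/2$; conversely the alignment $d=T$, $b\ge a$ that maximizes the within-partition gain is exactly the extremal configuration determining $\gamma(t)$. I would therefore minimize $\Pr[y_i>y_j]$ jointly over all admissible triples $(d,a,b)$ with $d\theta+(b-a)\le t$, and verify that the terms dropped in the convolution — the $\frac{2q^{d+2}}{1-q^2}$ tail of the collision probability and the finite-domain corrections hidden in the $q^{k-T}$ normalizer — absorb the deficit in the unfavorable $b<a$ cases, so that the aligned configuration indeed attains the minimum and the stated closed form is a valid lower bound on $\gamma(t)$.
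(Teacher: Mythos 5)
Your proposal follows essentially the same route as the paper's proof: condition on whether the two values land in different output partitions (bounding that event by invoking Theorem~\ref{the:baseuti} at the partition level, with domain size $k$ and distance $T$) or collide in the same partition (whose probability is lower-bounded by $\frac{(1-q)^2(T+1)}{(1+q)^2}q^T$ and multiplied by $1/2$ for the within-partition order), then sum the two contributions to get $1-q^T(A-B)$. If anything, your closing worst-case bookkeeping (the $\lfloor t/\theta\rfloor$ versus $\lceil t/\theta\rceil$ exponent and the unfavorable-offset/tie cases) is more careful than the paper itself, which simply takes $T=\lfloor t/\theta\rfloor$ and approximates the conditional within-partition order probability by $1/2$ without addressing those configurations.
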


\begin{proof}
  The proof is presented in Appendix \ref{app:uti-adj-map}.
\end{proof}

\noindent \textbf{Local-Map:}
The sensitive value can be mapped to the partition where it is located with the greater probability when $\epsilon_{prt}$ increases.
When $\epsilon_{prt}=\infty$, the output domain of the desensitized value is reduced from $\mathbb{D}_{\bot}$ to the partition where the sensitive value is located.
Therefore, the sensitive value pairs in different partitions always remain the order after desensitization.
Specifically, each sensitive value is desensitized by Global-map satisfying $\epsilon_{ner}$-dLDP in the partition that it is located.
We show the privacy guarantee of Local-map in Theorem \ref{the:direct-map}.

\begin{theorem}
  \label{the:direct-map}
  The privacy guarantee provided by Local-map satisfies the following:
  (1) For any pair of values $x$, $x'$ are in different partitions, $x$, $x'$ can be distinguished.
  \[\setlength\abovedisplayskip{0.5ex}
     \setlength\belowdisplayskip{0.5ex}
  \exists o\in\mathbb{D}_{\bot}, Pr[O=o|x]\neq 0, Pr[O=o|x']=0.\]
    
  (2) For any pair of values $x$, $x'$ are in the same partition, $x$, $x'$ satisfies $\epsilon_{ner}$-dLDP, where $|x'-x|\le t\le \theta, \epsilon_{ner}>0$.
  \[\setlength\abovedisplayskip{0.5ex}
     \setlength\belowdisplayskip{0.5ex}
  \forall o\in\mathbb{D}_{\bot}, Pr[O=o|x]\le e^{t\cdot\epsilon_{ner}}\cdot Pr[O=o|x'].\]
\end{theorem}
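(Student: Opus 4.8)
The plan is to unfold the definition of Local-map and handle the two claimed properties separately, since they concern genuinely different regimes (across partitions versus within a partition). Recall that Local-map is the $\epsilon_{prt}\to\infty$ specialization of Adj-map: given a sensitive value $x$ lying in partition $\mathcal{P}_m$, the mapping-partition step (Algorithm \ref{alg:rand-map}) degenerates, since $q=e^{-\epsilon_{prt}/2}\to 0$, into the deterministic choice $\hat m = m$; Global-map with budget $\epsilon_{ner}$ is then run using $\mathcal{P}_m$ as its data domain. The structural fact I would extract first is that the support of $\mathcal{R}(x)$ is exactly the partition $\mathcal{P}_m$ containing $x$: every $o\in\mathcal{P}_m$ receives the positive Global-map probability $e^{-|x-o|\epsilon_{ner}/2}/\sum_{j\in\mathcal{P}_m}e^{-|x-j|\epsilon_{ner}/2}$, while every $o\notin\mathcal{P}_m$ receives probability $0$.

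For statement (1), I would take $x,x'$ in distinct partitions $\mathcal{P}_m\neq\mathcal{P}_{m'}$. Since the partitions are disjoint, choosing any $o\in\mathcal{P}_m$ gives $\Pr[O=o\mid x]>0$ (it lies in the support of $\mathcal{R}(x)$) while $\Pr[O=o\mid x']=0$ (it lies outside the support of $\mathcal{R}(x')$), which is exactly the distinguishing output the statement demands. For statement (2), I would fix $x,x'$ in a common partition $\mathcal{P}_m$ with $|x'-x|\le t\le\theta$ and split on the location of $o$. If $o\notin\mathcal{P}_m$, both conditional probabilities vanish and $0\le e^{t\epsilon_{ner}}\cdot 0$ holds trivially. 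If $o\in\mathcal{P}_m$, then both $\mathcal{R}(x)$ and $\mathcal{R}(x')$ reduce to Global-map with budget $\epsilon_{ner}$ over the same domain $\mathcal{P}_m$, so the target bound $\Pr[O=o\mid x]\le e^{t\epsilon_{ner}}\Pr[O=o\mid x']$ is precisely the $\epsilon_{ner}$-dLDP guarantee of Theorem \ref{the:base} instantiated on $\mathcal{P}_m$; the triangle-inequality control of the exponent difference $|x'-o|-|x-o|\le|x-x'|\le t$ and of the termwise ratio of normalizing sums combine to yield the factor $e^{t\epsilon_{ner}}$.

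The one point that needs care is confirming that Theorem \ref{the:base} genuinely transfers when the ambient domain is a single partition $\mathcal{P}_m$ rather than the full $\mathbb{D}_{\bot}$: the normalizing denominators for $x$ and $x'$ are now sums over $\mathcal{P}_m$ and differ from one another, so one must check that the dLDP argument is domain-agnostic. This holds because that argument bounds the ratio of normalizers termwise and uses nothing about the domain beyond the $l_1$ metric; replacing $\mathbb{D}_{\bot}$ by $\mathcal{P}_m$ and $\epsilon$ by $\epsilon_{ner}$ is therefore immediate. I would thus expect the work to be essentially bookkeeping around the support characterization for statement (1) and the case split for statement (2), with no substantive analytic obstacle.
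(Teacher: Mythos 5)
Your proposal is correct, and it supplies exactly the argument the paper leaves implicit: Theorem~\ref{the:direct-map} is the one result the paper states with no accompanying proof, evidently because it is regarded as immediate from the construction of Local-map as Global-map run inside the fixed partition containing the sensitive value. Your support characterization (the output distribution of $\mathcal{R}(x)$ is supported exactly on $x$'s partition, giving (1) via disjointness) and your check that the proof of Theorem~\ref{the:base} is domain-agnostic (its triangle-inequality and termwise-normalizer bounds never use anything about $\mathbb{D}_{\bot}$ beyond the $l_1$ metric, so it instantiates on $\mathcal{P}_m$ with budget $\epsilon_{ner}$, giving (2)) are precisely the bookkeeping that justifies that omission.
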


The order-preserving probability $\gamma(t)=1$ for value pairs located in different partitions, while $\gamma(t)$ is the same as that of Global-map stated in Theorem \ref{the:baseuti} when value pairs are in the same partition.
When providing the same privacy guarantee for value pairs in the same partition, Local-map can preserve more order information than Adj-map and Global-map.
The finer the granularity of the partition, the more order information the desensitized values retain.

\subsection{Improve Efficiency with (Bounded) Discrete Laplace}
We have utilized a variant of the exponential mechanism to construct all our aforementioned order-preserving desensitization algorithms. 
Such constructions have two limitations.
First, the computational complexity of sampling in an exponential mechanism is proportional to the size of the output domain. 
If the output domain is large, the sampling procedure becomes inefficient. 
Second, the exponential mechanisms require bounded input/output domains, and the bounds of the domain need to be known in advance.
Although all values of a feature are held by one party, and the value domain is bounded and known in OpBoost, the proposed algorithms are difficult to desensitize distributed values in other complex scenarios.
On the other hand, if the input domain is unknown in advance, one needs to collect all input values \emph{before} determining desensitization parameters, introducing additional deployment requirements. 
Since releasing the output domain may violate privacy, other differentially private mechanisms would be introduced to determine the output domain (e.g., \cite{wilson2020differentially}) in a privacy-preserving manner. 

In this section, we focus on the situations where the input domain is unknown in advance, or the output domain is large. 
We introduce (bounded) discrete Laplace mechanism in our order-preserving desensitization algorithms as an alternative to the exponential mechanism. 
The discrete Laplace mechanism supports unbounded input/output sampling so that the input/output domain can be treated as infinity. 
Instead of leveraging the Inverse Cumulative Distribution Function (Inverse CDF) sampling method as in the exponential mechanism, the sampling procedure for discrete Laplace mechanism is independent of the domain size and more efficient \cite{canonne2020discrete}. 
It remains to show that replacing the exponential mechanism with the discrete Laplace mechanism can still provide the privacy guarantee. 
We first consider the infinite input/output domain setting. 
We show the definition of discrete Laplace distribution in Definition \ref{def:discrete-laplace}.
\begin{definition}\label{def:discrete-laplace} (\emph{Discrete Laplace Distribution}). 
  The discrete Laplace distribution with scale parameter $\lambda$ is denoted $Lap_\mathbb{Z}(\lambda)$, where $\lambda=1/\epsilon$.
  Its probability distribution can be defined as
  \[\setlength\abovedisplayskip{0.5ex}
     \setlength\belowdisplayskip{0.5ex}
  \forall z\in\mathbb{Z},\ Pr[Z=z]=\frac{e^{1/\lambda}-1}{e^{1/\lambda}+1}\cdot e^{-|z|/\lambda}.\]
\end{definition}
Then we show in Theorem \ref{the:discrete-laplace} that adding noise following discrete Laplace distribution satisfies dLDP.

\begin{theorem}\label{the:discrete-laplace}
  Any pair of values $x_1$ and $x_2$ with $|x_1-x_2|\le t$, satisfies $\epsilon$-dLDP after adding the noise sampling from discrete Laplace distribution $Lap_{\mathbb{Z}}(\frac{1}{\epsilon})$.
\end{theorem}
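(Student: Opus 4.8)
The plan is to treat the mechanism as the additive shift $\mathcal{M}(x) = x + Z$ with $Z \sim Lap_{\mathbb{Z}}(1/\epsilon)$, and then directly bound the pointwise likelihood ratio demanded by the dLDP definition (Definition \ref{def:dlocal-differential-privacy}). First I would rewrite the output probability in closed form: since $\mathcal{M}(x) = y$ holds exactly when $Z = y - x$, Definition \ref{def:discrete-laplace} with $\lambda = 1/\epsilon$ (so that $1/\lambda = \epsilon$) yields
\[
\Pr[\mathcal{M}(x) = y] = \frac{e^{\epsilon} - 1}{e^{\epsilon} + 1}\cdot e^{-|y - x|\epsilon}.
\]

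Next I would form the ratio of the two output probabilities for inputs $x_1$ and $x_2$. The normalizing constant $\frac{e^{\epsilon}-1}{e^{\epsilon}+1}$ does not depend on the input, so it cancels and leaves
\[
\frac{\Pr[\mathcal{M}(x_1) = y]}{\Pr[\mathcal{M}(x_2) = y]} = e^{\epsilon\left(|y - x_2| - |y - x_1|\right)}.
\]
The crux of the argument is then to control the exponent $|y - x_2| - |y - x_1|$. By the reverse triangle inequality, $\bigl||y - x_2| - |y - x_1|\bigr| \le |x_1 - x_2| \le t$, and in particular $|y - x_2| - |y - x_1| \le t$. Substituting this bound gives $\Pr[\mathcal{M}(x_1) = y] \le e^{t\epsilon}\cdot\Pr[\mathcal{M}(x_2) = y]$, which is precisely the $\epsilon$-dLDP condition for any pair with $|x_1 - x_2| \le t$.

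I do not anticipate a genuine obstacle here: this is the standard Laplace-mechanism calculation transported to the discrete support, and the only points requiring care are (i) confirming that the mechanism is the additive shift, so that the input-independent normalizing constant cancels cleanly in the ratio, and (ii) invoking the reverse triangle inequality in the correct direction so that the resulting bound is exactly $e^{t\epsilon}$ rather than something looser. Since the inequality is established pointwise for every fixed output $y \in \mathbb{Z}$, no summation over $Range(\mathcal{M})$ is required, and the fact that the input/output domain is unbounded introduces no difficulty at all.
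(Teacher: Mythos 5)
Your proposal is correct and follows essentially the same route as the paper's proof: write the mechanism as an additive shift, cancel the input-independent normalizing constant in the likelihood ratio, and bound the exponent $|y-x_2|-|y-x_1|$ by $t$ via the triangle inequality. The only difference is that you spell out the cancellation and the reverse triangle inequality explicitly, which the paper leaves implicit.
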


\begin{proof}
  The probability ratio of $x_1$ and $x_2$ being randomized to the same output value $o$ is 
  \setlength\abovedisplayskip{0.5ex}
   \setlength\belowdisplayskip{0.5ex}
  \begin{align*}
    \frac{Pr[x_1+N_1=o]}{Pr[x_2+N_2=o]}&=\frac{Pr[N_1=o-x_1]}{Pr[N_2=o-x_2]}=\frac{e^{-|o-x_1|/\lambda}}{e^{-|o-x_2|/\lambda}}\le e^{t\cdot\epsilon}.
  \end{align*}
\end{proof}
\vspace{-1em}

Theorem \ref{the:discrete-laplace} shows that, in the infinite input/output domain setting, Global-map can still satisfy $\epsilon$-dLDP when sampling the desensitized output values from the discrete Laplace distribution. 
Randomizing the partition in Adj-map using the discrete Laplace distribution also does not change the privacy guarantee of partition algorithms proved in Subsection \ref{sec:algorithms}.

We next consider the large (but finite) input/output domain setting, where the discrete Laplace mechanism can also be utilized as an alternative to the exponential mechanism. 
However, due to its unbounded output domain, we need to consider the case where the discrete Laplace mechanism samples a value outside the desired output domain. 
We solve this by resampling, and we call the mechanism as \emph{bounded} discrete Laplace mechanism. We first give the probability distribution of the bounded discrete Laplace distribution obtained by resampling in Lemma \ref{lem:bounded-laplace}.
\begin{lemma}\label{lem:bounded-laplace}
  Given the sampling range $[l,u]$, the probability distribution of the bounded discrete Laplace distribution is
  \[\setlength\abovedisplayskip{0.5ex}
   \setlength\belowdisplayskip{0.5ex}
  \forall z\in [l,u],\ Pr[Z=z]=\tau\cdot\frac{e^{1/\lambda}-1}{e^{1/\lambda}+1}\cdot e^{-|z|/\lambda},\]
  where $\lambda=1/\epsilon$, and
  \[\tau=
  \begin{cases}
  \frac{2}{e^{u/\lambda}(1-e^{-(u-l+1)/\lambda})}, &l<u<0,\\
  \frac{2}{1-e^{-(-l+1)/\lambda}-e^{-(u+1)/\lambda}+e^{-1/\lambda}}, &l<0<u,\\
  \frac{2}{e^{-l/\lambda}(1-e^{-(u-l+1)/\lambda})}, &0<l<u.\\
  \end{cases}
  \]
\end{lemma}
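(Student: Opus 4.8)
The plan is to observe that the resampling procedure is exactly rejection sampling, so the bounded variable is distributed as the discrete Laplace variable $Z$ of Definition~\ref{def:discrete-laplace} \emph{conditioned} on the event $\{Z\in[l,u]\}$. Hence for $z\in[l,u]$ the bounded PMF equals $\Pr[Z=z]/\Pr[Z\in[l,u]]$, and matching this against the claimed form $\tau\cdot\frac{e^{1/\lambda}-1}{e^{1/\lambda}+1}e^{-|z|/\lambda}$ forces $\tau$ to be the reciprocal of the total retained mass, $\tau=1/\Pr[Z\in[l,u]]$. Thus the whole lemma reduces to evaluating $\Pr[Z\in[l,u]]=\frac{e^{1/\lambda}-1}{e^{1/\lambda}+1}\sum_{z=l}^{u}e^{-|z|/\lambda}$ in closed form and inverting it.

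First I would compute the finite sum $S=\sum_{z=l}^{u}e^{-|z|/\lambda}$. The only thing preventing a single geometric-series formula is the absolute value $|z|$, which unfolds differently according to the signs of the endpoints; this is precisely why the statement splits into three cases. Writing $a=e^{1/\lambda}$, I would treat: (i) $l<u<0$, where $|z|=-z$ throughout and $S=\sum_{z=l}^{u}a^{z}$ is a single geometric series; (ii) $0<l<u$, the mirror image with $S=\sum_{z=l}^{u}a^{-z}$; and (iii) $l<0<u$, where I split the sum at the origin into $\sum_{z=l}^{-1}a^{z}$, the middle term $1$ from $z=0$, and $\sum_{z=1}^{u}a^{-z}$, each tail again geometric. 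In every case the identity $\sum_{z=p}^{q}r^{z}=(r^{q+1}-r^{p})/(r-1)$ gives a closed form.

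Next I would substitute $S$ into $\Pr[Z\in[l,u]]=\frac{a-1}{a+1}\,S$ and invert. The normalizing factor $\frac{a-1}{a+1}$ cancels the $(a-1)$ denominator produced by each geometric series, leaving compact expressions such as $\tau=(a+1)/(a^{u+1}-a^{l})$ in case~(i) and its analogues in the other two. Re-expressing these with $e^{-(u-l+1)/\lambda}$ and pulling out the leading exponential then reproduces the stated denominators $e^{u/\lambda}(1-e^{-(u-l+1)/\lambda})$, $e^{-l/\lambda}(1-e^{-(u-l+1)/\lambda})$, and $1-e^{-(-l+1)/\lambda}-e^{-(u+1)/\lambda}+e^{-1/\lambda}$ of the three cases.

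The delicate step is this final algebraic matching, for two reasons. First, the geometric-series bookkeeping is error-prone: the boundary between $z=0$ and $z=\pm1$ and the $q+1$ versus $p$ offsets shift exponents by one, and a single off-by-one changes the closed form; I would therefore verify each case by checking that $\sum_{z=l}^{u}\tau\cdot\frac{a-1}{a+1}e^{-|z|/\lambda}=1$. Second, carrying the cancellation through shows that the exact numerator of $\tau$ is $(a+1)/a=1+e^{-1/\lambda}$ rather than $2$; so reproducing the clean numerator $2$ uses the high-privacy approximation $e^{-1/\lambda}\approx 1$ (valid for small $\epsilon=1/\lambda$). I would make this explicit, noting that replacing $2$ by $1+e^{-1/\lambda}$ yields the exactly normalized bounded distribution, which I expect to be the main subtlety of the proof.
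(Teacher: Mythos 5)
Your proposal is correct and follows essentially the same route as the paper's proof: there, $\tau$ is likewise derived from the normalization requirement as the ratio $\sum_{z\in\mathbb{Z}}e^{-|z|/\lambda}\big/\sum_{z\in[l,u]}e^{-|z|/\lambda}$, which is identical to your $1/\Pr[Z\in[l,u]]$ because $\frac{e^{1/\lambda}-1}{e^{1/\lambda}+1}$ is exactly the reciprocal of the full-line sum, and the truncated sums are then evaluated by the same sign-split geometric series in the same three cases. The genuinely valuable part of your write-up is the last paragraph: the exact full-line sum is $\frac{1+e^{-1/\lambda}}{1-e^{-1/\lambda}}$, whereas the paper's proof writes it as $\simeq\frac{2}{1-e^{-1/\lambda}}$ (in effect double-counting the $z=0$ term of the doubled one-sided series), and this is precisely where the numerator $2$ in the stated $\tau$ comes from; consequently the lemma's PMF sums to $\frac{2}{1+e^{-1/\lambda}}$ rather than exactly $1$, and your replacement of $2$ by $1+e^{-1/\lambda}$ gives the exactly normalized constant. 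So you did not miss anything --- you correctly reproduced the paper's argument and, in addition, made explicit an approximation (valid only for small $\epsilon=1/\lambda$) that the paper performs silently behind its ``$\simeq$'' sign.
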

\begin{proof}
  The proof is deferred in Appendix \ref{app:distribution_bdldp}.
\end{proof}

Next, we analyze the privacy guarantee of bounded discrete Laplace mechanism in Theorem \ref{the:pribounded-laplace}. 

\begin{theorem}\label{the:pribounded-laplace}
  For any output range $[l,u]$, any pair of input values $x_1$ and $x_2$ with $|x_1-x_2|\le t$ satisfies $2\epsilon$-dLDP after adding the noise sampling from bounded discrete Laplace distribution $Lap_{\mathbb{Z}}(\frac{1}{\epsilon})$.
\end{theorem}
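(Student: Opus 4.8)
The plan is to write the output density of the bounded mechanism explicitly, split the likelihood ratio into a \emph{noise} factor and a \emph{normalization} factor, bound each separately by $e^{t\epsilon}$, and multiply. The crux — and the reason the privacy budget doubles to $2\epsilon$ — is that resampling makes the effective noise range depend on the input, so the normalization constants $\tau$ from Lemma \ref{lem:bounded-laplace} differ for $x_1$ and $x_2$ and must be controlled. First I would observe that resampling any output landing outside $\mathbb{D}_\bot=[L,R]$ is exactly conditioning the noise $N\sim Lap_{\mathbb{Z}}(1/\epsilon)$ on the event $N\in[L-x,R-x]$. Hence Lemma \ref{lem:bounded-laplace}, applied with sampling range $[l,u]=[L-x,R-x]$, gives the output density at $o\in[L,R]$ as
\[
\Pr[x+N=o]=\tau_x\cdot\frac{e^{1/\lambda}-1}{e^{1/\lambda}+1}\cdot e^{-|o-x|/\lambda},\qquad \lambda=1/\epsilon,
\]
where $\tau_x$ is the normalization attached to the window $[L-x,R-x]$. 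For two inputs with $|x_1-x_2|\le t$ the ratio factors as
\[
\frac{\Pr[x_1+N_1=o]}{\Pr[x_2+N_2=o]}=\frac{\tau_{x_1}}{\tau_{x_2}}\cdot e^{(|o-x_2|-|o-x_1|)/\lambda}.
\]

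For the second (noise) factor I would simply reuse the triangle-inequality bound from the proof of Theorem \ref{the:discrete-laplace}: $|o-x_2|-|o-x_1|\le|x_1-x_2|\le t$, so this factor is at most $e^{t/\lambda}=e^{t\epsilon}$. The main obstacle is therefore the normalization factor $\tau_{x_1}/\tau_{x_2}$. Since $\tau_x$ is inversely proportional to $S_x:=\sum_{z=L-x}^{R-x}e^{-|z|/\lambda}$, we have $\tau_{x_1}/\tau_{x_2}=S_{x_2}/S_{x_1}$, a ratio of two sums of the symmetric weight $e^{-|z|/\lambda}$ taken over integer windows of the \emph{same} length $R-L+1$ whose positions differ by $|x_1-x_2|\le t$.

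Rather than wrestling with the three case-formulas for $\tau$ in Lemma \ref{lem:bounded-laplace}, I would establish the clean per-unit-shift bound $e^{-1/\lambda}\le S_{x+1}/S_x\le e^{1/\lambda}$. Reindexing shows $S_{x+1}$ is a sum over the \emph{same} index set as $S_x$ with weights $e^{-|z\pm1|/\lambda}$ in place of $e^{-|z|/\lambda}$; the termwise ratio $e^{(|z|-|z\pm1|)/\lambda}$ lies in $[e^{-1/\lambda},e^{1/\lambda}]$ because $\bigl||z|-|z\pm1|\bigr|\le 1$. Since every term is positive, the full sums inherit this bound, and telescoping over the at most $t$ integer shifts yields $S_{x_2}/S_{x_1}\le e^{t/\lambda}=e^{t\epsilon}$. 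Multiplying the two factors gives $\Pr[x_1+N_1=o]\le e^{2t\epsilon}\Pr[x_2+N_2=o]$, which is exactly the $2\epsilon$-dLDP guarantee of Definition \ref{def:dlocal-differential-privacy}. The delicate point throughout is that the extra $e^{t\epsilon}$ — and hence the doubled budget — arises solely from the input-dependent renormalization forced by resampling, and the window-shift argument is what shows this overhead is no worse than a single additional $e^{t\epsilon}$.
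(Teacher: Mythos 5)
Your proof is correct, but it takes a genuinely different route from the paper's. The paper proves Theorem \ref{the:pribounded-laplace} by brute force: it substitutes the three explicit case-formulas for $\tau$ from Lemma \ref{lem:bounded-laplace} and then runs a separate computation for each of five orderings of $v_1,v_2,l,u$ (both inputs below the output range, both above, both inside, one straddling the left endpoint, one straddling the right), verifying the bound $e^{2t\epsilon}$ case by case. You avoid the case analysis entirely: the factorization of the likelihood ratio into a noise factor and a normalization factor, the triangle-inequality bound $e^{t\epsilon}$ on the former, and the termwise window-shift/telescoping bound $S_{x_2}/S_{x_1}\le e^{t\epsilon}$ on the latter give the result uniformly, since your argument never needs to know whether $0$ lies in the shifted noise window. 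Your route buys two things: (i) it isolates conceptually why the budget doubles --- one $e^{t\epsilon}$ from the shifted Laplace weights, one from the input-dependent renormalization forced by resampling --- which the paper's case computations obscure; (ii) it is more general, since the termwise step only uses that $z\mapsto|z|$ is $1$-Lipschitz and that all summands are positive, so it would apply verbatim to any symmetric log-Lipschitz noise and sidesteps the algebraic slips that long case analyses invite. One small notational nit: you describe the resampling window as $[L-x,R-x]$ with $\mathbb{D}_\bot=[L,R]$, whereas the theorem is stated for an arbitrary output range $[l,u]$, so the window should be $[l-x,u-x]$; nothing in your argument changes.
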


\begin{proof}
  The proof is deferred in Appendix \ref{app:pribounded-laplace}.
\end{proof}
Theorem \ref{the:pribounded-laplace} shows that for Global-map, sampling the output value from the bounded discrete Laplace distribution $Lap_{\mathbb{Z}}(\frac{2}{\epsilon})$ can provide $\epsilon$-dLDP privacy guarantee. 
Also, for Adj-map, and Local-map, replacing the exponential distribution $Exp(\frac{1}{\epsilon})$ with bounded discrete Laplace distribution $Lap_{\mathbb{Z}}(\frac{2}{\epsilon})$ when mapping the partition and sampling output values in the partition does not affect the privacy guarantee, as shown in Corollary \ref{cor:bound-laplace-oth}. 

\begin{corollary}
  \label{cor:bound-laplace-oth}
  Sampling from the bounded discrete Laplace distribution $Lap_{\mathbb{Z}}(\frac{2}{\epsilon})$ instead of exponential mechanism has no effect on privacy guarantee provided by Adj-map and Local-map.
\end{corollary}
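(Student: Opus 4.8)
The plan is to exploit the \emph{modular} structure of Adj-map and Local-map: each is assembled from Global-map building blocks, and its privacy analysis (Theorem~\ref{the:adj-map} and Theorem~\ref{the:direct-map}) treats every such block only through the dLDP guarantee it provides, never through the particular sampling distribution used to realize it. Consequently, if one shows that the bounded discrete Laplace mechanism meets the same per-block dLDP specification as the exponential mechanism it replaces, the overall proofs carry through verbatim. The heavy lifting for this has already been done in Theorem~\ref{the:pribounded-laplace}, which states that, restricted to any output range $[l,u]$, adding noise sampled from $Lap_{\mathbb{Z}}(\frac{2}{\epsilon})$ yields an $\epsilon$-dLDP mechanism; I would invoke it once per building block.

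Concretely, I would first recall that Adj-map runs two Global-map instances in sequence: a partition-selection step over the $k$ partitions with parameter $\epsilon_{prt}$ (Algorithm~\ref{alg:rand-map}), and a within-partition step over the chosen partition with parameter $\epsilon_{ner}$. I would substitute $Lap_{\mathbb{Z}}(\frac{2}{\epsilon_{prt}})$ (restricted to $\{1,\dots,k\}$) for the exponential mechanism in the first step and $Lap_{\mathbb{Z}}(\frac{2}{\epsilon_{ner}})$ (restricted to the partition) for the second. By Theorem~\ref{the:pribounded-laplace} these two bounded mechanisms provide exactly $\epsilon_{prt}$-dLDP and $\epsilon_{ner}$-dLDP respectively, i.e. the identical guarantees that the original Global-map blocks supplied. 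Plugging these unchanged per-block bounds into the composition argument that established Definition~\ref{def:partition-dlocal-differential-privacy} in the proof of Theorem~\ref{the:adj-map} reproduces the same $(\epsilon_{prt},\epsilon_{ner})$-partition-dLDP bound, so Adj-map's guarantee is unaffected. For Local-map, which is the $\epsilon_{prt}\to\infty$ specialization using only the within-partition step, the same substitution with $Lap_{\mathbb{Z}}(\frac{2}{\epsilon_{ner}})$ preserves the $\epsilon_{ner}$-dLDP claim (part (2) of Theorem~\ref{the:direct-map}) by the same single invocation of Theorem~\ref{the:pribounded-laplace}.

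The one place that needs genuine care, and which I expect to be the main obstacle, is part (1) of Theorem~\ref{the:direct-map}: the \emph{perfect distinguishability} of values lying in different partitions. This property is not a dLDP bound but a support statement: it requires that the output of the within-partition mechanism lie entirely inside the value's own partition, so that some $o$ has $Pr[O=o\mid x]\neq 0$ while $Pr[O=o\mid x']=0$. I would verify that this is exactly what the resampling construction of the bounded discrete Laplace mechanism guarantees: by Lemma~\ref{lem:bounded-laplace}, when the sampling range is set to the partition $[l,u]$ the distribution places positive mass only on $z\in[l,u]$ and zero mass outside it, so the support of the desensitized output coincides with the partition exactly as in the exponential realization. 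Hence the clean-separation property survives the substitution, and together with the dLDP-preservation argument above this shows that neither Adj-map nor Local-map loses any privacy when the exponential mechanism is replaced by $Lap_{\mathbb{Z}}(\frac{2}{\epsilon})$, proving the corollary.
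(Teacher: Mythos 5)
Your overall strategy --- modular substitution of the bounded discrete Laplace block for the exponential block, justified by one invocation of Theorem~\ref{the:pribounded-laplace} per block --- is exactly the route the paper takes (the paper gives no explicit proof of this corollary; it is asserted directly on the strength of Theorem~\ref{the:pribounded-laplace}). Your treatment of Local-map is complete and even more careful than the paper's: part~(2) of Theorem~\ref{the:direct-map} concerns inputs in the same partition at distance $t\le\theta$ with the output range equal to that partition, which is precisely the setting of Theorem~\ref{the:pribounded-laplace}, and your observation that part~(1) is a \emph{support} statement preserved because the resampling construction of Lemma~\ref{lem:bounded-laplace} places zero mass outside the partition is a point the paper glosses over entirely.

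For Adj-map, however, there is a gap in the step where you ``plug the unchanged per-block bounds into the composition argument.'' The proof of Theorem~\ref{the:adj-map} does \emph{not} use the within-partition block through its dLDP guarantee; it uses the bound
\[
Pr[O=o\mid M(x)=\mathcal{P}_{\hat m},\,x\in\mathcal{P}_i]\;\le\; e^{\theta\epsilon_{ner}}\cdot Pr[O=o\mid M(x')=\mathcal{P}_{\hat m},\,x'\in\mathcal{P}_j],
\]
which must hold even when $x,x'$ lie \emph{outside} the sampled partition $\mathcal{P}_{\hat m}$ and are at distance $t\gg\theta$: the exponent is capped by the length $\theta$ of the output range, not governed by the input distance. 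Theorem~\ref{the:pribounded-laplace} only supplies the input-distance-dependent bound (i.e., $e^{t\epsilon_{ner}}$ at scale $2/\epsilon_{ner}$), so plugging it in verbatim yields $e^{\lceil t/\theta\rceil\epsilon_{prt}+t\epsilon_{ner}}$, which for $t>\theta$ is strictly weaker than the $e^{\lceil t/\theta\rceil\epsilon_{prt}+\theta\epsilon_{ner}}$ required by Definition~\ref{def:partition-dlocal-differential-privacy}. What is actually needed (and true) is a range-cap property: for the bounded discrete Laplace restricted to an interval of $n$ points and length $\theta$, any two admissible outputs have scores differing by at most $\theta$, so every output probability lies between $e^{-\theta/\lambda}/n$ and $e^{\theta/\lambda}/n$, whence the worst-case ratio over \emph{arbitrary} input pairs is $e^{2\theta/\lambda}=e^{\theta\epsilon_{ner}}$ at $\lambda=2/\epsilon_{ner}$ --- the same cap the exponential realization enjoys. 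With this one extra (easy) lemma your substitution argument goes through; without it, the Adj-map half of the corollary is not established.
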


One may also wonder if additional privacy budgets should be consumed when resampling occurs. Observe that if the output distribution satisfies the privacy guarantee, only the time consumption for sampling reflects if resampling happens. 
In the situation where the adversary has a very strong capability of carrying out the side-channel attack, we also recommend considering extra privacy budget consumptions.
In addition, the probability of resampling increases as the input/output domain decreases.
Therefore, we still recommend using the exponential mechanism for tasks with a small input/output domain.

\section{Theoretical Analysis}
\subsection{Utility Analysis of OpBoost}
\label{sub:error_split}
\ifshaphered
\emph{(Reviewer1:D3)}
\else
\fi
\revision{
Here, we provide theoretical evidence for the utility of OpBoost.
\begin{theorem}
\label{the:max_gain}
The probability that no desensitized values of a feature crosses any potential split point $x^{\dagger}\in[L,R]$ after sorting is at least $\beta$.
where
  {\setlength\abovedisplayskip{0.5ex}
  \setlength\belowdisplayskip{0.5ex}
  \begin{align*}
    \beta&=\sum_{k\in\mathcal{I}_l}\sum_{x^{\dagger}\in[L,R]} (\mathcal{M}(k,x^{\dagger})\cdot\prod_{j\neq k,j\in \mathcal{I}_l}\sum_{x_l\in{[L,x^{\dagger}]}}\mathcal{M}(j,x_l)\\
    &\cdot\prod_{j\in\mathcal{I}_r}\sum_{x_r\in[x^{\dagger}+1,R]}\mathcal{M}(j,x_r)).
  \end{align*}}
Here, $\mathcal{I}_l$ (resp. $\mathcal{I}_r$) is the set of feature values on the left (resp. right) of the split point.
The function $\mathcal{M}(x,y)$ outputs the probability of desensitizing the value $x$ to $y$ with a desensitization algorithm.
$\beta$ contains the probability that any value $k$ in $\mathcal{I}_l$ is desensitized as the left maximum value $x^{\dagger}$ and the other values $x_l\neq k$ in $\mathcal{I}_l$ are all lower than $x^{\dagger}$, the values $x_r$ in $\mathcal{I}_r$ are all greater than $x^{\dagger}$ after desensitizing.
\end{theorem}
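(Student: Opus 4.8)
The plan is to read the target event as the statement that the desensitized images of $\mathcal{I}_l$ and $\mathcal{I}_r$ do not interleave, to decompose it by conditioning on the realized maximum of the left images, and to factor each resulting probability using the independence of the per-value desensitization $\mathcal{M}$. First I would fix $y_j=\mathcal{R}(x_j)$ with $\Pr[y_j=x]=\mathcal{M}(j,x)$, record that the $y_j$ are produced independently, and write the event as $E=\{\max_{j\in\mathcal{I}_l}y_j<\min_{j\in\mathcal{I}_r}y_j\}$, i.e. every left value stays at or below the realized left maximum while every right value lands strictly above it. Preserving this separation is exactly what keeps a potential split point $x^{\dagger}$ a valid order-based boundary, in the probabilistic sense of Definition \ref{def:probabilistic-order-preserving}, and hence leaves the order-driven quantities entering $G_{split}$ unaffected.

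Next I would slice $E$ according to the pair $(k,x^{\dagger})$, where $x^{\dagger}\in[L,R]$ is the value attained by the largest left image and $k\in\mathcal{I}_l$ is an index attaining it. For fixed $(k,x^{\dagger})$ the elementary event requires $y_k=x^{\dagger}$, every other left value to fall in $[L,x^{\dagger}]$, and every right value to fall in $[x^{\dagger}+1,R]$. Independence factors its probability into $\mathcal{M}(k,x^{\dagger})$, the product $\prod_{j\neq k,j\in\mathcal{I}_l}\sum_{x_l\in[L,x^{\dagger}]}\mathcal{M}(j,x_l)$, and the product $\prod_{j\in\mathcal{I}_r}\sum_{x_r\in[x^{\dagger}+1,R]}\mathcal{M}(j,x_r)$, which is precisely the summand of $\beta$. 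Summing over all admissible $(k,x^{\dagger})$ then assembles the claimed expression, and every realization in $E$ has a well-defined left maximum attained by at least one index, so each such realization is captured by at least one elementary event.

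The hard part is converting this slicing into a clean one-sided bound, and the obstacle is ties at the left maximum. If two left values simultaneously map to the same maximal $x^{\dagger}$, a single realization is counted once per maximizing index $k$, so the raw sum over the inclusive-range events need not be dominated by $\Pr[E]$. To secure the $\ge$ direction I would build a genuinely disjoint family contained in $E$ by designating $k$ as the unique maximizer, forcing the remaining left values strictly below $x^{\dagger}$ (range $[L,x^{\dagger}-1]$); these events are then pairwise disjoint across both $k$ and $x^{\dagger}$ and each is a subset of $E$, so their probabilities sum to a quantity no larger than $\Pr[E]$. The remaining work is to reconcile this disjoint sum with the displayed formula, that is, to pin down the boundary convention (inclusive versus strict upper endpoint) and to argue that the tie contributions do not reverse the inequality, for instance because exact coincidences carry negligible mass in the regimes of interest. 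Everything outside this tie-handling step is the product-and-sum bookkeeping already exhibited in the statement of $\beta$, so once the convention is fixed the bound $\Pr[E]\ge\beta$ follows.
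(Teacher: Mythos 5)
Your constructive steps reproduce exactly what the paper offers: the paper never gives a standalone proof of Theorem~\ref{the:max_gain} at all --- its entire justification is the descriptive sentence inside the statement, which says that $\beta$ accumulates, over each $k\in\mathcal{I}_l$ and each candidate maximum $x^{\dagger}$, the probability that $k$ is desensitized to $x^{\dagger}$, the other left values fall below it, and the right values fall strictly above it. That is precisely your conditioning on the realized left maximum together with the independence factorization, so on that side you and the paper coincide.

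Your tie-handling paragraph, however, is not a loose end to be patched; it identifies a genuine defect in the stated result, and your disjoint family is the correct resolution. The paper is internally inconsistent on exactly this point: its prose requires the other left values to be \emph{lower} than $x^{\dagger}$ (strict), yet the displayed formula sums $\mathcal{M}(j,x_l)$ over $x_l\in[L,x^{\dagger}]$ (inclusive). Under the inclusive convention the events indexed by $(k,x^{\dagger})$ overlap precisely on realizations with tied left maxima, so the displayed sum counts each such realization once per maximizer and satisfies $\beta\ge\Pr[E]$ --- the \emph{reverse} of the claimed direction. Moreover, you should not try to argue ties are negligible: every mechanism in the paper (exponential, discrete Laplace) has full support on $\mathbb{D}_{\bot}$, so tied maxima carry strictly positive probability whenever $|\mathcal{I}_l|\ge 2$, and the inclusive-range inequality genuinely fails. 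The correct course is the one you already constructed: take the strict ranges $[L,x^{\dagger}-1]$ for the non-maximal left values, note the resulting events are pairwise disjoint (across both $k$ and $x^{\dagger}$) and contained in $E$, and conclude that their total probability lower-bounds $\Pr[E]$. That strict-range quantity is what the paper's prose actually describes, and it is the version of $\beta$ for which the theorem is true; with that convention fixed, your proof is complete and is, in effect, a corrected form of the paper's claim.
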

}
\begin{figure}[t]
  \centering
  \includegraphics[width=0.48\textwidth]{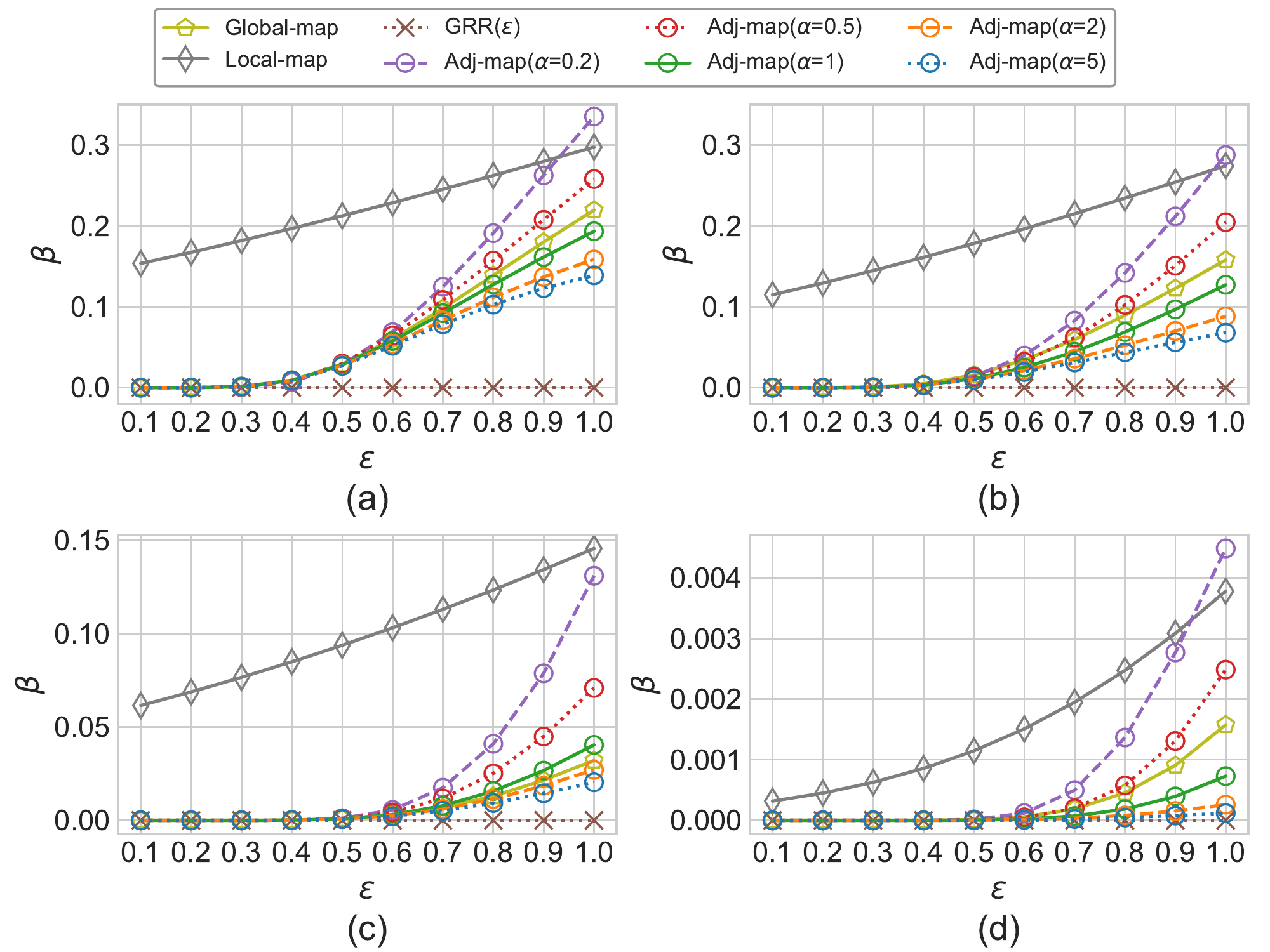}
  \setlength{\belowcaptionskip}{0ex}
  \caption{Comparison of $\beta$ calculated by the algorithms on a uniformly and a normal distributed datasets with $100$ values, where $\theta=4$, $|\mathbb{D}_\bot|=100$.
  (a)(b) are on uniform datasets, and (c)(d) are on normal datasets.
  (a)(c) and (b)(d) are the split point at the $25\%$ and $50\%$ quantiles, respectively.}
  \label{fig:split_prob}
\end{figure}

\revision{
In Theorem \ref{the:max_gain}, the distribution and density of the feature values and the location of the split point are intertwined to influence the value of $\beta$.
To provide theoretical evidence, we have to analyze $\beta$ under the condition that all these factors are controllable.
We choose two commonly used distributions to calculate $\beta$ when the split point is located at $25\%$ and $50\%$ quantiles.
Figure \ref{fig:split_prob} shows the comparison of $\beta$ between the proposed order-preserving desensitization algorithms and the LDP mechanism.
The denser or more centralized the distribution of the datasets, the higher the number of disordered value pairs after desensitization, thus reducing the utility of the desensitized values. 
Besides, we observe that $\beta$ decreases as the split point approaches the median of the feature values.
When the split point at $25\%$ of the feature values, $\beta$ can reach $10$ times that of the split point at $50\%$ of the feature values for the normal dataset.
Although many factors have a great impact on $\beta$, the proposed algorithms are always better than the LDP method.
\begin{corollary}
\label{cor:maximum_gain}
  Let $G$ be the maximum gain when splitting a feature, and $\hat G$ be the maximum gain after desensitizing feature's values. 
  We have
  \[\setlength\abovedisplayskip{0.5ex}
   \setlength\belowdisplayskip{0.5ex}
   Pr[\hat G \ge G]\ge \beta.\]
\end{corollary}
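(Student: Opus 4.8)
The plan is to connect the event analyzed in Theorem~\ref{the:max_gain} --- that the desensitized values of the left and right sample sets remain cleanly separable --- to the preservation of the gain value, and then exploit the fact that $\hat G$ is a maximum. The crucial structural observation, already noted in the discussion of $G_{split}$, is that the gain of a split depends only on the partition of the sample set into $I_L$ and $I_R$ (through the sums $\sum_{i\in I_L} g_i$, $\sum_{i\in I_L} h_i$, and the analogous right-node quantities), and not on the actual feature values. Hence any split that reproduces the partition $(\mathcal{I}_l,\mathcal{I}_r)$ attaining the optimum on the original data yields exactly the same gain $G$.

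First I would fix an optimal split on the original feature values and let $\mathcal{I}_l$ and $\mathcal{I}_r$ denote its left and right sample sets, so that this split achieves the maximum gain $G$ and $\mathcal{I}_l\cup\mathcal{I}_r$ is the full sample set at the node. Next I would define the event $E$ that, after desensitization, every sample in $\mathcal{I}_l$ maps to a value at most $x^{\dagger}$ while every sample in $\mathcal{I}_r$ maps to a value at least $x^{\dagger}+1$, for some threshold $x^{\dagger}\in[L,R]$ --- precisely the ``no crossing'' event whose probability is lower bounded by $\beta$ in Theorem~\ref{the:max_gain}, so that $Pr[E]\ge\beta$. On $E$ the threshold between $x^{\dagger}$ and $x^{\dagger}+1$ is one of the candidate split points examined on the desensitized, re-sorted values, and it reproduces the partition $(\mathcal{I}_l,\mathcal{I}_r)$ exactly.

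By the structural observation, this candidate split achieves gain exactly $G$ on the desensitized data. Since $\hat G$ is the maximum gain taken over all candidate splits on the desensitized values, and one of these candidates attains $G$, we conclude that $\hat G \ge G$ on the event $E$. Therefore $Pr[\hat G \ge G] \ge Pr[E] \ge \beta$, which is exactly the claim, with the final inequality supplied by Theorem~\ref{the:max_gain}.

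The main obstacle I anticipate is making rigorous the implication that $E$ renders the partition $(\mathcal{I}_l,\mathcal{I}_r)$ realizable as a candidate split after desensitization. One must argue carefully that clean separation of the two desensitized value sets guarantees a single threshold reproducing the original partition, correctly handling ties at the boundary value $x^{\dagger}$ (where some left sample attains the maximum) and confirming that such a threshold lies among the candidate split points the tree-building procedure actually evaluates. Once this realizability is established, the monotonicity of the maximum together with the partition-invariance of $G_{split}$ make the remainder immediate.
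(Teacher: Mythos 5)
Your proposal is correct and follows essentially the same route as the paper: the paper likewise conditions on the no-crossing event of Theorem \ref{the:max_gain} (probability at least $\beta$), observes that $G_{split}$ depends only on the induced partition of samples into $I_L$ and $I_R$, and invokes the exhaustive split-point search to conclude $\hat G \ge G$ on that event. Your additional care about boundary ties and realizability of the threshold is a reasonable tightening of what the paper states only informally, but it does not change the argument.
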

When the optimal split point with maximum gain $G$ is chosen in Theorem \ref{the:max_gain}, we can give a theoretical lower bound of the probability that the maximum gain $\hat G$ is not less than $G$ after desensitizing as shown in Corollary \ref{cor:maximum_gain}.
Given the fact that the exhaustive split point searching will traverse all potential split points, it is guaranteed that the current split point obtained with desensitizing always has the largest maximum gain, because otherwise, we can find another split point with an even greater maximum gain during the exhaustive search. 
Intuitively, we provide a probability lower bound that the accuracy of the model trained based on desensitized features is not lower than that of the plain model.
}

\subsection{Efficiency Analysis of OpBoost}
\ifshaphered
\emph{(Reviewer1:D2.c,Reviewer3:D2)}
\else
\fi
\revision{
We analyze the computational and communicational complexities of OpBoost to provide the overhead in theory.
Without loss of generality, we consider the setting consisting of m-1 users in party B and one user in Party A, where each user holds at most $r$ features with $N$ total training samples.
}
\revision{
\begin{theorem}
  \label{the:computaition_overhead}
  The total computation overhead for each user in party B is $O(Nr)$, and $O(NT+(2^L-1)T)$ for the user in Party A, where $T$ is the number of trees, $L$ is the number of tree layers.
\end{theorem}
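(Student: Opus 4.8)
The plan is to trace the three-phase execution workflow of OpBoost — local desensitization, split finding, and split-value retrieval — and charge every elementary operation either to Party B or to Party A, then bound each phase separately and sum. I will use the $O(1)$-per-value cost of the (bounded) discrete Laplace sampler established earlier, since the whole point of replacing the exponential mechanism is to make each desensitization independent of the domain size.

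For Party B, I would bound the local desensitization pipeline. Each user holds at most $r$ features over $N$ samples, hence at most $Nr$ scalar entries. The pre-processing map $\mathcal{B}$ touches each entry once at $O(1)$ cost, and the subsequent desensitization is likewise $O(1)$ per entry because the discrete Laplace sampler does not depend on $|\mathbb{D}_\bot|$. Replacing desensitized values by their ordinals requires ranking the $N$ values of each feature; since these values lie in the finite discrete domain $\mathbb{D}_\bot$, a counting/bucket sort produces all ranks in $O(N + |\mathbb{D}_\bot|)$ per feature, which is $O(N)$ per feature when $|\mathbb{D}_\bot| = O(N)$. Summing over the at most $r$ features gives $O(Nr)$. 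The only remaining Party-B work is answering Party A's split-point queries, one $O(1)$ lookup per internal node, i.e. $O((2^L-1)T)$; under the standard assumption that the number of split points does not exceed the data size, this term is absorbed into $O(Nr)$, yielding the claimed bound.

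For Party A, I would separate the per-round cost into gradient evaluation and tree growth. In each of the $T$ boosting rounds Party A recomputes $(g_i,h_i)$ for every sample from the current model output, a single pass of $O(N)$, giving $O(NT)$ across all rounds. For tree growth, a depth-$L$ tree has $2^L-1$ internal nodes; the intended accounting charges the split evaluation to the node count, so constructing one tree is $O(2^L-1)$ and the total is $O((2^L-1)T)$. Summing the two contributions gives $O(NT + (2^L-1)T)$.

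The main obstacle is justifying that the split search at Party A collapses to the node count. A naive exact-greedy search would rescan, for each of the $2^L-1$ nodes and each of the at most $r$ features, the samples falling into that node, inflating the bound by factors of $r$ and of the samples-per-node. The plan is to argue that, because every feature has been encoded into the finite ordinal domain $\mathbb{D}_\bot$ before training, the statistics $\sum_{i\in I_L} g_i$ and $\sum_{i\in I_L} h_i$ entering $G_{split}$ can be accumulated into per-ordinal histograms and combined by prefix sums, so the best split at a node is read off in time independent of $N$. I would then state explicitly the assumptions under which the tree-growth cost reduces to $O((2^L-1)T)$ — in particular treating the feature count $r$ and the domain size $|\mathbb{D}_\bot|$ as constants from Party A's perspective, and assuming a roughly balanced tree so that $2^L-1$ is the operative node count. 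Pinning down this amortization, rather than the bookkeeping over the $T$ rounds, is the real work; the remaining summations are routine.
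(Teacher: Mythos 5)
Your Party~B accounting matches the paper's: at most $Nr$ entries, each desensitized in $O(1)$ by the discrete Laplace sampler, giving $O(Nr)$ (your extra bookkeeping for bucket-sort ranking and for answering split-point lookups is a harmless strengthening that the paper simply omits). The gap is on the Party~A side, and it is a real one: you have misidentified what the term $O((2^L-1)T)$ is counting. In OpBoost, after training finishes, Party~A sends the ordinal numbers of all split points to Party~B and receives back the corresponding desensitized values, then rewrites every internal tree node with these values. That post-training replacement step --- $O(1)$ per node, $2^L-1$ nodes per tree, $T$ trees --- is exactly the $(2^L-1)T$ term. The paper's proof then disposes of everything else at Party~A in one line: the training itself (gradient computation \emph{and} split finding) has the same complexity as non-private tree boosting, charged as $O(NT)$. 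No per-node amortization of split search is needed anywhere.

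By contrast, your plan tries to derive $(2^L-1)T$ from the split search itself, and the difficulty you flag as ``the real work'' is in fact unresolvable as stated. Even with per-ordinal histograms and prefix sums, \emph{building} the histogram at a node requires scanning the samples that fall into that node; across one level of a tree this is an $\Omega(N)$ pass, so growing a depth-$L$ tree costs $\Omega(NL)$, never $O(2^L-1)$, no matter how you treat $r$ and $|\mathbb{D}_\bot|$ as constants. Your decomposition also quietly needs $L$ to be a constant to absorb this into $O(NT)$, while simultaneously keeping $2^L-1$ as a live parameter --- an inconsistency the paper's decomposition avoids, since its two terms come from genuinely disjoint phases (training versus node replacement). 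The fix is not to sharpen your amortization argument but to redo the accounting: charge all of training to $O(NT)$ as ``same as the non-private algorithm,'' and recognize the $(2^L-1)T$ term as the OpBoost-specific cost of swapping ordinals for desensitized values in the finished trees.
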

\begin{proof}
The computation overhead of Party B comes from desensitizing all features' values.
The computational complexity is $O(1)$ by sampling a desensitized value from discrete Laplace distribution.
Thus, the total overhead is at most $O(Nr)$ for each user in party B.
For Party A, the complexity of model training is the same as that of non-private training.
The additional computational complexity comes from replacing all tree nodes with desensitized values after training, which is $O((2^L-1)T)$.
Therefore, the total computation overhead of party A is $O(NT + (2^L-1)T)$.
\end{proof}
}
\revision{
To show the efficiency of the proposed desensitization algorithms, we compare them with an Order-Preserving Encryption (OPE) scheme.
As shown in Appendix \ref{app:ope_compare}, our proposed algorithms are $40$-$200$ times faster than OPE desensitizing the same value set.
The training process is the same as that of the non-private tree boosting algorithm \cite{friedman2001greedy, chen2016xgboost}.
}
\revision{
\begin{theorem}
  \label{the:comunication_overhead}
  The communication channel of OpBoost needs to transmit $O(mrN\log N+(2^L-1)T(\log N+\log |\mathbb{D}_\bot|))$ bits in total, where $T$ is the number of trees, $L$ is the number of tree layers.
\end{theorem}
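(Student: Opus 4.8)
The plan is to trace the execution workflow of OpBoost described in the System Overview and account for every message that crosses the communication channel, summing the bit lengths. There are exactly two phases that generate traffic, and they will correspond to the two terms in the bound; the remaining steps (local preprocessing/mapping by Party B and local tree construction by Party A) are performed without communication and must be argued to be asymptotically negligible.

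First I would analyze the initial upload in which each user in Party B sends its desensitized features to Party A. After desensitizing, Party B replaces each feature's values by their ordinal (serial) numbers and transmits these indices. Each feature consists of $N$ values, and each ordinal number lies in $\{1,\dots,N\}$, hence requires $\lceil\log N\rceil$ bits; a single feature thus costs $O(N\log N)$ bits. Since each of the $m-1$ users in Party B holds at most $r$ features, this phase transmits $O((m-1)rN\log N)=O(mrN\log N)$ bits in total, giving the first term.

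Next I would count the split-point refinement phase. As argued in the Gradient Tree Boosting subsection, the split gain $G_{split}$ depends only on the order of feature values, so Party A builds every tree locally from the received ordinals and tree construction itself incurs no communication. The only exchange occurs when Party A recovers the desensitized value at each internal node: Party A sends the node's ordinal number at a cost of $O(\log N)$ bits, and Party B returns the desensitized value drawn from $\mathbb{D}_\bot$ at a cost of $O(\log|\mathbb{D}_\bot|)$ bits. A tree with $L$ layers of split nodes contains $\sum_{i=0}^{L-1}2^i=2^L-1$ internal nodes, and there are $T$ trees, so this phase transmits $O((2^L-1)T(\log N+\log|\mathbb{D}_\bot|))$ bits, the second term. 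Adding the two phases yields the claimed bound.

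The main obstacle is not difficulty but careful bookkeeping: one must justify the per-item bit sizes (ordinal indices over $N$ samples take $\log N$ bits, while desensitized values over the mapped domain $\mathbb{D}_\bot$ take $\log|\mathbb{D}_\bot|$ bits) and fix the node-count convention so that a depth-$L$ tree has exactly $2^L-1$ split nodes, consistent with the $(2^L-1)T$ term used in Theorem~\ref{the:computaition_overhead}. A secondary subtlety is verifying that no other workflow step contributes asymptotically—in particular that the mapping in Section~\ref{sec:preprocess} and Party A's local training add nothing to the channel—so that the two identified phases genuinely dominate the total.
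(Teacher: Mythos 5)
Your proof is correct and follows essentially the same decomposition as the paper's: the $O(mrN\log N)$ bits of desensitized ordinals uploaded by Party B before training, zero communication during local tree construction, and the $(2^L-1)T$ per-node exchange of ordinals ($\log N$ bits from Party A) and desensitized values ($\log|\mathbb{D}_\bot|$ bits from Party B) after training. Your additional bookkeeping on per-item bit widths and the node-count convention only makes explicit what the paper's proof leaves implicit.
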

\begin{proof}
There are $O(mrN\log N)$ bits desensitized values sent from Party B to Party A before the training.
No information exchange is required during the training.
Party A sends at most $O((2^L-1)T\log N)$ bits to Party B and receives $(2^L-1)T\log |\mathbb{D}_\bot|$ to replace the tree nodes at the end of the training.
\end{proof}
}
\section{Experimental Evaluation}
In this section, we empirically evaluate the performance of OpBoost.
In experiments, we measure 
(1) the order preservation of a numerical dataset after desensitizing by the proposed order-preserving desensitization algorithms, i.e., how much relative order information between values is preserved, 
the impact of the ratio of privacy budget parameters on the order preservation of desensitized results.
\revision{
(2) the accuracy of the proposed order-preserving desensitization algorithms for order-dependent statistical applications,
i.e., how do they compare with the state-of-the-art LDP-based range query method in accuracy.
}
(3) the performance of decision tree models trained by OpBoost, i.e., how much does desensitization of features' values affect the accuracy of trained models;
\revision{What is the computation and communication overhead of OpBoost during the training.}
Towards these goals, we run the GBDT and XGBoost training algorithm in OpBoost for both classification and regression tasks.
Following the general settings in tree boosting algorithms, we let $T=80$ (number of trees), $\eta=0.1$ (learning rate), and $L=3$ (tree layers) throughout the experiment.

\subsection{Setup}
\noindent\textbf{Datasets.}
\ifshaphered
\emph{(Reviewer1:D2.b)}
\else
\fi
\revision{
  We conduct range query evaluation using a real-world dataset \emph{Salaries} \cite{salaries-web} and a synthetic dataset.
  \emph{Salaries} \cite{salaries-web} is also considered in our competitor AHEAD \cite{du2021ahead}, which contains $148,654$ records.
We follow the same practice with AHEAD to map values into the range of $[1, 1024]$ for the fairness.
}

We conduct GBDT and XGBoost on four public datasets and a large-scale industrial dataset, which are listed in Table \ref{tab:dataset}.
For datasets with all samples in one file, we randomly select $80\%$ and $20\%$ samples for training and testing.
\begin{table}[h]
  \small
  \centering
  \setlength{\parskip}{-0.35cm}
  \begin{tabular}{|c|c|c|c|}
    \hline
      Dataset&\#Instances&\#Features(N/E)&Tasks\\
    \hline
      \emph{Adult}\cite{adult-web}&32.6k&4/10&2-Cls.\\
    \hline
      \emph{Pen-digits}\cite{pendigits-web}&11k&16/0&M-Cls.\\
    \hline
      \emph{Powerplant}\cite{plant-web}&9.5k&4/0&Reg.\\
    \hline
      \emph{CASP}\cite{casp-web}&45.7k&9/0&Reg.\\
    \hline
    \emph{Industrial}&293.6k&38/262&Reg.\\
    \hline
  \end{tabular}
  \setlength{\abovecaptionskip}{0ex} 
  \setlength{\belowcaptionskip}{0ex}
  \caption{Description of datasets, where N and E refer to Numeric and Enumeration, respectively.}
  \label{tab:dataset}
\end{table}

\noindent\textbf{Metrics.}
The metrics we use are as follows.

\textbf{\it Weighted-Kendall.}
We use weighted-Kendall as the metric to evaluate the order preservation of the desensitized values.
The formal definition of weighted-Kendall is given by Vigna \cite{vigna2015weighted}.
\begin{definition}
\label{def:weighted-kendall} (\emph{Weighted-Kendall}).
For two real-valued vectors $r$ and $s$, weighted-kendall $\tau_w(r,s)$ is defined as
\begin{align*}
  \frac{\langle r,s\rangle_w}{\sqrt{\langle r,r\rangle_w}\sqrt{\langle s,s\rangle_w}}, where
  \ sgn(x):=
  \begin{cases}
    1\ \ &if\ x>0;\\
    0\ \ &if\ x=0;\\
    -1\ \ &if\ x<0.\\
  \end{cases}
\end{align*}
$\langle r,s\rangle_w=\sum_{i<j}sgn(r_i-r_j)sgn(s_i-s_j)w(i,j)$,
\end{definition}
\noindent$\tau_w(r,s)=1$ means that the vector is strictly order-preserving, while $\tau_w(r,s)=-1$ means that the vector is completely reversed.
In experiments, we consider the vector $r$ as the raw sensitive values and the vector $s$ as the desensitized values.
We define the weight function $w(i,j)$ as the distance of ordinal numbers between $r_i$ and $r_j$ after sorting the vector $r$.

\textbf{\it Accuracy.}
We evaluate the accuracy of the prediction results of the model for \emph{classification} as following
\[\setlength\abovedisplayskip{0.5ex}
\setlength\belowdisplayskip{0.5ex}
Accuracy=\frac{\#Correct\ Predictions}{\#Correct\ Predictions+\#Wrong\ Predictions}\%.\]

\textbf{\it Mean Square Error.}
We use MSE as the metric to evaluate the statistical results of range query and the prediction results of the model for \emph{regression}.
For $n$ testing samples, we calculate the squared difference between each prediction result $\tilde y_i$ from OpBoost and the corresponding results $y_i$ from model without privacy protection.
\[\setlength\abovedisplayskip{0.5ex}
\setlength\belowdisplayskip{0.5ex}
MSE=\frac{1}{n}\sum_{i\in [n]}(\tilde y_i-y_i)^2.\]
\revision{\noindent The results of range query tasks are averaged with 100 repeats, and the results of other tasks are all averaged with 10 repeats.}

\noindent\textbf{Environment.}
We do our experiments on a single Intel Core i9-9900K with \revision{$3.6$}GHz and $4 \times 32$GB RAM, running Ubuntu 20.04.2 LTS.
\revision{We execute our protocols on two progresses, one for Party A and the other for Party B. 
The network connection between the two progresses are built via the local network. We emulate the LAN network setting with latency $0.02$ms and bandwidth $10$Gbps using the Linux \textsf{tc} command. 
We use an asynchronous event-driven network application framework \textsf{Netty} to maintain the network connection, and use the well-known tool \textsf{Protocol Buffers} for data serialization and deserialization.
Our schemes are implemented in Java with the multi-thread support by the Fork-Join concurrency technique. While the source code is based on Java 8, we run experiments on Java HotSpot(TM) with higher version 17.0.2 to have better performance reports.}
\begin{figure}[t]
  \centering
  \includegraphics[width=0.48\textwidth]{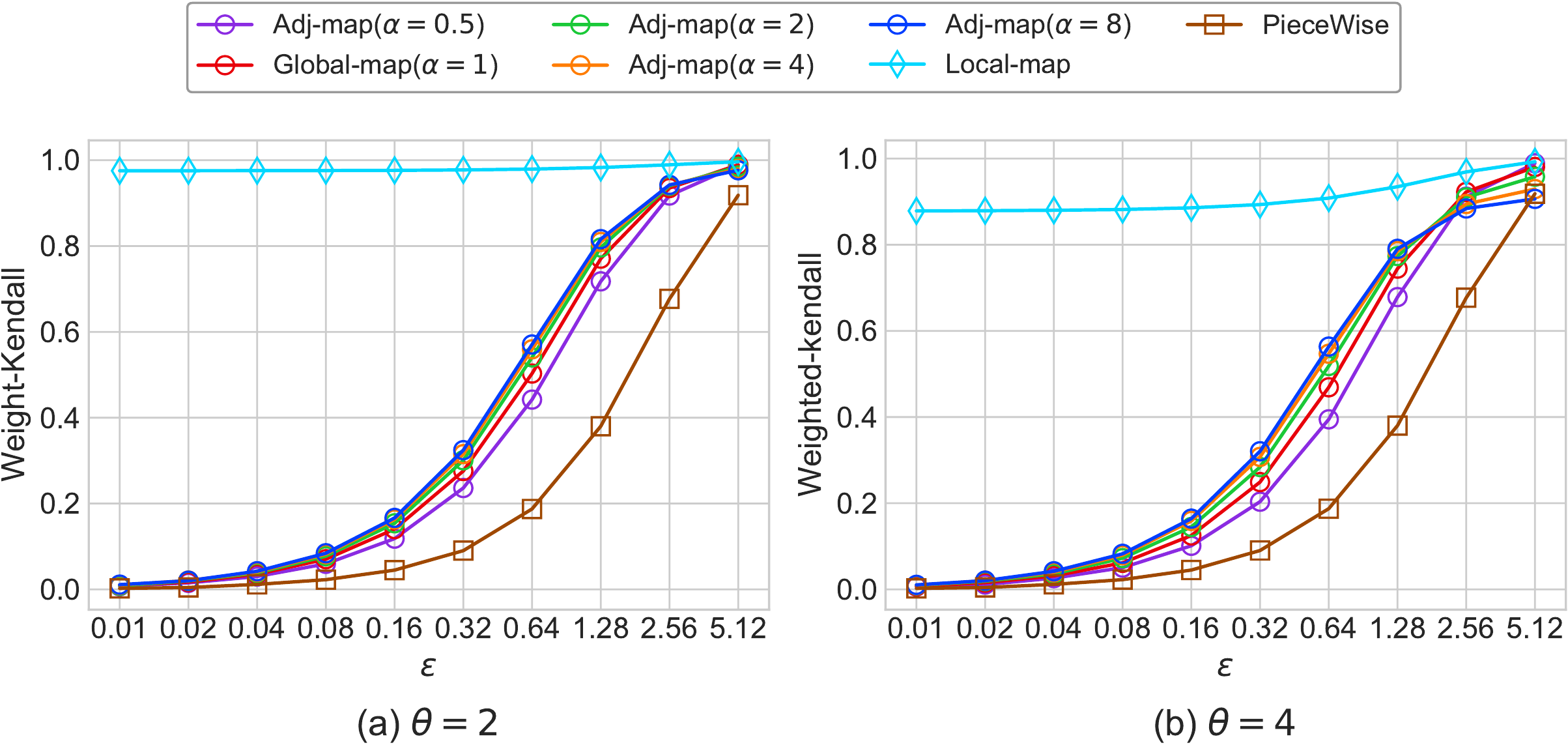}
  \setlength{\belowcaptionskip}{0ex}
  \caption{Weighted-Kendall on \emph{age} of Adult Dataset Containing $32.6k$ Age Values.}
  \label{fig:adult_kendall}
\end{figure}

\begin{figure}[t]
  \centering
  \includegraphics[width=0.48\textwidth]{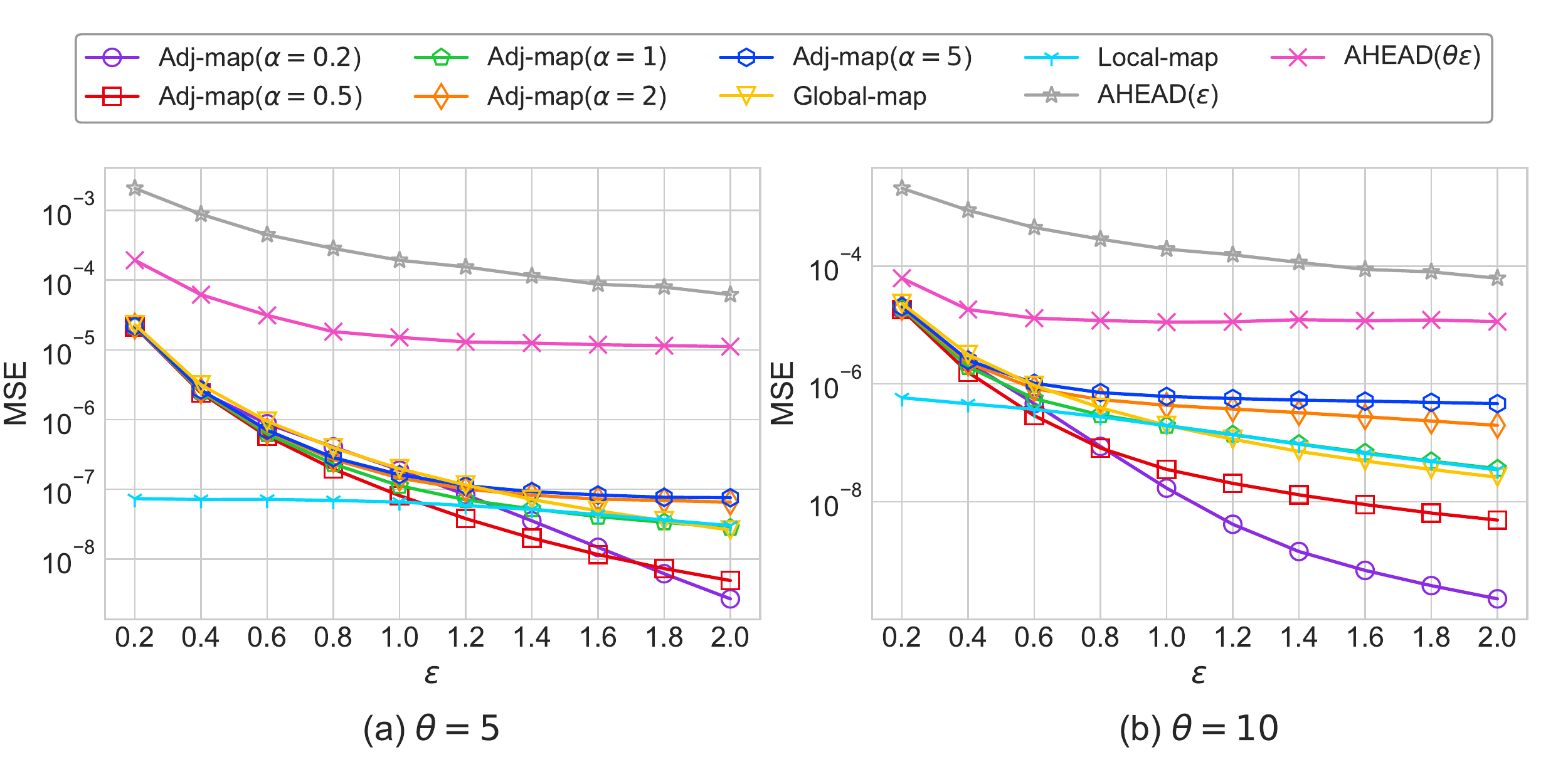}
  \setlength{\belowcaptionskip}{0ex}
  \caption{The MSE of Range Query on Salaries Dataset.}
  \label{fig:rq_salaries}
\end{figure}

\subsection{Experimental Details}
\revision{\noindent\textbf{Range Query.}
Our algorithms are not optimized by any data structures, while our competitor AHEAD uses a complex tree structure to improve the utility.
We randomly generate $10k$ queries in the value range, and calculate the MSE of the frequency of real values and desensitized values falling within the query ranges.
}

\noindent\textbf{GBDT and XGBoost.}
All experiments are conducted with two parties' participation (any number of parties is supported), Party A cooperates with Party B.
We assign the enumeration features and label to Party A and distribute all the numerical features to Party B.
We preprocess all numerical features with different ranges into discrete values in the same range to facilitate setting privacy parameters.
Except for the range query tasks, all preprocessed values are mapped to the range of $[1,10]$.
We use the following libraries in our implementations.

\noindent\emph{GBDT.}
We use the codebase in \textsf{Smile}\cite{smile-web} to implement our federated GBDT training. 
\textsf{Smile} provides data abstraction `DataFrame' to allow us easily read data from files, add noises, encode/decode the randomized data for communication, and do the training. 

\noindent\emph{XGBoost}. 
We also do the implementation based on the well-known XGBoost library\cite{xgboost-web}.
We use the XGBoost JVM package to invoke XGBoost from Java. 
XGBoost model modification is implemented using \textsf{xgboost-predictor}\cite{xgboost-predictor-web}.
We adjust the code `Node' in `RegTreeImpl.java' to allow split condition replacement.


\vspace{-1em}
\subsection{Experimental Results}
\ifshaphered
\emph{(Reviewer1:D5)}
\else
\fi
\revision{
We evaluate the proposed algorithms from three perspectives, including 1) the order preservation capability under the given privacy requirements, 2) the utility for range query tasks, and 3) the utility for tree boosting algorithms. 
We not only compare all the proposed algorithms under the different parameter settings, but also compare them with the state-of-the-art LDP mechanism in the corresponding tasks.
Since the privacy definitions are different, it is impossible to compare the LDP algorithms and the proposed dLDP algorithms under the same privacy guarantee, which are the same case for the existing dLDP-related works \cite{wang2017local, he2014blowfish, alvim2018local}. 
We unify Global-map and Adj-map under the same $\epsilon$-dLDP privacy guarantee.
Specifically, we correspondingly set $\epsilon_{ner}$ to $\epsilon/(\alpha+\theta/|\mathbb{D}_\bot|)$ when we increase $\epsilon_{prt}$ to $\alpha\theta\epsilon_{ner}$ to ensure that Adj-map algorithms with different parameters are all provide $\epsilon$-dLDP privacy guarantee.
Moreover, Adj-map can be approximated as Global-map when $\alpha=1$.
Since Local-map does not provide privacy guarantee for values located in different partitions, it only provides $\epsilon$-dLDP privacy guarantee for values in the same partitions.
}
\begin{figure}[t]
  \centering
  \includegraphics[width=0.48\textwidth]{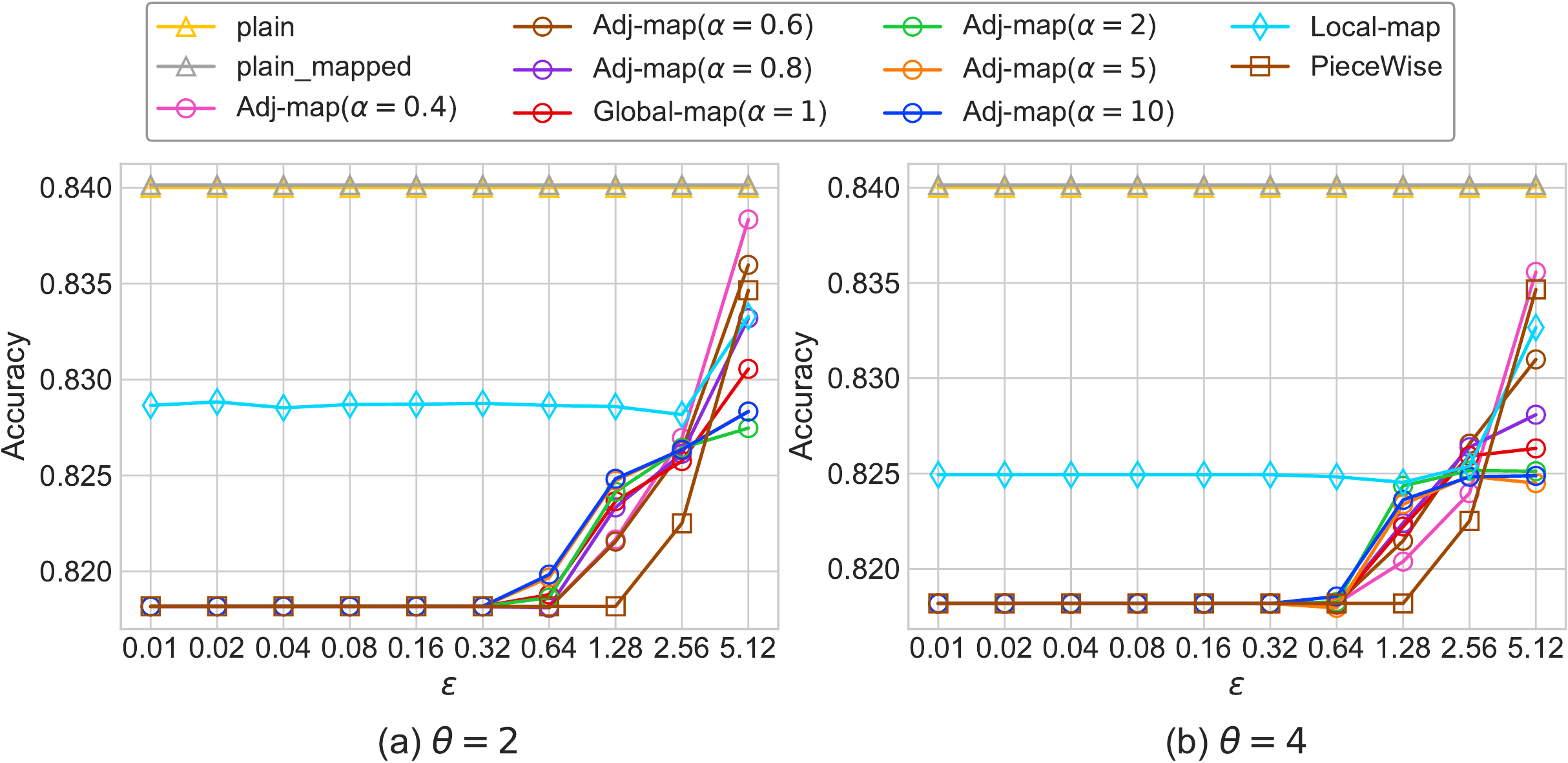}
  \setlength{\belowcaptionskip}{0ex}
  \caption{Prediction Precision of GBDT Models for Classification Trained on Adult Dataset.}
  \label{fig:adult_class}
\end{figure}

\begin{figure}[t]
  \centering
  \includegraphics[width=0.48\textwidth]{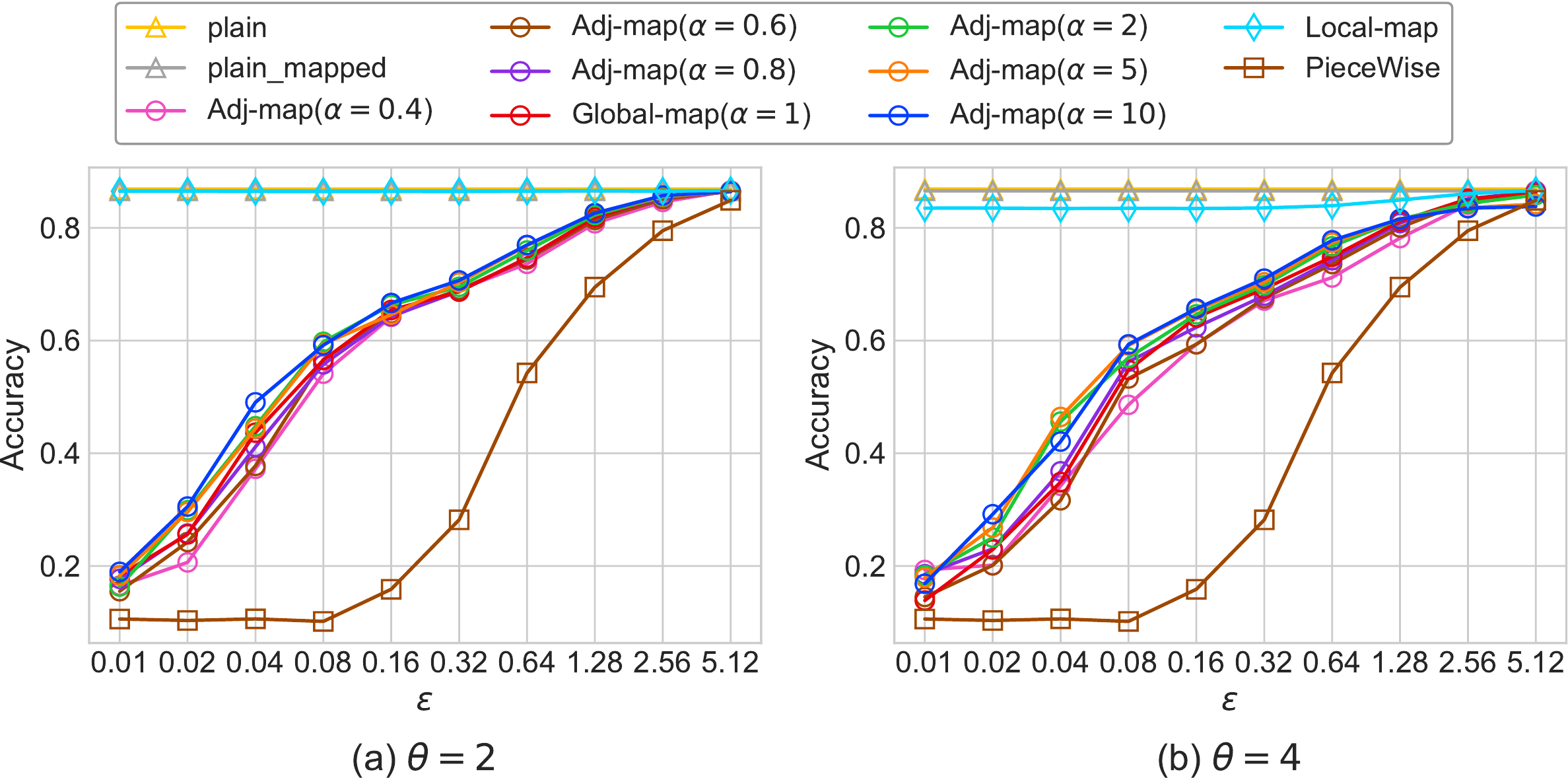}
  \setlength{\belowcaptionskip}{0ex}
  \caption{Prediction Precision of GBDT Models for Classification Trained on Pen-digits Dataset.}
  \label{fig:pendigits_class}
\end{figure}
\noindent\textbf{Order Preservation of Desensitized Values.}
We first evaluate the order preservation capability of the proposed order-preserving desensitization algorithms.
Although we evaluate the order preservation of a pair of values separated by different distances after desensitization in the previous section, it's not enough to reflect the overall order preservation of the entire desensitized value set.
Because the order-dependent applications generally rely on the order of the entire value set, and the farther away the values are out of order, the greater the impact on the results.
We selected values of a feature \emph{age} from the Adult dataset, and generate a uniformly distributed dataset with $10k$ values to evaluate the desensitized value sets by calculating the weighted-Kendall.
Note that the reason why a uniformly distributed value set is generated here is to facilitate the comparison of the performance of different algorithms on order preservation.
Since most of the value pairs in the unevenly distributed value set are not easily out of order after desensitization, it cannot reflect the improvement of the order preservation brought by optimizing desensitized value distribution. 

\begin{figure}[t]
  \centering
  \includegraphics[width=0.48\textwidth]{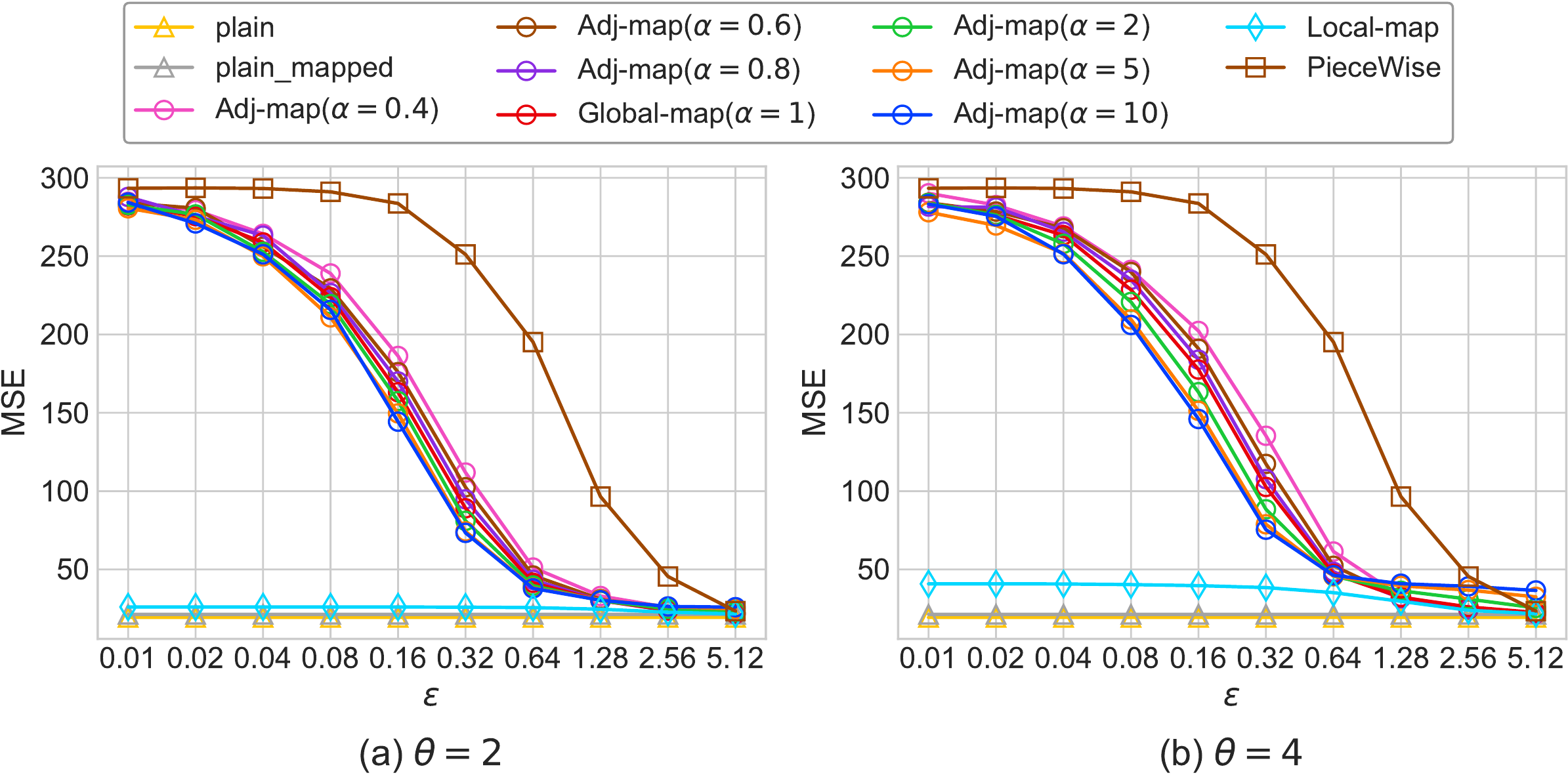}
  \setlength{\belowcaptionskip}{0ex}
  \caption{Prediction MSE of GBDT Models for Regression Trained on Powerplant Dataset.}
  \label{fig:plant_regress}
\end{figure}


\begin{table}[t]
  \small
  \centering
  \begin{tabular}{|c|c|c|c|c|c|}
      \hline
      \multirow{2}*{$\epsilon$}&\multirow{2}*{Method}&\multirow{2}*{MSE}&\multirow{2}*{Time(s)}&\multicolumn{2}{c|}{Communication(B)}\\
      \cline{5-6}
      & & & &Party A&Party B\\
      \hline
      \multirow{5}*{$0.64$}&{Local-map}&0.957$\times$&64.929&2961&202M\\
      \cline{2-2}\cline{3-6}
      &Adj-map($\alpha=0.5$)&5.341$\times$&99.183&2921&202M\\
      \cline{2-2}\cline{3-6}
      &Global-map($\alpha=1$)&5.273$\times$&98.448&2981&202M\\
      \cline{2-2}\cline{3-6}
      &Adj-map($\alpha=2$)&5.122$\times$&97.956&3113&202M\\
      \cline{2-2}\cline{3-6}
      &Piecewise&5.773$\times$&115.771&3097&202M\\
      \hline
      \multirow{5}*{$1.28$}&{Local-map}&0.951$\times$&64.207&2945&202M\\
      \cline{2-2}\cline{3-6}
      &Adj-map($\alpha=0.5$)&4.803$\times$&92.915&3137&202M\\
      \cline{2-2}\cline{3-6}
      &Global-map($\alpha=1$)&4.792$\times$&92.108&3413&202M\\
      \cline{2-2}\cline{3-6}
      &Adj-map($\alpha=2$)&4.295$\times$&91.495&3373&202M\\
      \cline{2-2}\cline{3-6}
      &Piecewise&6.150$\times$&110.488&3717&202M\\
      \hline
      \multirow{5}*{$2.56$}&{Local-map}&0.952$\times$&63.300&2901&202M\\
      \cline{2-2}\cline{3-6}
      &Adj-map($\alpha=0.5$)&1.063$\times$&86.247&3681&202M\\
      \cline{2-2}\cline{3-6}
      &Global-map($\alpha=1$)&1.002$\times$&86.278&3473&202M\\
      \cline{2-2}\cline{3-6}
      &Adj-map($\alpha=2$)&1.015$\times$&86.387&3525&202M\\
      \cline{2-2}\cline{3-6}
      &Piecewise&6.150$\times$&103.826&3053&202M\\
      \hline
  \end{tabular}
  \setlength{\abovecaptionskip}{0ex}
  \setlength{\belowcaptionskip}{0ex}
  \caption{Performance of XGBoost models trained by OpBoost on industrial dataset with $\theta=2$. We show the MSE ratio compared to the plain model.}
  \label{tab:umeng_opxgboost}
\end{table}

\revision{
Figure \ref{fig:adult_kendall} and Figure \ref{fig:syn10_kendall} (deferred to Appendix \ref{app:figure_kendall} due to limited space) show that the order preservation capability of Local-map is always better than that of other algorithms since it ensures that the orders of values in different partitions are strictly preserved.
In addition, we observe that when $\epsilon$ is small, i.e., $\epsilon<2.56$, Adj-map with a larger $\alpha$ can achieve better order preservation, and the opposite when $\epsilon>2.56$.
The reason is that the desensitized values fall far away from the raw values with a high probability when $\epsilon$ is relatively small.
A large $\alpha$ allocates more privacy budget to $\epsilon_{prt}$ so that the desensitized value is preserved in the partition where the raw value is located with a greater probability.
When $\epsilon$ is relatively large, the desensitized values are preserved in the raw values' partition with a high probability.
At this time, allocating more privacy budget to $\epsilon_{ner}$ to increase the order-preserving probability between values in the same partition is more helpful.
All the proposed algorithms are superior to the Piecewise mechanism \cite{wang2019collecting}, which is a widely accepted LDP mechanism for randomizing numerical values.
}

\ifshaphered
\emph{(Reviewer1:D2.b)}
\else
\fi
\revision{
\noindent\textbf{Utility Comparison for Range Query Tasks.}
Figure \ref{fig:rq_salaries} and Figure \ref{fig:rq_uniform} (deferred to Appendix \ref{app:figure_range_query} due to limited space) show the results of the proposed algorithms are applied to the range query tasks.
We compare with the state-of-the-art LDP-based algorithm AHEAD to demonstrate the utility improvement of the proposed algorithms on order-dependent statistical tasks by relaxing the privacy definition.
We use the same real-world dataset and data mapping method as AHEAD for the fairness of the comparison.
Consistent with the results of weighted-Kendall, $\alpha<1$ can get a smaller error than $\alpha>1$ when $\epsilon$ is large. 
This advantage is obvious when the length of the partition is large.
Besides, we observe that $\alpha < 1$ dominates earlier when $\theta=10$ than when $\theta=5$.
The reason is that when the partition is relatively large, it is difficult for the randomly generated query ranges to cover the complete partitions, which causes the order preservation of the values in the partition to have a greater impact on the results.
}

\noindent\textbf{Performance of OpBoost for Tree Boosting Tasks.}
We then evaluate the performance of OpBoost for tree boosting tasks.
We run both GBDT and XGBoost in OpBoost utilizing different order-preserving desensitization algorithms.
Binary classification tasks, multi-classification tasks, and regression tasks are all covered.
As before, the Piecewise mechanism is also conducted for comparison.
Considering that data pre-processing can also cause some loss of prediction accuracy to the trained model, we also compare the prediction accuracy of models trained by the pre-processed plaintext dataset and the plaintext dataset without any processing.

\ifshaphered
\emph{(Reviewer2:D1)}
\else
\fi
\revision{
\textbf{\it Prediction Accuracy of GBDT and XGBoost.}
Figure \ref{fig:adult_class} shows the results of GBDT running in OpBoost for binary classification task on Adult datasets.
Since more than $75\%$ samples are labeled \emph{Negative}, thus high prediction accuracy can be obtained as long as the model tends to output \emph{Negative}.
Therefore, the results in Figure \ref{fig:adult_class} show that adding noise to the dataset does not significantly reduce the prediction accuracy of the model.
Compared with other algorithms, Local-map always maintains obvious advantages when $\epsilon$ is small, i.e., $\epsilon<2.56$.
We think it is because the values desensitized by other algorithms are more likely to fall far away from raw values when $\epsilon$ is small, while Local-map keeps the desensitized values within the partitions where the raw value are located.
The desensitized values are kept in the partitions of raw values with a high probability when $\epsilon$ is large, so Adj-map that allocates more privacy budget to the $\epsilon_{ner}$ can be more dominant.
This also explains why a large $\alpha$ is better for Adj-map when $\epsilon<2.56$, and a small $\alpha$ is better when $\epsilon>2.56$.
Besides, we observe that the effect of $\theta$ on the results is not obvious, and $\theta$ and $\alpha$ both affect $\epsilon_{ner}$ and $\epsilon_{prt}$ in the same direction according to Equation \ref{equ:fit_base}.
Similar observations are also obtained from the results of the multi-classification and regression tasks.
Figure \ref{fig:pendigits_class} shows the results of GBDT running in OpBoost for multi-classification task on Pen-digits dataset.
Figure \ref{fig:plant_regress} and Figure \ref{fig:casp_regress} (deferred to Appendix \ref{app:figure_casp} due to limited space) show the results of regression task on Powerplant dataset and CASP dataset.
All proposed order-preserving desensitization algorithms outperform the Piecewise mechanism in accuracy in all tasks.
}

\ifshaphered
\emph{(Reviewer1:D2.a,Reviewer2:D1,Reviewer3:D1)}
\else
\fi
\revision{
We also conduct all tasks on XGBoost with the same datasets as GBDT.
Due to limited space, we defer the results to Appendix \ref{app:xgboost_result}.
Besides, we test OpBoost with XGBoost on an industrial large-scale regression dataset.
Different from other previous benchmark datasets, this industrial dataset contains a large number of categorical features with no order between values.
We use one-hot encoding to encode all categorical values and then randomize them with Unary Encoding (UE) \cite{wang2017locally}.
The results are summarized in Table \ref{tab:umeng_opxgboost}.
We show the ratio of the MSE of models trained with the proposed algorithms and the model trained by the unprocessed plaintext dataset when $\theta=2$.
The observations of results are all consistent with GBDT.
Note that the prediction accuracy of the model trained by dataset desensitized by Local-map is even higher than that of the plain model.
Such phenomenon is also observed in other tree boosting frameworks \cite{li2020practical, li2020privacy, tian2020federboost}.
A possible reason is that the desensitization introduces a certain amount of randomness when $\epsilon$ is large, which functions as a source of regularization and eventually improves the model generalization.
Some existing DP works study this, although for different DP mechanisms and ML models \cite{yeom2018privacy, dwork2015preserving, khatri2017preventing}.
Besides, we also show in Theorem \ref{the:max_gain} that the maximum gain of the desensitized feature is not lower than the raw feature with a certain probability. 
}

\ifshaphered
\emph{(Reviewer3:D2)}
\else
\fi
\revision{
\textbf{\it Communication and Computation overhead of GBDT and XGBoost.}
We record the training time and communication overhead of running GBDT and XGBoost in OpBoost on each dataset.
The results of large-scale dataset are shown in Table \ref{tab:umeng_opxgboost}, and the results of other datasets are shown in Appendix \ref{app:comun_comp}.
We find that the training time increases with the decrease of $\epsilon$, because smaller $\epsilon$ incurs more times of resampling the bounded discrete Laplacian noise.
To verify such a trend, we further compare the training time of sampling with exponential mechanism and bounded discrete Laplace in the partitions when $\theta$ is small.
Moreover, we observe that as $\epsilon$ decreases, the number of split points that Party A needs to request from Party B decreases.
We think the reason is that desensitizing a feature may reduce the maximum gain of its optimal split point.
Therefore, the model is inclined to find the split point on features on the Party A side when features on Party B are desensitized with a small $\epsilon$.
}

\section{Discussion}
\subsection{Multiple Features Desensitization}
\ifshaphered
\emph{(Reviewer1:D1)}
\else
\fi
\revision{
Although we focus on desensitizing a single numerical feature when presenting the proposed order-preserving desensitization algorithms for easier exposition, 
these algorithms can also be extended to the multiple features case. For one example, we can split the privacy budget among the feature and apply the proposed algorithms for each feature with its share of the privacy budget. In existing LDP studies dealing with multidimensional numerical values, evenly splitting the privacy budget can achieve satisfactory performance \cite{nguyen2016collecting, duchi2018minimax, wang2019collecting, wang2021local, couchot2021random}. For the other example, we can also randomly sample some features and only allocate the privacy budget to the sampled features, while the un-sampled features are not disclosed to the aggregator.
The second exemplary solution can obtain higher utility than the first one \cite{nguyen2016collecting, wang2019collecting, wang2021local}.}

\revision{
After applying the proposed algorithms for each feature, it suffices to apply the composition theorem to ensure that the overall processing for multidimensional values satisfies dLDP with a given privacy budget. Although dLDP and partition-dLDP relaxes the definition of LDP based on the distance, they still satisfy the sequential composition theorem \cite{dwork2014algorithmic}, which are proved in Appendix \ref{app:composition}.
}

\subsection{Setting of privacy parameters }
\ifshaphered
\emph{(Reviewer1:D4)}
\else
\fi
\revision{
In summary, we propose three order-preserving desensitization algorithms: Global-map, Local-map, and Adj-map.
Local-map and Adj-map support setting different $\alpha$ and $\theta$ when given a privacy budget $\epsilon$.
All three algorithms are contained in OpBoost.
Next, we give some guidelines on choosing these algorithms when using OpBoost in actual training tasks.
}

\revision{
(1) \emph{Utility prioritized users.} For all tasks, Local-map is always the best choice for users with less concern about the indistinguishability of values in different partitions.
(2) \emph{Privacy prioritized users.}
The users without extra domain knowledge and who do not pursue high accuracy of the trained model could directly choose Global-map to avoid considering parameters' settings other than $\epsilon$.
For users with extra domain knowledge and higher utility pursuit, they can choose Adj-map and set $\alpha$ and $\theta$ to obtain higher model accuracy than Global-map.
The length of the partition $\theta$ is usually determined by the semantic information of the features in practice.
We show in subsection \ref{sub:error_split} that the distribution and density of feature values and the values of labels all affect the maximum gain when finding the split point.
Besides, the training dataset usually contains multiple features, which together affect the trained model.
The distribution and density of each feature are different, and the influence on the trained model is also different.
Therefore, it's difficult to theoretically give an optimal parameter setting. 
But we can empirically give some suggestions on settings of $\alpha$ based on our experimental evaluation.
In all our experimental datasets, $\alpha<1$ is dominant only when $\epsilon$ is large, i.e., $\epsilon>2.56$.
As $\epsilon$ should usually be set less than $1$ or even $0.1$ in practical applications, users can always set $\alpha>1$.
Besides, the results show that when $\alpha=10$, there is no obvious improvement, so it is not necessary to set $\alpha$ too large.
}

\section{Related Works}
The extended version of related works is shown in Appendix \ref{app:related}.

\noindent \textbf{Distance-based LDP (dLDP).} The traditional DP mechanism always considers the worst case, which leads to adding excessive noise for normal cases.
Kifer et al. \cite{kifer2012rigorous} propose a semantic framework called "Pufferfish", which can generate customized privacy definitions in different scenarios. 
Geng et al. \cite{geng2015staircase} propose staircase mechanism to guarantee diverse levels of differential privacy for different instances.
\ifshaphered
\emph{(Reviewer3:D3)}
\else
\fi
\revision{The formal dLDP definition is first proposed and applied in Location-Based Systems to guarantee location privacy within a specific distance \cite{andres2013geo, xiao2015protecting}. }
Following the intuition of $d_\chi$-privacy in \cite{chatzikokolakis2013broadening}, Alvim et al. \cite{alvim2018local} define Metric-LDP, a variant of dLDP.
Afterward, dLDP shows its broad applicability in vast scenarios \cite{xiang2020linear, gursoy2019secure, chatzikokolakis2017efficient, shokri2014privacy, wang2017local, borgs2018revealing}.
However, the potential capabilities of dLDP in ordinal information preserving remain undiscussed.

\noindent \textbf{Privacy-Preserving Tree Boosting on Vertical FL.} 
Traditional tree boosting algorithms have drawn privacy concerns for their direct access to raw datasets.
In SecureBoost proposed by Cheng et al. \cite{cheng2021secureboost}, the parties exchange gradients and hessians encrypted with HE, which is extremely time-consuming.
Although the MPC-based schemes\cite{abspoel2021secure, wu2020privacy} avoid complex cryptographic operations, the massive communication overhead caused by MPC is unbearable.
To solve this problem, Tian et al. \cite{tian2020federboost} propose a scheme based on Local Differential Privacy (LDP).
Since the randomness introduced by LDP, the accuracy of the trained model is not satisfying.

\section{Conclusion}
In this paper, a novel framework called OpBoost is proposed for privacy-preserving vertical federated tree boosting. 
The privacy notion of dLDP is firstly applied in vertical federated tree boosting tasks.
It is shown that the prediction accuracy of the model trained by OpBoost is much higher than that of the LDP-based scheme.
Meanwhile, the computational and communication overheads of OpBoost are significantly lower than cryptography-based schemes.
Specifically, we optimize the existing dLDP definition and instantiate three order-preserving desensitization algorithms for OpBoost.
We also study and apply (bounded) discrete Laplace distribution as an alternative sampling distribution, which further reduces the computational overhead of the proposed algorithms.
Finally, we conduct a comprehensive evaluation to show the effectiveness and efficiency of OpBoost.

\balance
\bibliographystyle{ACM-Reference-Format}
\bibliography{sample}

\newpage
\begin{appendix}

\section{Proofs of Theorem}
\subsection{Proof of Theorem \ref{the:base}}
\label{app:base}
\begin{proof}
    Assume that $x, x'\in{\mathbb{D}_\bot}$ are two sensitive values. 
    And w.l.o.g, we let $x'-x=t$.
    Then the probability ratio of $x$ and $x'$ being randomized to the same output value is
    \begin{align*}
    \frac{Pr[O=o|x]}{Pr[O=o|x']}&=\frac{e^{-|x-o|\cdot\epsilon/2}}{\sum_{j\in\mathbb{D}_{\bot}}e^{-|x-j|\cdot\epsilon/2}}\cdot \frac{\sum_{j\in\mathbb{D}_{\bot}}e^{-|x'-j|\cdot\epsilon/2}}{e^{-|x'-o|\cdot\epsilon/2}}\\
    &=\frac{e^{-|x-o|\cdot\epsilon/2}}{e^{-|x+t-o|\cdot\epsilon/2}}\cdot \frac{\sum_{j\in\mathbb{D}_{\bot}}e^{-|x+t-j|\cdot\epsilon/2}}{\sum_{j\in\mathbb{D}_{\bot}}e^{-|x-j|\cdot\epsilon/2}}\\
    &\le e^{t\cdot\epsilon/2}\cdot e^{t\cdot\epsilon/2}=e^{t\cdot\epsilon}.
    \end{align*}
 \end{proof}

\subsection{Proof of Theorem \ref{the:baseuti}}
\label{app:uti-global-map}
\begin{proof}
  Let $x_1,x_2\in\mathbb{D}_\bot$ be two sensitive values, and $x_1<x_2$, $x_2-x_1=t$.
  We calculate the probability that the output value $o_2$ of $x_2$ is greater than the output value $o_1$ of $x_1$.
  We have
  \begin{align*}
    Pr[o_2>o_1]&=\sum_{o_2\in[L,R]}\sum_{o_1\in[L,o_2)} Pr[O=o_1|x_1]\cdot Pr[O=o_2|x_2]\\
    &=\sum_{o_2\in (L,x_1]}\sum_{o_1\in [L,o_2)}Pr[O=o_1|x_1]\cdot Pr[O=o_2|x_2]\\
    &+\sum_{o_2\in (x_1,x_2]}\sum_{o_1\in [L,x_1)}Pr[O=o_1|x_1]\cdot Pr[O=o_2|x_2]\\
    &+\sum_{o_2\in (x_1,x_2]}\sum_{o_1\in [x_1,o_2)}Pr[O=o_1|x_1]\cdot Pr[O=o_2|x_2]\\
    &+\sum_{o_2\in (x_2,R]}\sum_{o_1\in [L,x_1)}Pr[O=o_1|x_1]\cdot Pr[O=o_2|x_2]\\
    &+\sum_{o_2\in (x_2,R]}\sum_{o_1\in [x_1,o_2)}Pr[O=o_1|x_1]\cdot Pr[O=o_2|x_2]\\
  \end{align*}
  For brevity, we denote $q=e^{-\epsilon/2}$, $D=((1+q-q^{x_1-L+1}-q^{R-x_1+1})\cdot(1+q-q^{x_2-L+1}-q^{R-x_2+1}))^{-1}$.
  \begin{align*}
    &\sum_{o_2\in (L,x_1]}\sum_{o_1\in [L,o_2)}Pr[O=o_1|x_1]\cdot Pr[O=o_2|x_2]\\
    &=\sum_{o_2\in (L,x_1]}\sum_{o_1\in [L,o_2)}\frac{q^{|o_1-x_1|}}{\sum_{v_1\in [L,R]}q^{|v_1-x_1|}}\cdot \frac{q^{|o_2-x_2|}}{\sum_{v_2\in [L,R]}q^{|v_2-x_2|}}\\
    &=D\cdot (1-q)^2\cdot \sum_{o_2\in (L,x_1]}\sum_{o_1\in [L,o_2)}q^{x_2-o_2}\cdot q^{x_1-o_1}\\
    &=D\cdot(q^{x_1+x_2-2L+1}-q^{x_2-L+1}+\frac{q^{x_2-x_1+1}-q^{x_1+x_2-2L+1}}{1+q})\\
    &=D\cdot(\frac{q^{x_1+x_2-2L+2}+q^{x_2-x_1+1}}{1+q}-q^{x_2-L+1})
  \end{align*}

  Similarly, we have
  \begin{align*}
      &\sum_{o_2\in (x_1,x_2]}\sum_{o_1\in [L,x_1)}Pr[O=o_1|x_1]\cdot Pr[O=o_2|x_2]\\
      &=D\cdot(q-q^{x_1-L+1}-q^{x_2-x_1+1}+q^{x_2-L+1})\\
  \end{align*}
  \begin{align*}
      &\sum_{o_2\in (x_1,x_2]}\sum_{o_1\in [x_1,o_2)}Pr[O=o_1|x_1]\cdot Pr[O=o_2|x_2]\\
      &=D\cdot(1-((1-q)(x_2-x_1)+1)\cdot q^{x_2-x_1})\\
  \end{align*}
  \begin{align*}
      &\sum_{o_2\in (x_2,R]}\sum_{o_1\in [L,x_1)}Pr[O=o_1|x_1]\cdot Pr[O=o_2|x_2]\\
      &=D\cdot(q^2-q^{x_1-L+2}-q^{R-x_2+2}+q^{R-x_2+x_1-L+2})\\
  \end{align*}
  \begin{align*}
      &\sum_{o_2\in (x_2,R]}\sum_{o_1\in [x_1,o_2)}Pr[O=o_1|x_1]\cdot Pr[O=o_2|x_2]\\
      &=D\cdot(q-q^{R-x_2+1}-\frac{q^{x_2-x_1+2}+q^{2R-x1-x2+2}}{1+q})
  \end{align*}
  
  By summation, we have
  \begin{align*}
    &Pr[o_2>o_1] = D\cdot((1+q)^2+\frac{q^{x_1+x_2-2L+2}+q^{2R-x_1-x_2+2}}{1+q}+\\
    &\qquad\qquad\qquad\qquad q^{R-L-t+2}-q(q+1)(q^{x_1-L}-q^{R-x_2}))
  \end{align*}
  
  $Pr[o_2>o_1]$ can be regard as a function of $x_1$, and $Pr[o_2>o_1]$ minimizes at $x_1=L$ or $x_1=R-t$.
  Without loss of generality, we let $x_1=L$
  \begin{align*}
    &Pr[o_2>o_1]\\
    &\ge \frac{(1+q)^2+\frac{1}{1+q}(q^{t+2}+q^{2R-2L-t+2})+q^{R-L-t+2}}{(1+q-q-q^{R-L+1})\cdot(1+q-q^{t+1}-q^{R-L-t+1})}-\\
    &\qquad\qquad \frac{q(q+1)(1+q^{R-L-t})+(\frac{2q^2+q+1}{1+q}+(1-q)t)\cdot q^t}{(1+q-q-q^{R-L+1})\cdot(1+q-q^{t+1}-q^{R-L-t+1})}\\
    &=\frac{1+q+\frac{q^{2R-2L-t+2}}{1+q}-q^{R-L-t+1}-(\frac{q^2+q+1}{1+q}+(1-q)t)\cdot q^t}{(1-q^{R-L+1})\cdot(1+q-q^{t+1}-q^{R-L-t+1})}\\
    &\ge\frac{1+q-q^{R-L-t+1}-(\frac{q^2+q+1}{1+q}+(1-q)t)\cdot q^t}{1+q-q^{t+1}-q^{R-L-t+1}}\\
    &=1-\frac{\frac{1}{1+q}+(1-q)\cdot t}{1+q-q^{t+1}-q^{|D|-t}}\cdot q^t
  \end{align*}
  Thus, 
  \begin{align*}
      Pr[o_2>o_1]\ge 1-\frac{(1-q^2)\cdot t+1}{(1+q-q^{t+1}-q^{|\mathbb{D}_{\bot}|-t})(1+q)}\cdot q^t
  \end{align*}
\end{proof}

\subsection{Proof of Theorem \ref{the:ldputi}}
\label{app:uti-grr}
\begin{proof}
      Assume that $x_1, x_2\in{\mathbb{D}_\bot}$ are two sensitive values, $o_1, o_2\in{\mathbb{D}_\bot}$ are their corresponding output. And w.l.o.g, we let $x_2-x_1=t$.
       The perturbation mechanism of GRR is 
       \begin{equation}
           Pr[GRR(x)=o]= \begin{cases}
             p_1=\frac{e^{\epsilon}}{|\mathbb{D}_{\bot}|+e^{\epsilon}-1}, o=x\\
             p_2=\frac{1}{|\mathbb{D}_{\bot}|+e^{\epsilon}-1}, o\neq x\\
           \end{cases}
       \end{equation}
      We have
      \begin{align*}
       &Pr[o_2>o_1|x_2>x_1]=Pr[o_1=x_1 \wedge o_2=x_2 ]
       \\
       &+Pr[o_1=x_1 \wedge o_2>x_1 \wedge o_2 \neq x_2]
       +Pr[o_2=x_2 \wedge o_1<x_2 \wedge o_1 \neq x_1]
       \\
       &+Pr[o_1<o_2 \wedge o_1 \neq x_1 \wedge o_2 \neq x_2]
       \\
       &=p_1^2+p_1p_2(R-x_1-1)+p_1p_2\cdot (x_2-L-1)
       \\
       &+p_2^2\cdot(\frac{1}{2}\cdot|\mathbb{D}_{\bot}|(|\mathbb{D}_{\bot}|-1)-(R-x_1-1)-(x_2-L-1)-1)
       \\
       &=p_1^2+p_1p_2\cdot(|\mathbb{D}_{\bot}|+t-3)+p_2^2\cdot(\frac{1}{2}|\mathbb{D}_{\bot}|(|\mathbb{D}_{\bot}|-3)-t+2)
       \\
       &=p_1^2+p_1p_2\cdot(|\mathbb{D}_{\bot}|-3)+p_2^2\cdot(\frac{1}{2}|\mathbb{D}_{\bot}|(|\mathbb{D}_{\bot}|-3)+2) + p_2(p_1-p_2)t
       \end{align*}
 \end{proof}

\subsection{Proof of Theorem \ref{the:adj-map}}
\label{app:adj_map}
\begin{proof}
   We assume that $x$ and $x'$ are two values with $|x-x'|\le t$.
   The partition to which $x$ belongs is $\mathcal{P}_i$ and $x'$ belongs to partition $\mathcal{P}_j$, where $\mathcal{P}_i$ and $\mathcal{P}_j$ are at most $\lceil\frac{t}{\theta}\rceil$ partitions apart.
   Denote $M(x)$ as Algorithm \ref{alg:rand-map} with input $x$, we first compute the probability ratio of $M(x)$ and $M(x')$ are in the same partition $\mathcal{P}_{\hat m}$ is
	\begin{align*}
		&\frac{Pr[M(x)=\mathcal{P}_{\hat m}|x\in \mathcal{P}_i]}{Pr[M(x')=\mathcal{P}_{\hat m}|x'\in \mathcal{P}_j]}\\
    &=\frac{e^{-|i-\hat m|\cdot\frac{\epsilon_{prt}}{2}}}{\sum_{v\in[k]}e^{-|i-v|\cdot\frac{\epsilon_{prt}}{2}}}\cdot\frac{\sum_{v\in[k]}e^{-|j-v|\cdot\frac{\epsilon_{prt}}{2}}}{e^{-|j-\hat m|\cdot\frac{\epsilon_{prt}}{2}}}\\
    &=e^{\frac{\epsilon_{prt}}{2}\cdot(|j-\hat m|-|i-\hat m|)}\cdot\frac{\sum_{v\in[k]}e^{-|j-v|\cdot\frac{\epsilon_1}{2}}}{\sum_{v\in[k]}e^{-|i-v|\cdot\frac{\epsilon_1}{2}}}\\
    &\le e^{|j-i|\cdot\frac{\epsilon_{prt}}{2}}\cdot e^{|j-i|\cdot\frac{\epsilon_{prt}}{2}}
    \le e^{\lceil\frac{t}{\theta}\rceil\cdot\epsilon_{prt}}.
	\end{align*}
	For $x$ and $x'$ mapped to the same partition, the maximum probability ratio of $x$ and $x'$ being desensitized to the same output value can be obtained when $x$ and $x'$ fall respectively on the left and right sides outside the partition.
  It satisfies
	\begin{align*}
    Pr[O=o|M(x)=&\mathcal{P}_{\hat m}, x\in \mathcal{P}_i]\\
    &\le e^{\theta\cdot\epsilon_{ner}}Pr[O=o|M(x')=\mathcal{P}_{\hat m},x'\in \mathcal{P}_j].
	\end{align*}
    Therefore, we have
	\begin{align*}
		Pr[O=o|x\in \mathcal{P}_i]&=\sum_{\mathcal{P}_{\hat m}\owns o}Pr[M(x)=\mathcal{P}_{\hat m}|x\in \mathcal{P}_i]\cdot\\
    &\qquad \qquad \qquad Pr[O=o|M(x)=\mathcal{P}_{\hat m},x\in \mathcal{P}_i]\\
    &\le \sum_{\mathcal{P}_{\hat m}\owns o}e^{\lceil\frac{t}{\theta}\rceil\epsilon_{prt}}\cdot Pr[M(x')=\mathcal{P}_{\hat m}|x'\in \mathcal{P}_j]\cdot\\
    &\qquad \ \ \ \ \  e^{\theta\epsilon_{ner}}\cdot Pr[O=o|M(x')=\mathcal{P}_{\hat m},x'\in \mathcal{P}_j]\\
		&=e^{\lceil\frac{t}{\theta}\rceil\epsilon_{prt}+\theta\epsilon_{ner}}Pr[O=o|x'\in \mathcal{P}_j].
	\end{align*}
\end{proof}
\subsection{Proof of Theorem \ref{the:randmaputi}}
\label{app:uti-adj-map}
\begin{proof}
  Let $x_1,x_2\in\mathbb{D}$ be two sensitive data points.
  Specifically, $x_1<x_2$, and $t=x_2-x_1$, $T=\lfloor\frac{t}{\theta}\rfloor$.
  We calculate the probability that the output value $o_2$ of $x_2$ is greater than the output value $o_1$ of $x_1$.
  Denote $\hat{\mathcal{P}}_{m(x_1)}$ as the partition that $x_1$ is mapped, $\hat{\mathcal{P}}_{m(x_2)}$ as the partition that $x_2$ is mapped.
  If $\hat{\mathcal{P}}_{m(x_1)}$ and $\hat{\mathcal{P}}_{m(x_2)}$ are two different partitions and $\hat{\mathcal{P}}_{m(x_1)}$ is on the left of $\hat{\mathcal{P}}_{m(x_2)}$, then it has
  \[Pr[o_2>o_1|\hat{\mathcal{P}}_{m(x_2)}>\hat{\mathcal{P}}_{m(x_1)}]=1\]
  According to the result of Theorem \ref{the:baseuti}, we can directly get the probability that $\hat{\mathcal{P}}_{m(x_1)}$ is on the left of $\hat{\mathcal{P}}_{m(x_2)}$ as
  \[Pr[\hat{\mathcal{P}}_{m(x_2)}>\hat{\mathcal{P}}_{m(x_1)}]\ge 1-\frac{(1-q^2)\cdot T+1}{(1+q-q^{T+1}-q^{k-T})(1+q)}\cdot q^T\]
  The probability that $x_1$ and $x_2$ are mapped to the same partition is
  \begin{align*}
      &Pr[\hat{\mathcal{P}}_{m(x_2)}=\hat{\mathcal{P}}_{m(x_1)}]\\
      &=\sum_{\mathcal{P}_o\in [\mathcal{P}_1,\mathcal{P}_k]} Pr[RM(x_1)=\mathcal{P}_o]\cdot Pr[RM(x_2)=\mathcal{P}_o]\\
      &=\sum_{\mathcal{P}_o\in [[\mathcal{P}_1,\mathcal{P}_k]]} \frac{q^{|m(x_1)-o|}}{\sum_{\mathcal{P}_c\in [\mathcal{P}_1,\mathcal{P}_k]}q^{|m(x_1)-c|}}\cdot \frac{q^{|m(x_2)-o|}}{\sum_{\mathcal{P}_c\in [\mathcal{P}_1,\mathcal{P}_k]}q^{|m(x_2)-c|}}\\
  \end{align*}
   By calculating the above probability summation formula, we can get
   \[Pr[\hat{\mathcal{P}}_{m(x_2)}=\hat{\mathcal{P}}_{m(x_1)}]\ge \frac{(1-q)^2(T+1)}{(1+q)^2}\cdot q^T\]
   Since $x_2>x_1$, when $o_1$ and $o_2$ are in the same partition, it has
   \[Pr[o_2>o_1|\hat{\mathcal{P}}_{m(x_2)}=\hat{\mathcal{P}}_{m(x_1)}]>Pr[o_1>o_2|\hat{\mathcal{P}}_{m(x_2)}=\hat{\mathcal{P}}_{m(x_1)}]\]
   So we can approximate the probability $Pr[o_2>o_1,\hat{\mathcal{P}}_{m(x_2)}=\hat{\mathcal{P}}_{m(x_1)}]$ as
   \[Pr[o_2>o_1,\hat{\mathcal{P}}_{m(x_2)}=\hat{\mathcal{P}}_{m(x_1)}]\ge\frac{(1-q)^2(T+1)}{2(1+q)^2}\cdot q^T\]
   Finally, we have
   \begin{align*}
       &Pr[o_2>o_1]\\
       &=Pr[o_2>o_1,\hat{\mathcal{P}}_{m(x_2)}=\hat{\mathcal{P}}_{m(x_1)}]+Pr[o_2>o_1,\hat{\mathcal{P}}_{m(x_2)}>\hat{\mathcal{P}}_{m(x_1)}]\\
       &\ge 1-\frac{((1-q^2)\cdot T+1)\cdot q^T}{(1+q-q^{T+1}-q^{k-T})(1+q)}+\frac{(1-q)^2(T+1)\cdot q^T}{2(1+q)^2}
   \end{align*}
\end{proof}

\subsection{Proof of Theorem \ref{the:pribounded-laplace}}
\label{app:pribounded-laplace}
\begin{proof}
    Let $v_1$ and $v_2$ are two values with $v_1-v_2=t$, $[l,u]\subseteq\mathbb{D}_{\bot}$ is the range of output value $o$ after adding bounded discrete Laplace noise.
    Denote $N_1$ and $N_2$ as the random noise sampling from bounded discrete Laplace noise $Lap_{\mathbb{Z}}(\frac{1}{\epsilon})$, where $N_1\in[l-v_1,u-v_1]$ and $N_2\in[l-v_2,u-v_2]$.
    We prove that
    \[\frac{Pr[v_1+N_1=o]}{Pr[v_2+N_2=o]}=\frac{Pr[N_1=o-v_1]}{Pr[N_2=o-v_2]}\le e^{2t\epsilon}\]
    When $v_1<v_2<l<u$, it has $l-v_1>0$, $u-v_1>0$, $l-v_2>0$, and $u-v_2>0$.
    Then we have the ratio of $Pr[N_1=o-v_1]$ and $Pr[N_2=o-v_2]$ is
    \begin{align*}
    \frac{Pr[N_2=o-v_2]}{Pr[N_1=o-v_1]}&=\frac{e^{-(l-v_1)\epsilon}(1-e^{-(u-l+1)\epsilon})}{e^{-(l-v_2)\epsilon}(1-e^{-(u-l+1)\epsilon})}\cdot\frac{e^{-|o-v_2|\epsilon}}{e^{-|o-v_1|\epsilon}}\\
    &=e^{((o-v_1)-(o-v_2)\epsilon}\cdot e^{((l-v_2)-(l-v_1))\epsilon}\\
    &=e^{(v_2-v_1)\epsilon}\cdot e^{(v_1-v_2)\epsilon}=1
    \end{align*}
    When $l<u<v_1<v_2$, it has $l-v_1<0$, $u-v_1<0$, $l-v_2<0$, and $u-v_2<0$.
    Then we have
    \begin{align*}
    \frac{Pr[N_1=o-v_1]}{Pr[N_2=o-v_2]}&=\frac{e^{(u-v_2)\epsilon}(1-e^{-(u-l+1)\epsilon})}{e^{(u-v_1)\epsilon}(1-e^{-(u-l+1)\epsilon})}\cdot\frac{e^{-|o-v_1|\epsilon}}{e^{-|o-v_2|\epsilon}}\\
    &=e^{((u-v_2)-(u-v_1))\epsilon}\cdot e^{(-(o-v_2)+(o-v_1))\epsilon}\\
    &=e^{(v_1-v_2)\epsilon}\cdot e^{(v_2-v_1)\epsilon}=1
    \end{align*}
    When $l<v_1<v_2<u$, it has $l-v_1<0$, $u-l>0$, $l-v_2<0$, and $u-v_2>0$.
    Then we have
    \begin{align*}
    &\frac{Pr[N_1=o-v_1]}{Pr[N_2=o-v_2]}\\
    &=\frac{1-e^{-(-(l-v_2)+1)\epsilon}-e^{-(u-v_2+1)\epsilon}+e^{-\epsilon}}{1-e^{-(-(l-v_1)+1)\epsilon}-e^{-(u-v_1+1)\epsilon}+e^{-\epsilon}}\cdot\frac{e^{-|o-v_1|\epsilon}}{e^{-|o-v_2|\epsilon}}\\
    &\le e^{t\epsilon}\cdot e^{t\epsilon}\cdot\frac{e^{-t\epsilon}-e^{-(-(l-v_2)+1+t)\epsilon}-e^{-(u-v_2+1+t)\epsilon}+e^{-(1+t)\epsilon}}{1-e^{-(-(l-v_1)+1)\epsilon}-e^{-(u-v_1+1)\epsilon}+e^{-\epsilon}}\\
    &\le e^{2t\epsilon}
    \end{align*}
    When $v_1<l<v_2<u$, it has $l-v_1>0$, $u-v_1>0$, $l-v_2<0$, and $u-v_2>0$.
    Then we have
    \begin{align*}
    &\frac{Pr[N_2=o-v_2]}{Pr[N_1=o-v_1]}\\
    &=\frac{e^{-|o-v_2|\epsilon}}{e^{-|o-v_1|\epsilon}}\cdot\frac{e^{-(l-v_1)\epsilon}(1-e^{-(u-l+1)\epsilon})}{1-e^{-(-(l-v_2)+1)\epsilon}-e^{-(u-v_2+1)\epsilon}+e^{-\epsilon}}\\
    &=e^{((o-v_1)-|o-v_2|)\epsilon}\cdot \frac{e^{-(l-v_1)\epsilon}(1-e^{-(u-l+1)\epsilon})}{1-e^{((l-v_2)-1)\epsilon}-e^{-(u-v_2+1)\epsilon}+e^{-\epsilon}}\\
    &\le e^{t\epsilon}\cdot\frac{e^{-(l-v_1)\epsilon}}{e^{(l-v_2)\epsilon}}\cdot\frac{1-e^{-(u-l+1)\epsilon}}{e^{-(l-v_2)\epsilon}-e^{-\epsilon}-e^{(2v_2-(u+l)-1)\epsilon}+e^{(v_2-l-1)\epsilon}}\\
    &=e^{t\epsilon}\cdot e^{(v_1+v_2-2l)\epsilon}\cdot\frac{1-e^{-(u-l+1)\epsilon}}{e^{-(l-v_2)\epsilon}-e^{-\epsilon}-e^{2v_2-(u+l)-1}+e^{(v_2-l-1)\epsilon}}\\
    &\le e^{2t\epsilon}\cdot\frac{1-e^{-(u-l+1)\epsilon}}{e^{-(l-v_2)\epsilon}-e^{-\epsilon}-e^{(2v_2-(u+l)-1)\epsilon}+e^{(v_2-l-1)\epsilon}}\\
    &\le e^{2t\epsilon}\cdot\frac{1-e^{-(u-l+1)\epsilon}}{1-e^{-(u-l+1)\epsilon}}=e^{2t\epsilon}
    \end{align*}
    When $l<v_1<u<v_2$, it has $l-v_1<0$, $u-v_1>0$, $l-v_2<0$, and $u-v_2<0$.
    Then we have
    \begin{align*}
    &\frac{Pr[N_1=o-v_1]}{Pr[N_2=o-v_2]} \\
    &=\frac{e^{-|o-v_1|\epsilon}}{e^{-|o-v_2|\epsilon}}\cdot\frac{e^{(u-v_2)\epsilon}(1-e^{-(u-l+1)\epsilon})}{1-e^{-(-(l-v_1)+1)\epsilon}-e^{-(u-v_1+1)\epsilon}+e^{-\epsilon}}\\
    &=e^{((v_2-o)-|o-v_1|)\epsilon}\cdot \frac{e^{(u-v_2)\epsilon}(1-e^{-(u-l+1)\epsilon})}{1-e^{((l-v_1)-1)\epsilon}-e^{-(u-v_1+1)\epsilon}+e^{-\epsilon}}\\
    &\le e^{t\epsilon}\cdot\frac{e^{(u-v_2)\epsilon}}{e^{-(u-v_1)\epsilon}}\cdot\frac{1-e^{-(u-l+1)\epsilon}}{e^{(u-v_1)\epsilon}-e^{(u+l-2v_1-1)\epsilon}-e^{-\epsilon}+e^{(u-v_1-1)\epsilon}}\\
    &=e^{t\epsilon}\cdot e^{(2u-v_1-v_2)\epsilon}\cdot\frac{1-e^{-(u-l+1)\epsilon}}{e^{(u-v_1)\epsilon}-e^{(u+l-2v_1-1)\epsilon}-e^{-\epsilon}+e^{(u-v_1-1)\epsilon}}\\ 
    &\le e^{2t\epsilon}\cdot\frac{1-e^{-(u-l+1)\epsilon}}{e^{(u-v_1)\epsilon}-e^{(u+l-2v_1-1)\epsilon}-e^{-\epsilon}+e^{(u-v_1-1)\epsilon}}\\
    &\le e^{2t\epsilon}\cdot\frac{1-e^{-(u-l+1)\epsilon}}{1-e^{-(u-l+1)\epsilon}}=e^{2t\epsilon}
    \end{align*}
    In summary, for any output range $[l,u]\subseteq\mathbb{D}_{\bot}$, the ratio of $Pr[N_1=o-v_1]$ and $Pr[N_2=o-v_2]$ satisfies
    \[\frac{Pr[v_1+N_1=o]}{Pr[v_2+N_2=o]}=\frac{Pr[N_1=o-v_1]}{Pr[N_2=o-v_2]}\le e^{2t\epsilon}\]
\end{proof}

\subsection{Proof of Lemma \ref{lem:bounded-laplace}}
\label{app:distribution_bdldp}
\begin{proof}
  According to the definition of discrete Laplace in Definition \ref{def:discrete-laplace}, we have
  \begin{align*}
      &\sum_{z\in\mathbb{Z}}\frac{e^{1/\lambda}-1}{e^{1/\lambda}+1}\cdot e^{-|z|/\lambda}=\sum_{z\in[l,u]}\tau\cdot\frac{e^{1/\lambda}-1}{e^{1/\lambda}+1}\cdot e^{-|z|/\lambda}=1\\
      \Rightarrow &\tau=\frac{\sum_{z\in\mathbb{Z}}e^{-|z|/\lambda}}{\sum_{z\in[l,u]}e^{-|z|/\lambda}}.
  \end{align*}
  Here, we have
  \begin{align*}
      &\sum_{z\in\mathbb{Z}}e^{-|z|/\lambda}=\frac{2}{1-e^{-1/\lambda}}\cdot\lim_{n\rightarrow\infty}1-e^{-n/\lambda}\simeq\frac{2}{1-e^{-1/\lambda}}.\\
      &\sum_{z\in[l,u]}e^{-|z|/\lambda}=
      \begin{cases}
        \frac{e^{u/\lambda}(1-e^{-(u-l+1)/\lambda})}{1-e^{-1/\lambda}}, &l<u<0,\\
        \frac{1-e^{(-l+1)/\lambda}}{1-e^{-1/\lambda}}+\frac{e^{-1/\lambda}(1-e^{-u/\lambda})}{1-e^{-1/\lambda}}, &l<0<u,\\
        \frac{e^{-l/\lambda}(1-e^{-(u-l+1)/\lambda})}{1-e^{-1/\lambda}}, &0<l<u.\\
      \end{cases}
  \end{align*}
  Finally, we can calculate $\tau$ to get the distribution.
\end{proof}
\revision{
\section{Proofs of Composition Theorem of dLDP and partition-dLDP}
\label{app:composition}
\begin{theorem}
  Let $M_i$ be an $\epsilon_i$-dLDP mechanism for $i\in[k]$.
  Then $M_{[k]}(x)=(M_1(x), ... , M_k(x))$ satisfies $(\sum_{i=1}^k \epsilon_i)-dLDP$.
\end{theorem}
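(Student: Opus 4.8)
The plan is to run the standard sequential composition argument for differential privacy, adapted to the distance-based relaxation. The single structural fact I would rely on is that the $k$ mechanisms are applied with independent internal randomness, so that for any fixed input the joint output distribution of $M_{[k]}(x)=(M_1(x),\dots,M_k(x))$ factorizes as a product of the marginal output distributions. Concretely, I would first fix an arbitrary pair of inputs $x,x'\in\mathbb{D}$ with $|x-x'|\le t$ and an arbitrary output tuple $y=(y_1,\dots,y_k)\in\prod_{i=1}^k \mathrm{Range}(M_i)$, and write
\[
\Pr[M_{[k]}(x)=y]=\prod_{i=1}^k \Pr[M_i(x)=y_i],\qquad
\Pr[M_{[k]}(x')=y]=\prod_{i=1}^k \Pr[M_i(x')=y_i].
\]

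Second, I would form the ratio of these two joint probabilities and bound it factor by factor. Because each $M_i$ is $\epsilon_i$-dLDP and $|x-x'|\le t$, Definition \ref{def:dlocal-differential-privacy} gives $\Pr[M_i(x)=y_i]\le e^{t\epsilon_i}\Pr[M_i(x')=y_i]$ for every $i$. Multiplying these $k$ inequalities telescopes into the desired bound:
\[
\frac{\Pr[M_{[k]}(x)=y]}{\Pr[M_{[k]}(x')=y]}=\prod_{i=1}^k\frac{\Pr[M_i(x)=y_i]}{\Pr[M_i(x')=y_i]}\le\prod_{i=1}^k e^{t\epsilon_i}=e^{\,t\sum_{i=1}^k \epsilon_i}.
\]
Since $x,x',y$ were arbitrary subject only to $|x-x'|\le t$, this is exactly the statement that $M_{[k]}$ satisfies $(\sum_{i=1}^k\epsilon_i)$-dLDP.

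The argument is short and the main thing to be careful about is justifying the product form rather than any heavy calculation. I would state the independence of the mechanisms' randomness explicitly, since the factorization (and hence the whole proof) fails if the $M_i$ share coins. A minor technical point is the treatment of zero-probability outputs: the multiplicative dLDP bound forces $\Pr[M_i(x')=y_i]=0\Rightarrow\Pr[M_i(x)=y_i]=0$, so each ratio is either well defined or involves an event of probability zero on both sides, and the inequality holds in the limiting sense. Finally, I would note that the partition-dLDP analogue composes in exactly the same way: replacing each per-mechanism factor $e^{t\epsilon_i}$ by $e^{\lceil t/\theta\rceil\epsilon_{prt,i}+\theta\epsilon_{ner,i}}$ from Definition \ref{def:partition-dlocal-differential-privacy} and taking the product yields an additive accumulation of both $\epsilon_{prt}$ and $\epsilon_{ner}$ budgets.
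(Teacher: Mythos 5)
Your proposal is correct and follows essentially the same argument as the paper's proof: fix a pair of inputs at distance at most $t$ and an output tuple, factorize the joint probability as a product over the $k$ mechanisms, bound each factor by $e^{t\epsilon_i}$ using the per-mechanism dLDP guarantee, and multiply. Your explicit remarks about the independence of the mechanisms' randomness and the handling of zero-probability outputs are careful additions the paper leaves implicit, but they do not change the route of the proof.
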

\begin{proof}
  Let $x,\ y$ are two values with $|x-y|\le t$.
  Fix any $(r_1, ... , r_k)$ from output domain, then we have:
  \begin{align*}
    \frac{Pr[M_{[k](x)=(r_1,...,r_k)}]}{Pr[M_{[k](y)=(r_1,...,r_k)}]}&=\frac{Pr[M_1(x)=r_1]\cdot...\cdot Pr[M_k(x)=r_k]}{Pr[M_1(y)=r_1]\cdot...\cdot Pr[M_k(y)=r_k]}\\
    &=(\frac{Pr[M_1(x)=r_1]}{Pr[M_1(y)=r_1]})\cdot...\cdot(\frac{Pr[M_k(x)=r_k]}{Pr[M_k(x)=r_k]})\\
    &\le \prod_{i=1}^k e^{t\cdot\epsilon_i}=e^{t\cdot\sum_{i=1}^k\epsilon_i}
  \end{align*}
\end{proof}
\begin{theorem}
  Let $M_i$ be an $(\epsilon_{prt}^i, \epsilon_{ner}^i)$-partition-dLDP mechanism for $i\in[k]$.
  Then $M_{[k]}(x)=(M_1(x), ... , M_k(x))$ satisfies $(\sum_{i=1}^k\epsilon_{prt}^i, \sum_{i=1}^k\epsilon_{ner}^i)$-partition-dLDP.
\end{theorem}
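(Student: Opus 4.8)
The plan is to mirror exactly the structure of the preceding dLDP composition proof, since partition-dLDP only changes the form of the per-mechanism multiplicative bound, not the compositional reasoning itself. First I would fix two inputs $x, y \in \mathbb{D}_{\bot}$ with $|x-y|\le t$ and fix an arbitrary output tuple $(r_1,\dots,r_k)$ in the joint range of $M_{[k]}$. The crucial structural fact I would invoke is that the mechanisms $M_1,\dots,M_k$ are applied independently, so the joint output probability factors as a product of the individual output probabilities; this lets me rewrite the joint probability ratio as $\frac{\Pr[M_{[k]}(x)=(r_1,\dots,r_k)]}{\Pr[M_{[k]}(y)=(r_1,\dots,r_k)]}=\prod_{i=1}^k \frac{\Pr[M_i(x)=r_i]}{\Pr[M_i(y)=r_i]}$.

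Next I would apply the $(\epsilon_{prt}^i,\epsilon_{ner}^i)$-partition-dLDP guarantee to each factor. By Definition \ref{def:partition-dlocal-differential-privacy}, each ratio is bounded by $e^{\lceil t/\theta\rceil\epsilon_{prt}^i+\theta\epsilon_{ner}^i}$, where I assume that all $k$ mechanisms use the common partition length $\theta$ (otherwise the partition-distance factor $\lceil t/\theta\rceil$ would not align across the product). Multiplying these $k$ bounds collapses the exponents additively into $e^{\lceil t/\theta\rceil\sum_{i=1}^k\epsilon_{prt}^i+\theta\sum_{i=1}^k\epsilon_{ner}^i}$, which is precisely the bound required for $(\sum_{i=1}^k\epsilon_{prt}^i,\sum_{i=1}^k\epsilon_{ner}^i)$-partition-dLDP.

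The argument is a direct analog of the dLDP composition result immediately above it, so I do not anticipate any genuine obstacle. The only points requiring care are stating the independence of the mechanisms explicitly, which is what licenses the probability factorization, and ensuring the shared partition length $\theta$ is consistent across all $M_i$ so that the between-partition term $\lceil t/\theta\rceil\epsilon_{prt}$ and the within-partition term $\theta\epsilon_{ner}$ combine cleanly. Since both the $\lceil t/\theta\rceil$ coefficient and the $\theta$ coefficient are identical in every factor, they pull out of the sum without any rounding or cross-term complications, so the final exponent separates exactly into the two additive privacy parameters claimed, completing the proof.
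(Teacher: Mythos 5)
Your proof is correct and follows essentially the same route as the paper's own proof: factor the joint output probability into per-mechanism ratios, bound each ratio by $e^{\lceil t/\theta\rceil\epsilon_{prt}^i+\theta\epsilon_{ner}^i}$ via Definition \ref{def:partition-dlocal-differential-privacy}, and multiply so the exponents add. Your explicit remarks about mechanism independence and the shared partition length $\theta$ are assumptions the paper makes implicitly, so they sharpen rather than change the argument.
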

\begin{proof}
  Let $x,\ y$ are two values with $|x-y|\le t$.
  The partition to which $x$ belongs is $\mathcal{P}_i$ and $x'$ belongs to partition $\mathcal{P}_j$, where $\mathcal{P}_i$ and $\mathcal{P}_j$ are at most $\lceil\frac{t}{\theta}\rceil$ partitions apart.
  Fix any $(r_1, ... , r_k)$ from output domain, then we have:
  \begin{align*}
    \frac{Pr[M_{[k](x)=(r_1,...,r_k)}]}{Pr[M_{[k](y)=(r_1,...,r_k)}]}&=\frac{Pr[M_1(x)=r_1]\cdot...\cdot Pr[M_k(x)=r_k]}{Pr[M_1(y)=r_1]\cdot...\cdot Pr[M_k(y)=r_k]}\\
    &=(\frac{Pr[M_1(x)=r_1]}{Pr[M_1(y)=r_1]})\cdot...\cdot(\frac{Pr[M_k(x)=r_k]}{Pr[M_k(x)=r_k]})\\
    &\le \prod_{i=1}^k e^{\lceil\frac{t}{\theta}\rceil\epsilon_{prt}^i+\theta\epsilon_{ner}^i}=e^{\lceil\frac{t}{\theta}\rceil\sum_{i=1}^k\epsilon_{prt}^i+\theta\sum_{i=1}^k\epsilon_{ner}^i}
  \end{align*}
\end{proof}
}
\section{EXTENDED RELATED WORKS}
\label{app:related}
\noindent \textbf{Distance-based LDP (dLDP).} The traditional DP mechanism always considers the worst case, which leads to adding excessive noise for normal cases.
Kifer et al. \cite{kifer2012rigorous} propose a semantic framework called "Pufferfish", which can generate customized privacy definitions in different scenarios. 
Inspired by the Pufferfish, He et al. \cite{he2014blowfish} initiate a policy to specify the concept of secrets and constraints, and formally introduce the definition of dDP.
Geng et al. \cite{geng2015staircase} propose staircase mechanism to guarantee diverse levels of differential privacy for different instances.
Nevertheless, the staircase mechanism fails to provide more sophisticated probability distribution within one partition.
Because LDP is not as dependent on trusted servers as DP, LDP mechanisms are more prevalent in practical applications.
\ifshaphered
\emph{(Reviewer3:D3)}
\else
\fi
\revision{The formal dLDP definition is first proposed and applied in Location-Based Systems to guarantee location privacy within a specific distance \cite{andres2013geo, xiao2015protecting}. }
Following the intuition of $d_\chi$-privacy in \cite{chatzikokolakis2013broadening}, Alvim et al. \cite{alvim2018local} define Metric-LDP, a variant of dLDP.
Afterward, dLDP shows its broad applicability in vast scenarios \cite{xiang2020linear, gursoy2019secure, chatzikokolakis2017efficient, shokri2014privacy, wang2017local, borgs2018revealing}.
However, the potential capabilities of dLDP in ordinal information preserving remain undiscussed.

\noindent \textbf{Privacy-Preserving Tree Boosting on Vertical FL.} 
Traditional tree boosting algorithms have drawn public privacy concerns for their direct access to raw datasets, which leads to the emergence of privacy-preserving tree boosting.
In SecureBoost proposed by Cheng et al. \cite{cheng2021secureboost}, the user holding labels send gradients and hessians encrypted with HE to other users for sorting.
Fu et al. \cite{fu2021vf2boost} proposed $VF^2$Boost to optimize SecureBoost from the perspective of engineering implementation.
However, the training process is still extremely time-consuming since a lot of cryptographic operations are irreducible.
A scheme based on Multi-Party Computation (MPC) is proposed by Abspoel et al. \cite{abspoel2021secure}, in which only the split points of the features' values are revealed in the whole training procedure.
Wu et al. \cite{wu2020privacy} design another MPC-based scheme that guarantees high security for users' records.
Although these schemes avoid complex cryptographic operations, the massive communication overhead caused by MPC is unbearable.
To solve this problem, Tian et al. \cite{tian2020federboost} propose a scheme based on Local Differential Privacy (LDP), called FederBoost.
Since the randomness introduced by LDP, the accuracy of the trained model is not satisfying.

\section{Supplementary Experiment}
\subsection{Computation overhead compared with OPE}
\label{app:ope_compare}
\ifshaphered
\emph{(Reviewer1:D2.c)}
\else
\fi
\revision{
To the best of our knowledge, there is no federated tree boosting scheme that uses Order-Preserving Encryption (OPE).
However, it can be implemented by replacing the desensitization algorithms in our framework with an OPE scheme.
In this experiment, we employ pyope 0.2.2 library for Boldyreva symmetric OPE scheme.
The only difference between the two frameworks is the data desensitization algorithms used while the rest is the same, so we only need to compare the computation time of the desensitization algorithms.
We randomly generate $100,000$ uniformly distributed data in the range $[1, 100]$, process them using our algorithms and OPE respectively, and record the computation time.
This experiment is conducted on a PC with Intel(R) Core(TM) i7-9700 CPU @ 3.00GHz and 32GB memory.
As shown in Table \ref{tab-computation-ope}, our desensitization algorithms are 40-200 times faster than OPE.
}
\begin{table}[h]
  \small
  \centering
  \begin{tabular}{|c|c|c|}
      \hline
      \diagbox{Method}{$\epsilon$}&\shortstack{$\epsilon=0.08$}&\shortstack{$\epsilon=1.28$}\\
      \hline
      OPE&\multicolumn{2}{c|}{241.9227s}\\
      \hline
      GLobal-map&{1.3874s}&{1.2764s}\\
      \hline
      Local-map$(\theta=4)$&{3.0140s}&{3.0189s}\\
      \hline
      Local-map$(\theta=10)$&{5.8893s}&{5.7287s}\\
      \hline
      Adj-map$(\theta=4, \alpha=0.4)$&{4.1220s}&{3.0685s}\\
      \hline
      Adj-map$(\theta=4, \alpha=1)$&{4.1060s}&{2.9861s}\\
      \hline
      Adj-map$(\theta=4, \alpha=10)$&{4.0763s}&{3.0060s}\\
      \hline
      Adj-map$(\theta=10, \alpha=0.4)$&{5.6929s}&{3.4936s}\\
      \hline
      Adj-map$(\theta=10, \alpha=1)$&{5.5541s}&{3.5944s}\\
      \hline
      Adj-map$(\theta=10, \alpha=10)$&{5.5308s}&{3.4848s}\\
      \hline
  \end{tabular}
  \setlength{\abovecaptionskip}{0ex}
  \setlength{\belowcaptionskip}{0ex}
  \caption{The computaion time of desensitizing $100,000$ values which follow uniform distribution within $[1, 100]$.}
  \label{tab-computation-ope}
\end{table}
\subsection{Theoretical Order-Preserving Probability}
\label{app:_theo_OPP}

We compare the theoretical order-preserving probability of proposed algorithms.
Although the lower bound of order-preserving probabilities $\gamma$ are formally deduced in Section \ref{sec:algorithms}, the formulas are complicated and challenging to interpret directly.
We visualize both the exhaustively accumulative probability and the derived lower bound of the probability for intuitive comparison.

\begin{table*}[t]
  \small
  \centering
  \begin{tabular}{|c|c|c|c|c|c|c|c|c|c|c|}
      \hline
      \diagbox{Method}{$dist$}&\shortstack{$dist=0$\\(t=5)}&\shortstack{$dist=1$\\(t=15)}&\shortstack{$dist=2$\\(t=25)}&\shortstack{$dist=3$\\(t=35)}&\shortstack{$dist=4$\\(t=45)}&\shortstack{$dist=5$\\(t=55)}&\shortstack{$dist=6$\\(t=65)}&\shortstack{$dist=7$\\(t=75)}&\shortstack{$dist=8$\\(t=85)}&\shortstack{$dist=9$\\(t=95)}\\
      \hline
      GRR$(\epsilon)$&{0.4950}&{0.4951}&{0.4953}&{0.4954}&{0.4955}&{0.4956}&{0.4957}&{0.4958}&{0.4959}&{0.4960}\\
      \hline
      GRR$(\theta\epsilon)$&{0.4958}&{0.4975}&{0.4991}&{0.5008}&{0.5025}&{0.5041}&{0.5058}&{0.5074}&{0.5091}&{0.5108}\\
      \hline
      \hline
      Local-map&{0.5024}&{1.0000}&{1.0000}&{1.0000}&{1.0000}&{1.0000}&{1.0000}&{1.0000}&{1.0000}&{1.0000}\\
      \hline
      Adj-map$(\alpha=0.2)$&{0.4312}&{0.5036}&{0.5829}&{0.6565}&{0.7210}&{0.7757}&{0.8213}&{0.8586}&{0.8890}&{0.9133}\\
      \hline
      Adj-map$(\alpha=0.5)$&{0.4185}&{0.5198}&{0.6206}&{0.7071}&{0.7776}&{0.8332}&{0.8763}&{0.9092}&{0.9339}&{0.9522}\\
      \hline
      Adj-map$(\alpha=1)$&{0.4133}&{0.5282}&{0.6378}&{0.7287}&{0.8004}&{0.8551}&{0.8962}&{0.9263}&{0.9482}&{0.9639}\\
      \hline
      Adj-map$(\alpha=2)$&{0.4105}&{0.5332}&{0.6476}&{0.7406}&{0.8126}&{0.8666}&{0.9062}&{0.9348}&{0.9551}&{0.9693}\\
      \hline
      Adj-map$(\alpha=5)$&{0.4087}&{0.5365}&{0.6539}&{0.7482}&{0.8201}&{0.8735}&{0.9122}&{0.9398}&{0.9591}&{0.9724}\\
      \hline
  \end{tabular}
  \setlength{\abovecaptionskip}{-2ex}
  \setlength{\belowcaptionskip}{0ex}
  \caption{The theoretical lower bound $\gamma$ of order-preserving probability for any pair of data points $x_1$ and $x_2$, where $x_1\in \mathcal{P}_i$, $x_2\in \mathcal{P}_j$, $dist=j-i$, $t=|x_1-x_2|$, $|\mathbb{D}_{\bot}|=100$, $\theta=10$, $\epsilon=0.1$.}
  \label{tab:op-prob-t}
\end{table*}

\noindent\textbf{ Order-Preserving Probability Comparisons.}
We traverse all the possible perturbation results that maintain the original order of a pair of values to calculate the exact order-preserving probability for comparison.
To show the gap between the LDP algorithms and the distance-based LDP algorithms in order-preserving capability, we take GRR, which typically satisfies LDP definition, as the object of comparison.
Since GRR satisfies the LDP definition, it's difficult to compare fairly it with our dLDP algorithms under the same $\epsilon$.
Hence we set the privacy budget of GRR as $\epsilon$ and $\theta \epsilon$ respectively.
\begin{figure}[h]
  \centering
  \includegraphics[width=0.35\textwidth]{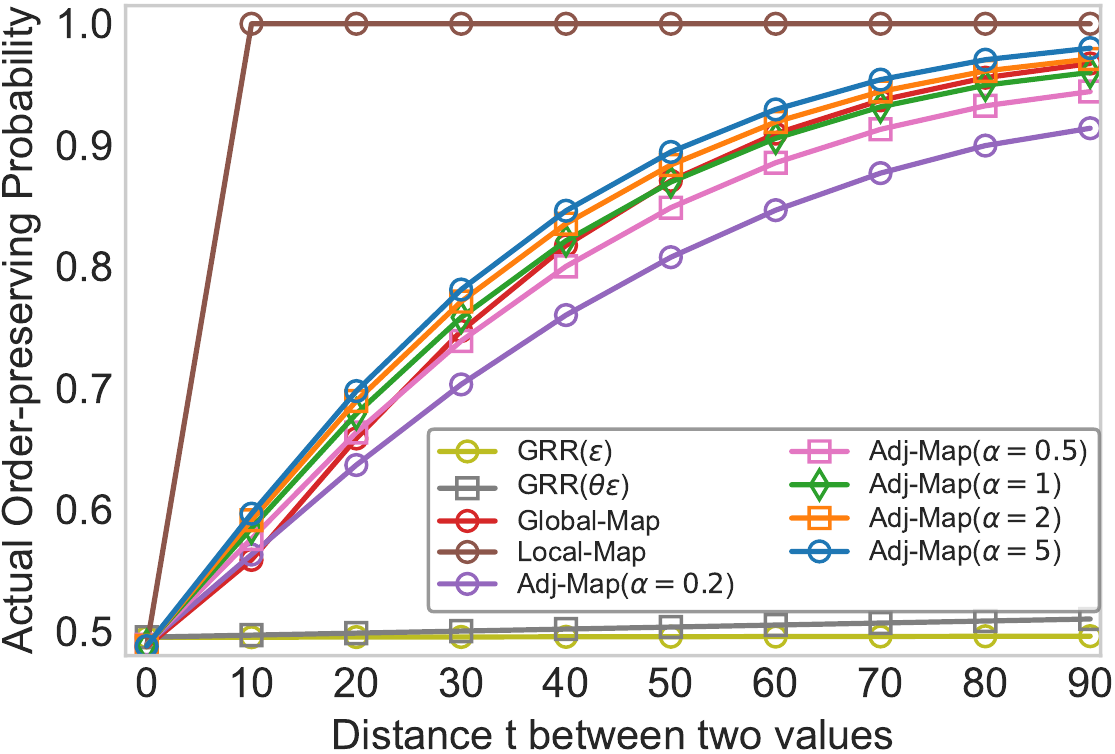}
  \vspace{-2ex}
  \setlength{\belowcaptionskip}{-2ex}
  \caption{Comparison of actual order-preserving probability of algorithms, where $\epsilon=0.1, \theta = 10, |\mathbb{D}_\bot|=100$.}
  \label{fig:order preserving probability}
\end{figure}

\ifshaphered
\emph{(Reviewer2:D3)}
\else
\fi
\revision{
As shown in Figure \ref{fig:order preserving probability}, GRR provides order-preserving probabilities $\gamma$ close to $0.5$ for all value pairs, which is close to randomly shuffling the order of values.
The algorithms proposed in this paper provide the utility with order-preserving probabilities between $0.5$ and $1$.
They make a trade-off between the utility and the privacy of the order-preserving encryption and LDP mechanisms.
}
Among the algorithms proposed in this paper, Local-map gives the highest $\gamma$ because the desensitized partition is deterministic.
When $t\ge\theta$, which means two values fall in different partitions, Local-map guarantees that the order-preserving probability of these two values is 1.
For Adj-map, $\alpha$ is the ratio of $\epsilon_{prt}$ and $\theta\cdot \epsilon_{ner}$.
It's consistent with the analysis in section \ref{sec:algorithms} that the smaller $\alpha$ is, the higher $\gamma$ is.
Furthermore, the lines of Global-map and Adj-map almost coincide when $\alpha=1$, which echoes the result in Figure \ref{fig:distribution}.

\noindent\textbf{ Theoretical Analysis Verification.}
Although we can exhaustively aggregate the accurate order-preserving probability by traversal of all possible output, it's extremely time-consuming when data domain $|\mathbb{D}_{\bot}|$ is large.
To simplify calculations, We deduced the theoretical $\gamma$ in Section \ref{sec:algorithms} and tabulate the calculation results as Table \ref{tab:op-prob-t}.
Note that $\gamma$ may be less than $0.5$.
The reason is that in Definition \ref{def:probabilistic-order-preserving}, $Pr[y_i>y_j] \ge Pr[y_i<y_j] \nRightarrow Pr[y_i>y_j] \ge 0.5$ since $Pr[y_i>y_j]+Pr[y_i<y_j] < 1$ when taking $Pr[y_i=y_j]$ into consideration.
Besides, there is scaling in the derivation of $\gamma$, so the result in Table \ref{tab:op-prob-t} may be less than the accurate values.
In general, the results in Table \ref{tab:op-prob-t} are essentially consist with the accurate aggregation in Figure \ref{fig:order preserving probability}. Thus we can efficiently make approximations based on theoretical derivation without exhaustive calculation.
\subsection{Weighted-Kendall on Synthetic Dataset}
\label{app:figure_kendall}
The results are shown in Figure \ref{fig:syn10_kendall}.

\subsection{The MSE of range query on Synthetic Dataset}
\label{app:figure_range_query}
The results are shown in Figure \ref{fig:rq_uniform}.
\begin{figure}[h]
  \centering
  \includegraphics[width=0.48\textwidth]{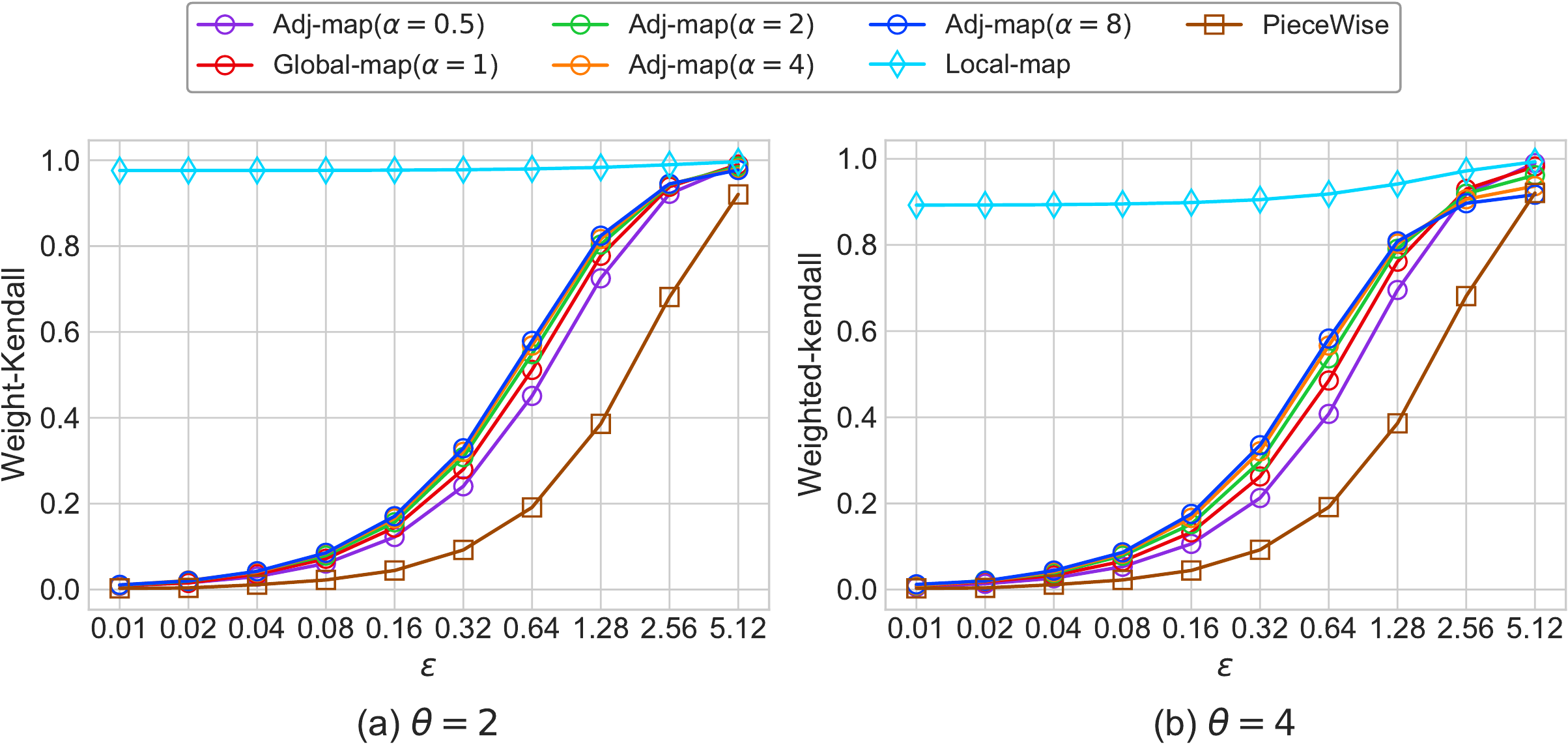}
  \vspace{-5ex}
  \setlength{\belowcaptionskip}{0ex}
  \caption{Weighted-Kendall on Synthetic dataset containing $10k$ uniformly distributed values.}
  \label{fig:syn10_kendall}
\end{figure}
\begin{figure}[h]
  \centering
  \includegraphics[width=0.48\textwidth]{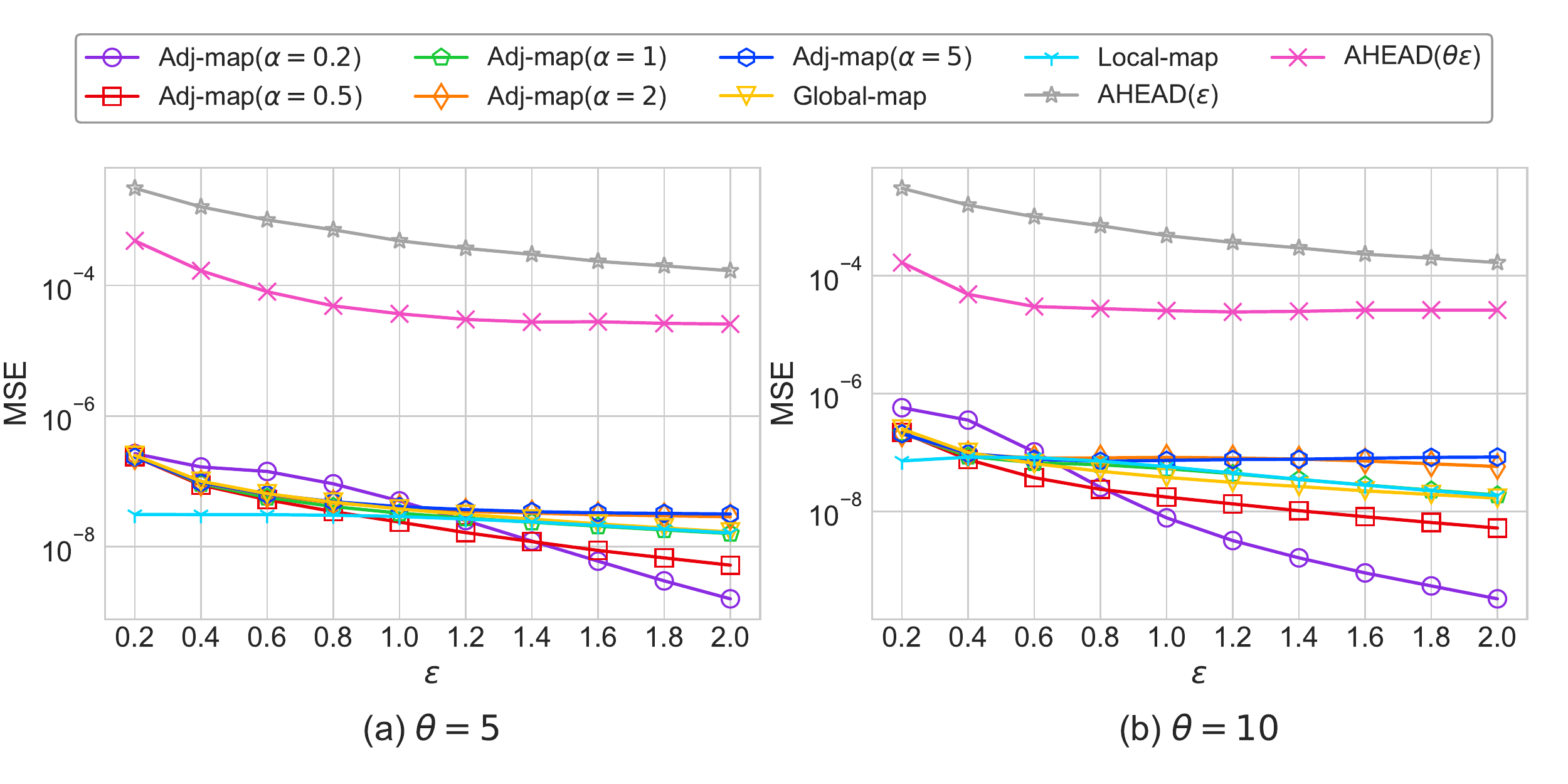}
  \vspace{-5ex}
  \setlength{\belowcaptionskip}{0ex}
  \caption{The MSE of Range Query on Synthetic Dataset.}
  \label{fig:rq_uniform}
\end{figure}
\subsection{Prediction MSE of GBDT Models for Regression Trained on CASP Dataset}
\label{app:figure_casp}
The results are shown in Figure \ref{fig:casp_regress}.
\begin{figure}[h]
  \centering
  \includegraphics[width=0.48\textwidth]{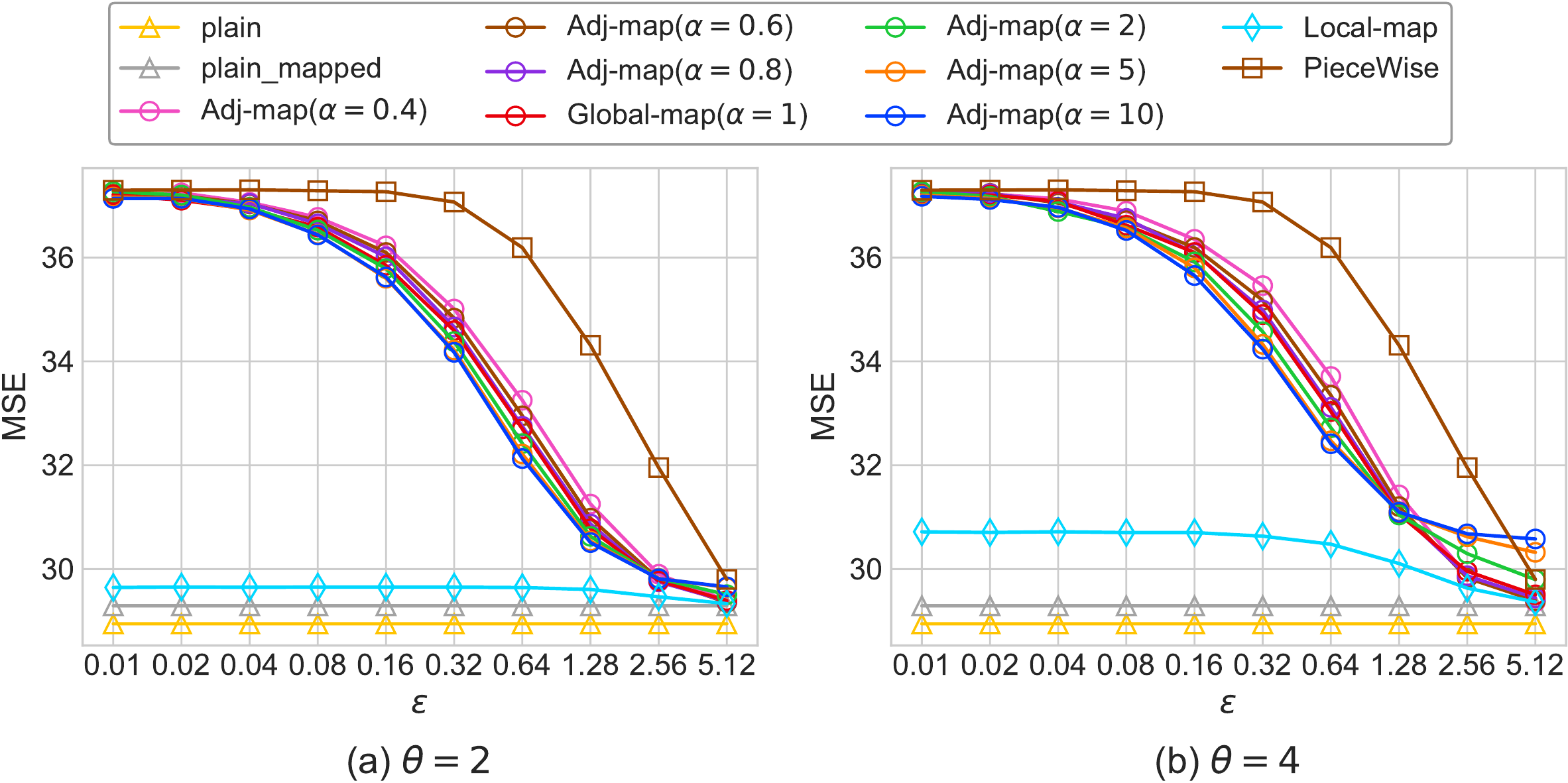}
  \vspace{-5ex}
  \setlength{\belowcaptionskip}{0ex}
  \caption{Prediction MSE of GBDT Models for Regression Trained on CASP Dataset.}
  \label{fig:casp_regress} 
\end{figure}
\subsection{Prediction accuracy of XGBoost models trained by OpBoost}
\label{app:xgboost_result}
The results are shown in Table \ref{tab:opxgboost}.

\begin{table*}[h]
  \small
  \centering
  \begin{tabular}{|c|c|c|c|c|c|c|c|c|c|c|c|c|}
      \hline
      \diagbox{Method}{Task}&\multicolumn{3}{c|}{$\epsilon=0.08,\ \theta=2$}&\multicolumn{3}{c|}{$\epsilon=0.08,\ \theta=4$}&\multicolumn{3}{c|}{$\epsilon=1.28,\ \theta=2$}&\multicolumn{3}{c|}{$\epsilon=1.28,\ \theta=4$}\\
      \cline{2-13}
      &\shortstack{2-Cls.}&\shortstack{M-Cls.}&\shortstack{Reg.}&\shortstack{2-Cls.}&\shortstack{M-Cls.}&\shortstack{Reg.}&\shortstack{2-Cls.}&\shortstack{M-Cls.}&\shortstack{Reg.}&\shortstack{2-Cls.}&\shortstack{M-Cls.}&\shortstack{Reg.}\\
      \hline
      Local-map&{$0.9947\times$}&{$0.9930\times$}&{$0.9208\times$}&{$0.9566\times$}&{$0.9670\times$}&{$0.8187\times$}&{$1.0003\times$}&{$0.9958\times$}&{$0.9361\times$}&{$0.9602\times$}&{$0.9812\times$}&{$0.8690\times$}\\
      \hline
      Adj-map($\alpha=0.4$)&{0.5413$\times$}&{$0.5297\times$}&{$0.6388\times$}&{0.6722$\times$}&{$0.4444\times$}&{$0.6348\times$}&{0.6483$\times$}&{$0.9279\times$}&{$0.8136\times$}&{$0.6433\times$}&{$0.9144\times$}&{$0.8039\times$}\\
      \hline
      Adj-map($\alpha=0.6$)&{0.6343$\times$}&{$0.5393\times$}&{$0.6391\times$}&{$0.7252\times$}&{$0.5266\times$}&{$0.6363\times$}&{0.6398$\times$}&{$0.9380\times$}&{$0.8229\times$}&{$0.6703\times$}&{$0.9249\times$}&{$0.8123\times$}\\
      \hline
      Adj-map($\alpha=0.8$)&{0.6689$\times$}&{$0.5753\times$}&{$0.6415\times$}&{0.6553$\times$}&{$0.4846\times$}&{$0.6389\times$}&{$0.6656\times$}&{$0.9359\times$}&{$0.8277\times$}&{$0.6404\times$}&{$0.9270\times$}&{$0.8140\times$}\\
      \hline
      Global-map($\alpha=1$)&{0.5935$\times$}&{$0.5630\times$}&{$0.6394\times$}&{0.6095$\times$}&{$0.5513\times$}&{$0.6380\times$}&{$0.6797\times$}&{$0.9384\times$}&{$0.8283\times$}&{$0.6300\times$}&{$0.9309\times$}&{$0.8141\times$}\\
      \hline
      Adj-map($\alpha=2$)&{0.7311$\times$}&{$0.5981\times$}&{$0.6401\times$}&{0.6788$\times$}&{$0.5507\times$}&{$0.6414\times$}&{$0.6735\times$}&{$0.9458\times$}&{$0.8321\times$}&{$0.6533\times$}&{$0.9324\times$}&{$0.8123\times$}\\
      \hline
      Adj-map($\alpha=5$)&{0.6751$\times$}&{$0.6177\times$}&{$0.6435\times$}&{0.5870$\times$}&{$0.6106\times$}&{$0.6394\times$}&{$0.7188\times$}&{$0.9478\times$}&{$0.8338\times$}&{$0.8921\times$}&{0.9400$\times$}&{$0.8065\times$}\\
      \hline
      Adj-map($\alpha=10$)&{0.6817$\times$}&{$0.6329\times$}&{$0.6449\times$}&{0.6213$\times$}&{$0.5815\times$}&{$0.6424\times$}&{$0.7733\times$}&{$0.9488\times$}&{$0.8350\times$}&{$0.7554\times$}&{$0.9372\times$}&{$0.8051\times$}\\
      \hline
      Piecewise&{$0.5821\times$}&{$0.1187\times$}&{$0.6249\times$}&{$0.5821\times$}&{$0.1187\times$}&{$0.6249\times$}&{$0.7487\times$}&{$0.7847\times$}&{$0.7014\times$}&{$0.7487\times$}&{0.7847$\times$}&{$0.7014\times$}\\
      \hline
  \end{tabular}
  \caption{Prediction accuracy of XGBoost models trained by OpBoost. We show the ratio of each accuracy to the accuracy of the model trained on the raw dataset. Three kinds of tasks are conducted on Adult, Pen-digits, and CASP datasets, respectively.}
  \label{tab:opxgboost}
\end{table*}

\subsection{Communication and Computation overhead of OpBoost}
\label{app:comun_comp}
The results are shown in Table \ref{tab:comun_perform} and Table \ref{tab:comp_perform}.

\begin{table*}[h]
  \small
  \centering
  \begin{tabular}{|c|c|c|c|c|c|c|c|c|c|c|c|c|}
      \hline
      \diagbox{Method}{Task}&\multicolumn{6}{c|}{Party A}&\multicolumn{6}{c|}{Party B}\\
      \cline{2-13}
      &\multicolumn{3}{c|}{GBDT}&\multicolumn{3}{c|}{XGBoost}&\multicolumn{3}{c|}{GBDT}&\multicolumn{3}{c|}{XGBoost}\\
      \cline{2-13}
      &\shortstack{2-Cls.}&\shortstack{M-Cls.}&\shortstack{Reg.}&\shortstack{2-Cls.}&\shortstack{M-Cls.}&\shortstack{Reg.}&\shortstack{2-Cls.}&\shortstack{M-Cls.}&\shortstack{Reg.}&\shortstack{2-Cls.}&\shortstack{M-Cls.}&\shortstack{Reg.}\\
      \hline
      Local-map&{$74$}&{$349$}&{$96$}&{$198$}&{$768$}&{$385$}&{$521849$}&{$481377$}&{$1318036$}&{$522097$}&{$482215$}&{$1318612$}\\
      \hline
      Global-map($\alpha=1$)&{$26$}&{$637$}&{$165$}&{$203$}&{$695$}&{$401$}&{$521753$}&{$481953$}&{$1318173$}&{$522107$}&{$482069$}&{$1318645$}\\
      \hline
      Adj-map($\alpha=0.4$)&{$26$}&{$740$}&{$178$}&{$224$}&{$820$}&{$459$}&{$521753$}&{$482159$}&{$1318199$}&{$522148$}&{$482321$}&{$1318760$}\\
      \hline
      Adj-map($\alpha=10$)&{$26$}&{$728$}&{$178$}&{$221$}&{$821$}&{$459$}&{$521753$}&{$482136$}&{$1318200$}&{$522143$}&{$482323$}&{$1318760$}\\
      \hline
      Piecewise&{$26$}&{$2313$}&{$253$}&{$500$}&{$12278$}&{$2318$}&{$521753$}&{$485299$}&{$1318348$}&{$522703$}&{$505220$}&{$1322469$}\\
      \hline
  \end{tabular}
  \caption{Total Communication (Bytes) of each Party in OpBoost by using different order-preserving desensitization algorithms with $\epsilon=0.08$, $\theta=4$. Three kinds of tasks are conducted on Adult, Pen-digits, and CASP datasets, respectively.}
  \label{tab:comun_perform}
\end{table*}

\begin{table*}[h]
  \small
  \centering
  \begin{tabular}{|c|c|c|c|c|c|c|c|c|c|c|c|c|}
      \hline
      \diagbox{Method}{Task}&\multicolumn{6}{c|}{Sampling with Bounded DLAP}&\multicolumn{6}{c|}{Sampling with EXP}\\
      \cline{2-13}
      &\multicolumn{3}{c|}{GBDT}&\multicolumn{3}{c|}{XGBoost}&\multicolumn{3}{c|}{GBDT}&\multicolumn{3}{c|}{XGBoost}\\
      \cline{2-13}
      &\shortstack{2-Cls.}&\shortstack{M-Cls.}&\shortstack{Reg.}&\shortstack{2-Cls.}&\shortstack{M-Cls.}&\shortstack{Reg.}&\shortstack{2-Cls.}&\shortstack{M-Cls.}&\shortstack{Reg.}&\shortstack{2-Cls.}&\shortstack{M-Cls.}&\shortstack{Reg.}\\
      \hline
      Local-map&{$1655.1$}&{$2926.5$}&{$2697.8$}&{$951.7$}&{$1089.7$}&{$2151.1$}&{$913.8$}&{$2284.8$}&{$826.1$}&{$218.1$}&{$424.6$}&{$325.7$}\\
      \hline
      Global-map($\alpha=1$)&{$1154.4$}&{$2484.6$}&{$1556.5$}&{$462.3$}&{$1587.8$}&{$1010.7$}&{$904.0$}&{$2253.5$}&{$838.7$}&{$214.1$}&{$428.4$}&{$316.8$}\\
      \hline
      Adj-map($\alpha=0.4$)&{$2187.3$}&{$3382.7$}&{$4070.2$}&{$1476.9$}&{$1587.8$}&{$3481.0$}&{$952.8$}&{$2248.6$}&{$968.2$}&{$268.4$}&{$477.7$}&{$448.2$}\\
      \hline
      Adj-map($\alpha=10$)&{$9135.1$}&{$10052.8$}&{$21504.3$}&{$8313.3$}&{$8054.4$}&{$20584.5$}&{$953.8$}&{$2287.7$}&{$975.7$}&{$272.5$}&{$481.7$}&{$447.8$}\\
      \hline
      Piecewise&{$958.5$}&{$2163.8$}&{$953.1$}&{$305.5$}&{$650.4$}&{$495.1$}&{$-$}&{$-$}&{$-$}&{$-$}&{$-$}&{$-$}\\
      \hline
  \end{tabular}
  \caption{Run time (ms) of the entire training process of OpBoost by using different order-preserving desensitization algorithms with $\epsilon=0.08$, $\theta=4$. Three kinds of tasks are conducted on Adult, Pen-digits, and CASP datasets, respectively.}
  \label{tab:comp_perform}
\end{table*}
\end{appendix}

\end{document}
\endinput